\documentclass{article}

\usepackage{microtype}
\usepackage{graphicx}
\usepackage{subcaption}
\usepackage{booktabs} 

\usepackage{thmtools}
\usepackage{thm-restate}
\usepackage{enumitem}

\usepackage{hyperref}
\usepackage{wrapfig}

\usepackage{algpseudocode}

\usepackage[accepted]{icml2026}

\usepackage{amsmath}
\usepackage{amssymb}
\usepackage{mathtools}
\usepackage{amsthm}
\usepackage{bbm}

\usepackage[capitalize,noabbrev]{cleveref}

\definecolor{darkgreen}{HTML}{008B00}
\usepackage{multirow}
\usepackage{siunitx}
\usepackage[table]{xcolor}

\theoremstyle{plain}
\newtheorem{theorem}{Theorem}[section]

\newtheorem{lemma}[theorem]{Lemma}

\theoremstyle{definition}
\newtheorem{definition}[theorem]{Definition}
\newtheorem{assumption}[theorem]{Assumption}
\theoremstyle{remark}

\usepackage[textsize=tiny]{todonotes}

\begin{document}

\twocolumn[
  \icmltitle{Formal Concept Lattices are Good Semantic Scaffolds for Concept-Based Learning}



  \icmlsetsymbol{equal}{*}

\begin{icmlauthorlist}
    \icmlauthor{Deepika SN Vemuri}{iith}
    \icmlauthor{Sayanta Adhikari$^{\dagger}$}{iith,amazon}
    \icmlauthor{Ankit Saha$^{\dagger \ddag}$}{iith}
    \icmlauthor{Krishn Vishwas Kher}{iith}
    \icmlauthor{Vineeth N Balasubramanian}{iith,msr}
    
  \end{icmlauthorlist}

  \icmlaffiliation{iith}{IIT Hyderabad}
  \icmlaffiliation{amazon}{Amazon, India}
  \icmlaffiliation{msr}{Microsoft Research}

  \icmlcorrespondingauthor{Deepika SN Vemuri}{ai22resch11001@iith.ac.in}
\icmlcorrespondingauthor{Vineeth N Balasubramanian}{vineethnb@cse.iith.ac.in, vineeth.nb@microsoft.com}

  \icmlkeywords{Concepts, Interpretability, Formal Concept Analysis, Lattice}

  \vskip 0.3in
]




\printAffiliationsAndNotice{
    $^{\dagger}$ Work started while the authors were students at IITH. $^{\ddag}$ Currently works at KLA.}

\begin{abstract}
Learning semantics is essential for deep learning models to be interpretable and better aligned with human reasoning. Concept-based models approach this by representing classes through meaningful semantic abstractions, but typically treat all concepts as a flat, unstructured set learned at a single neural network layer.
This overlooks a fundamental property of human semantic understanding: concepts being organized hierarchically, from general to specific.
While deep networks do learn a hierarchy of visual features, this structure is rarely aligned with explicit semantic hierarchies.
Drawing on Formal Concept Analysis, we demonstrate that formal concept lattices provide principled semantic scaffolds to guide neural network learning. These lattices naturally identify where in the network concepts should be learned based on their level of generality. This allows the model to develop staged, semantically grounded representations throughout its depth.
Empirical results on real-world datasets show that our models produce more interpretable embeddings, support more effective interventions, and learn concept representations that are both meaningful and hierarchically structured. Code available at: \small \url{https://github.com/deepikavemuri/FoCA-CBMs}.

\end{abstract}
\vspace{-8pt}
\section{Introduction}
\label{sec:intro}
\vspace{-2pt}
For many years now, deep neural networks (DNNs) have been known to learn hierarchical representations. In computer vision, the early layers of these networks capture generic features like texture, while the later layers encode class-specific information \cite{zeiler2014visualizing, olah2018building}. However, the exact nature of these representations remains opaque and semantically less interpretable. Recent work has focused on making models inherently interpretable \cite{chen2019looks, sarkar2022cvpr} 
so as to have better insight into what these models are learning. In particular, concept-based models or CBMs \cite{cbms, oikarinen2023labelfree, liu2025hybrid} have emerged as a promising direction that quantify how learned concepts contribute to predictions. However, existing concept-based models typically learn all concepts at a single layer, overlooking the inherent hierarchical structure in neural network representations across multiple layers.

\begin{figure}
    \centering
    \includegraphics[width=\linewidth]{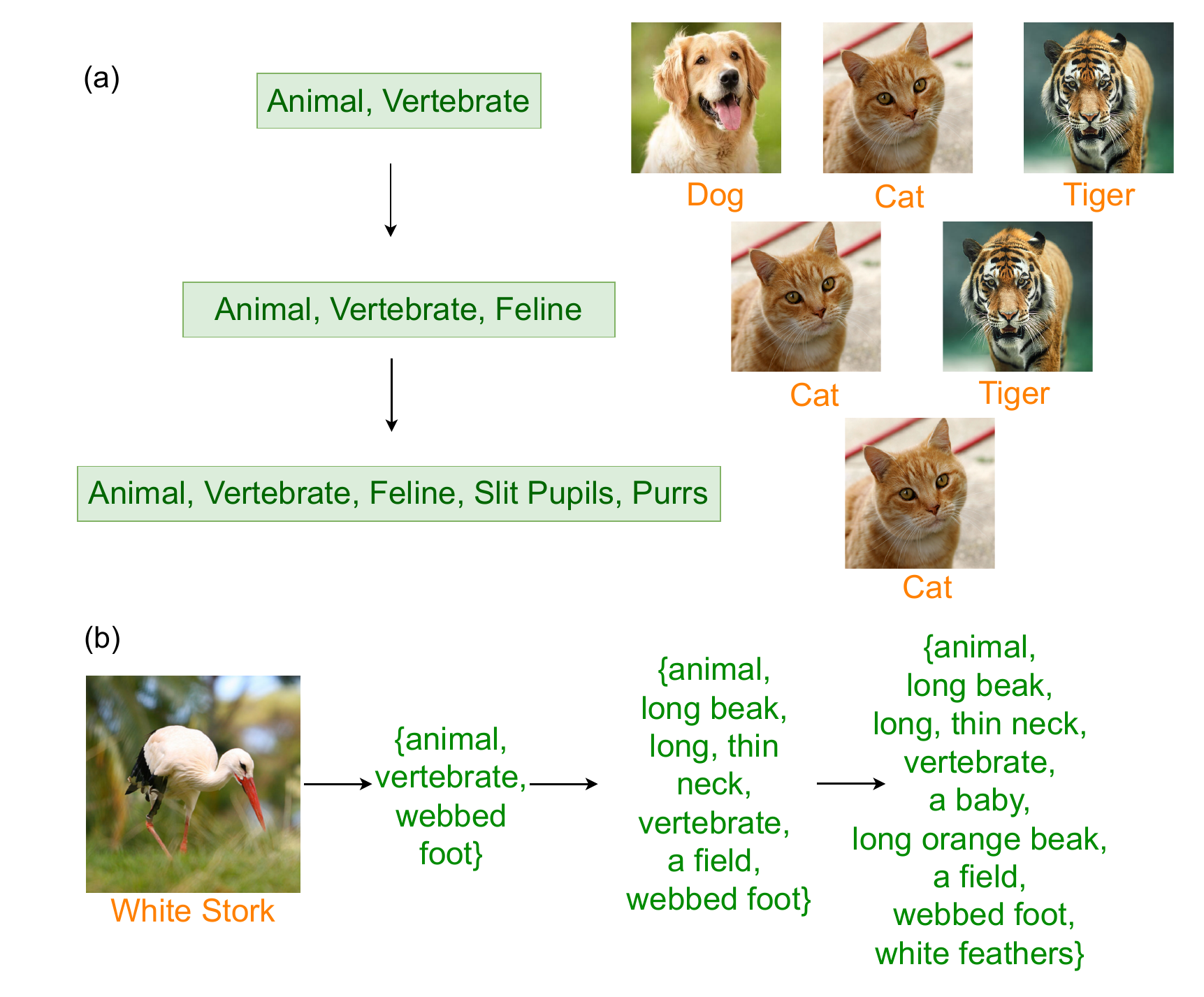}
    \vspace{-18pt}
    \caption{Classes (\textcolor{orange}{orange}) and attributes (\textcolor{darkgreen}{green}) (a) An illustrative example of how attributes shared by more classes are more general; those shared by fewer are more specific, naturally forming a subset-superset hierarchy; (b) Attributes learned by a FoCA CBM at different layers for the class \textit{White Stork} in ImageNet100.}
    \label{fig:teaser}
    \vspace{-3.5mm}
\end{figure}

Human cognition, on the other hand, organizes knowledge through semantic hierarchies and reasons over different levels of learned concepts \cite{theves2021learning} (see Fig \ref{fig:teaser}a). Existing CBMs do not leverage such structure; while a few limited recent efforts have explored hierarchical concept sets \cite{panousis2024coarsetofine, sun2024eliminating}, architecturally, all concepts are still learned at the same stage in the network - immediately before classification. In contrast, in this work, we examine \textit{how concept learning can capture hierarchical structure across network depth}. To this end, we explicitly guide the network to learn general human-understandable concepts in early layers and specific ones in deeper layers and see that this helps the model learn more semantically grounded representations while additionally allowing interpretability at different granularities.
In classification tasks where each class is defined by a set of attributes, a natural semantic hierarchy emerges from the pattern of attribute sharing.
Attributes shared by many classes are general, while those shared by a few are specific. For instance, a \textit{cat} and \textit{tiger} are an \textit{animal, vertebrate} and are \textit{feline}; while a \textit{dog}, \textit{cat} and \textit{tiger} are an \textit{animal} and are \textit{vertebrate}. Here, the latter group \{\textit{animal, vertebrate}\} is more general as it spans more classes. Conversely, more specific attributes help better class discriminability in data samples. This subset-superset structure creates a concept hierarchy where generality corresponds to the number of classes sharing an attribute set, as shown in Fig \ref{fig:teaser}.

To leverage such semantic structure while learning DNN models, we draw on Formal Concept Analysis (FCA) \cite{ganter2024formal} to construct a concept lattice from class-attribute associations. The lattice identifies natural supervision points in the network by aligning with class density patterns across network depth. Supervisory signals are then extracted from the lattice to learn sets of attributes and classes at these supervision points. These sets define \textit{hierarchical semantic layers}, each comprising an attribute layer and a classifier layer, that effectively overlay a semantic scaffold onto the network’s visual feature hierarchy. (Fig \ref{fig:teaser}b illustrates the progressively refined attribute sets for the class \textit{White Stork}). Each attribute layer corresponds to (and hence predicts) a group of classes, with the group's specificity determined by the granularity of its attributes. This layered structure enables progressive refinement of class predictions, while improving model transparency. One could view our formulation as generalizing extant CBM paradigms by exploiting a dataset’s concept taxonomy. When the taxonomy is flat, it defaults to the canonical single‐layer concept representation characteristic of traditional CBMs.

\textbf{Our Contributions:}
\vspace{-7pt}
\begin{itemize}[leftmargin=*]
\setlength\itemsep{-0.1em}
    \item We generalize the notion of concept-based interpretability in DNNs to semantic hierarchies using concept lattices, thus providing a means to leverage semantic scaffolds to guide learning across layers and enabling a deeper notion of interpretability in such models.
    \item We theoretically analyze our approach to study why such semantic ordering matters for concept-based models.
    \item Through comprehensive experiments on benchmark datasets, we show that this approach to learning not only performs on accuracy, but also yields semantically more meaningful embeddings, which we show using a clustering analysis.
    \item As part of a range of ablation studies and analysis, we show that our framework provides a mechanism to conduct multi-level concept interventions, going beyond the standard single-layer interventions in existing CBMs.
\end{itemize}
\section{Related Work}
\label{sec:related_work}

\noindent \textbf{Concept-Based Models.} Building inherently interpretable models using concepts is an actively growing area of research, initially introduced as the idea of learning classes through a concept layer in the network \cite{cbms}. Follow-up works improve various aspects of these models, such as addressing concept leakage \cite{Marconato2022GlanceNetsIL}, including uncertainty quantification \cite{Kim2023ProbabilisticCB} and improving robustness \cite{Sinha2022UnderstandingAE}. Other efforts include increasing model capacity using additional unsupervised concepts \cite{unsupervisedcbsm} and building concept bases for such models \cite{yuksekgonul2023posthoc}. More recent efforts have attempted the use of LLMs and VLMs for concept guidance and annotations \cite{oikarinen2023labelfree, LaBo, NEURIPS2024_90043ebd}. Finally, some works have studied concept relations \cite{vandenhirtz2024stochastic,pmlr-v235-raman24a} and incorporating structure over concepts \cite{barbiero2024relational, de2026causally}. All these efforts focus on learning concepts at the last layer with \textit{no semantic scaffolding across layers}. This is an aspect we focus on in this work.

\noindent \textbf{Hierarchical Learning.} Hierarchical learning has been explored more generally from a few perspectives. One line of work learns hierarchical embeddings like order embeddings \cite{orderembeddings}, hyperbolic entailment cones \cite{hyperbolicentailmentcones} and Poincar\'e embeddings \cite{poincareembeddings}. These methods, however, typically impose geometries \textit{across samples} to align with pre-existing structure (often hierarchical) in the label space. Our approach, on the other hand, induces a concept hierarchy across features of \textit{single sample}. Another line of work uses a hierarchy to constrain the predictions of the model \cite{Giunchiglia_Neurips, Giunchiglia_constraints, logicseg}. 
Some CBM-based works use hierarchical concept sets, although they are limited to two-level ones \cite{sun2024eliminating, panousis2024coarsetofine} and are limited to the pre-classification step. In contrast, we focus on deriving supervisory signals from a structured formal concept lattice with an arbitrary number of levels (26 levels on one of the datasets we use).

\noindent \textbf{Formal Concept Analysis (FCA).} This is a mathematical theory of data analysis where a set of objects and attributes are used to derive  structured hierarchy of formal concepts. 
There have been a few sparse efforts to use this theory in deep learning settings: to encode closure operators in a neural network \cite{rudolph2007using}, to introduce an embedding technique \cite{fca2vec} for problems with formal context-like structures like bipartite graphs \cite{peng2024bert4fca}, and to obtain order-based representations using binary vectors \cite{binder}. To the best of our knowledge, ours is the first effort to apply ideas from FCA to a concept-based learning setting in vision to overlay a structured organization of concepts on a neural network's representations, which provides a strong semantic interpretation including at intermediate levels of a network.
\section{Lattices for Concept-Based Learning}
\label{sec:method}
\vspace{-2pt}
\subsection{Background and Preliminaries}
\vspace{-4pt}
\noindent \textbf{Concept-Based Models:} We follow the standard CBM setup introduced by \cite{cbms} and define a concept-based model as one that learns a mapping from $X \mapsto Y$ via an intermediate concept encoder $q(\cdot)$. Such models learn from a three-tuple dataset $\mathcal{D} = \mathcal\{X, C, Y\}$ where $X \in \mathbb{R}^{m}$, $C \in \mathbb{R}^{k}$, $Y \in \mathbb{R}^{n}$ and $m, k, n$ are dimensions of the image, concept and label spaces respectively. Each prediction is of the form $\hat{y} = p(q(x))$ where $q \colon X \mapsto C$ (e.g. \textit{bird} image $\rightarrow \{\textit{white body},\, \textit{flat yellow bill},\, \dots,\, \textit{orange legs}\}$) is the concept encoder, and $p \colon C \mapsto Y$ (e.g. $\{\textit{white body},\, \textit{flat yellow bill},\, \dots,\, \textit{orange legs}\} \rightarrow \textit{Duck}$) is an interpretable classifier network. These text-based concepts correspond to attributes (as discussed in earlier sections), and we refer to them as such henceforth.

\vspace{0.2mm}
\noindent \textbf{Formal Concept Analysis (FCA):} A \textit{formal context} is defined as a three-tuple $\langle G, M, I \rangle$, where $G$ is a set of objects, $M$ is a set of attributes, and $I \subseteq G \times M$ captures the binary relations (also called incidence relation) indicating which attributes are present in which objects. Given such a formal context, a \textit{formal concept} \cite{ganter2024formal} is defined as a tuple $\langle A, B \rangle$, where $A$ (\textit{extent}) is a subset of objects and $B$ (\textit{intent}) is a subset of attributes. Note that these are not arbitrary subsets; these subsets of objects and attributes have concept-forming operators defined over them ($\uparrow$, $\downarrow$). $A$ contains objects (classes in our case) sharing all attributes in $B$, and $B$ contains attributes shared by all objects in $A$. Given $A \subseteq G, B \subseteq M$, this is defined as:
\vspace{-4pt}
\begin{equation}\label{eqn:formality}
A^\uparrow = B, B^\downarrow = A 
\vspace{-6pt}
\end{equation}
\begin{equation*}
 A^{\uparrow} = \left\{ m \in M \,\middle|\, \forall g \in A : \langle g, m \rangle \in I \right\}
\end{equation*}
\begin{equation*}
B^{\downarrow} = \left\{ g \in G \,\middle|\, \forall m \in B :  \langle g, m \rangle \in I \right\}
\end{equation*}
The set of all formal concepts derived from a formal context forms a partial order over the subset-superset ordering relation, i.e. if $\langle A_1, B_1\rangle$, $\langle A_2, B_2\rangle$ are two formal concepts, 
$\langle A_1, B_1\rangle \preceq \langle A_2, B_2 \rangle$ if $A_1 \subseteq A_2$ and $B_1 \supseteq B_2$, where $\preceq$ represents subconcept-superconcept ordering. This implies that general concepts have lesser attributes and more objects, while specific concepts have more attributes and lesser objects. For example, 
$\langle \{\textit{dog, cat, tiger}\}, \{\textit{animal, vertebrate}\}\rangle$ is more general than $\langle \{\textit{cat, tiger}\}, \{\textit{animal, vertebrate, feline}\}\rangle$.
This partial order allows us to construct a lattice of formal concepts, a hierarchy wherein concepts in higher layers are more general and ones in lower layers are more specific. 

\begin{definition}[Formal Concept Lattice]
\label{defn_lattice}
Let $\mathcal{B}(G, M, I)$ denote the collection of all formal concepts of the formal context $\langle G, M, I \rangle$, i.e. $\mathcal{B}(G, M, I) = \{\langle A, B\rangle \in 2^G \times 2^M | A^{\uparrow} = B, B^{\downarrow} = A\}$.
$\langle \mathcal{B}(G, M, I) , \preceq \rangle$ is then a formal concept lattice (or simply lattice, in this work), where $\preceq$ is the subset-superset ordering. Let $\mathcal{L}$ denote this lattice, where $\mathcal{L} = \{\mathcal{L}_1, \mathcal{L}_2, \ldots, \mathcal{L}_L\}$. $\mathcal{L}_i$ represents the set of formal concepts at level $i$, with $i=1$ being the most general (top) level and $i=L$ the most specific.
\vspace{-11pt}
\end{definition}

\begin{figure*}[t]
    \centering
\includegraphics[width=0.92\linewidth]{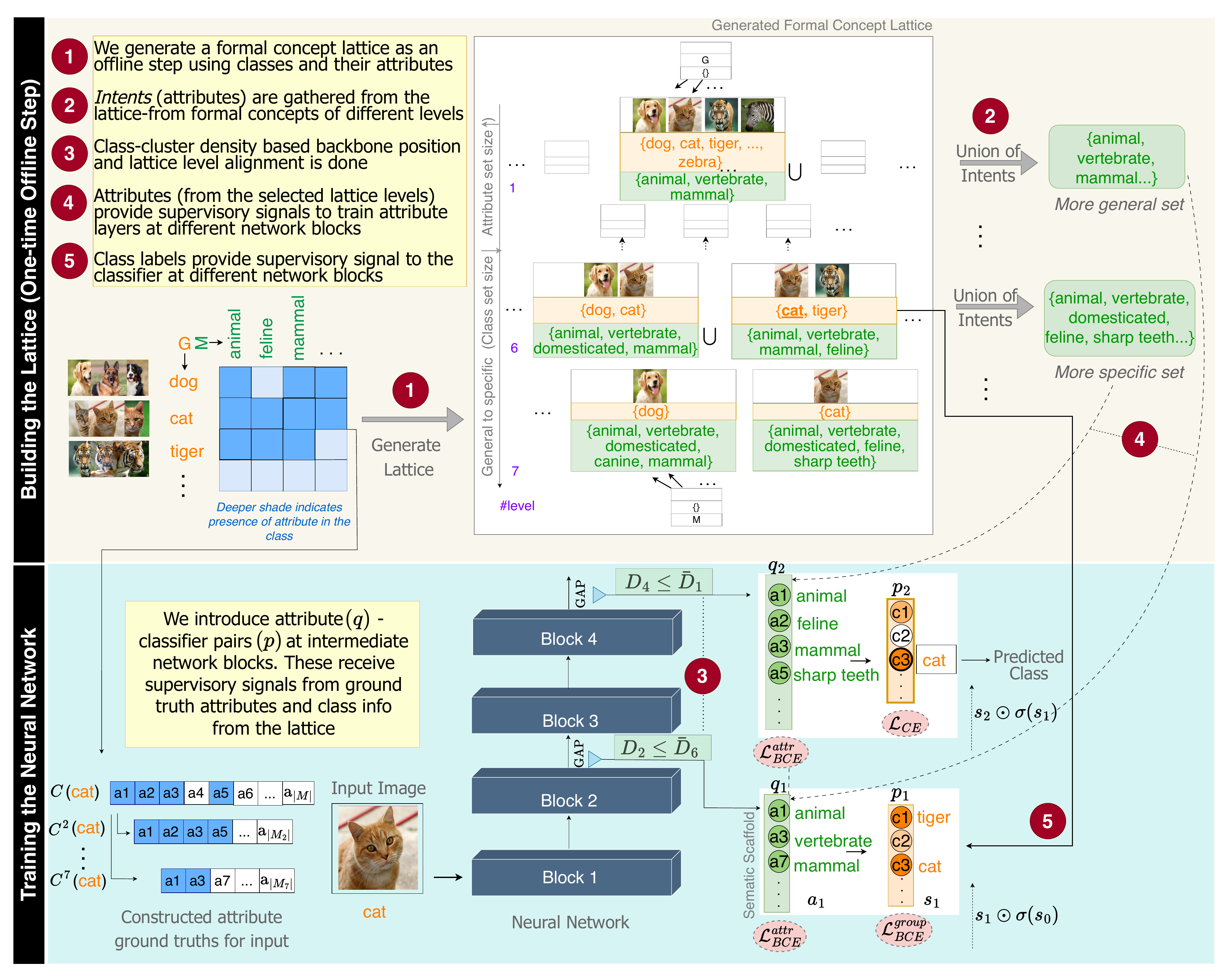}
\vspace{-8pt}
    \caption{\textbf{Illustration of our overall approach.} A formal concept lattice is constructed for a concept-based setting using class and attribute sets and the binary relation between them \textit{(top)}. A neural network is supervised at intermediate blocks using the information extracted from specific levels in the lattice \textit{(bottom)}.}
    \label{fig:main_figure}
    \vspace{-2.8mm}
\end{figure*}

\subsection{Formal Concepts for Deep Neural Networks}
\vspace{-4pt}
FCA \cite{ganter2024formal} provides a principled framework for organizing objects and attributes into a hierarchical structure based on their binary incidence relation. This naturally translates to a concept-based setting in DNNs where we have access to two interpretable sets: classes and their attributes. Treating classes as objects ($G$) and attributes as their properties ($M$), we define a formal context $\langle G, M, I \rangle$, where $I$ is a binary relation indicating which classes contain which attributes. From this formal context, we construct a formal concept lattice: structured tuples of class and attribute subsets organized hierarchically (see Appendix for examples of formal concepts).

Fig \ref{fig:main_figure} (top, in white box) shows an illustrative example of such a lattice. The formal concepts shown at the bottom of the lattice contain singleton classes and the set of attributes that they contain. As we go up the lattice, the formal concepts become more general with increasing class set sizes along with the corresponding maximal set of attributes they have in common. Attributes individually are not general or specific; it is the set of attributes that capture levels of generality. Note that the hierarchy herein captures a subset-superset ordering: attribute sets shared by more classes are more general, while those shared by fewer are more specific. 
We construct such a lattice for each dataset in our experiments; more lattice construction details are provided in Appendix \ref{appendix_sec:lattice_details}. 

We next discuss how sets derived from this lattice can be mapped to specific depths in the network using a mechanism we propose called \textit{class-cluster density}. Then, we detail our training process which involves the joint optimization of attribute learning, iterative class-group refinement and final classification.

\noindent \textbf{Extracting Attribute and Class Sets.} The constructed formal concept lattice encodes multi-level semantic relationships, allowing us to extract supervision signals at varying levels of generality. We begin by organizing attributes into sets on the basis of their level in the lattice. Specifically, for each \( \mathcal{L}_i \), we identify the set of formal concepts residing at that level. To represent attributes corresponding to a certain semantic level of generality, we compute the union of \textit{intents} (i.e., sets of attributes) associated with formal concepts in \( \mathcal{L}_i \):
\vspace{-4pt}
\begin{equation}
    M_i = \bigcup_{\texttt{fc} \in \mathcal{L}_i} \texttt{fc}.\texttt{intent}
    \vspace{-3pt}
\end{equation}
where $\texttt{fc}.\texttt{intent}$ 
denotes the \textit{intent} 
of the formal concept. By repeating this process across different levels of the lattice, we obtain a hierarchy of attribute sets \( \{M_1, M_2, \dots, M_l\} \), each progressively more specific than the previous.

The concept lattice not only provides hierarchical attributes but also class groupings (\textit{extents}) within each formal concept. We leverage this structure 
as a form of \textit{supervision via iterative refinement}: we use loss functions at different layers such that they progressively narrow down the set 
of plausible classes (details in Sec \ref{subsec_foca_Cbm_training}). At early layers, general attribute sets may not identify individual classes but can often disambiguate groups of classes. For example, the presence of attributes such as \textit{\{whiskers, fur\}} can rule out classes like \textit{tortoise} or \textit{whale}, narrowing the prediction space from all classes to, say, \textit{\{cat, dog, lion, tiger}\}. This class group is then \textit{cascaded} forward, where subsequent layers refine 
it using more specific attributes; the next layer may use \textit{domesticated} to narrow this down to \textit{\{cat, dog}\}.

Such a learning mechanism allows suppression of classes that get eliminated early, 
focusing the network's discriminative capacity on the remaining candidates. We theoretically show in Sec \ref{sec:theory} that this iterative refinement mechanism preserves the semantic ordering imposed by the lattice structure, minimizing ordering violations during training (Sec \ref{subsec_foca_Cbm_training}).

\begin{figure*}
    \centering
    \includegraphics[width=0.95\linewidth]{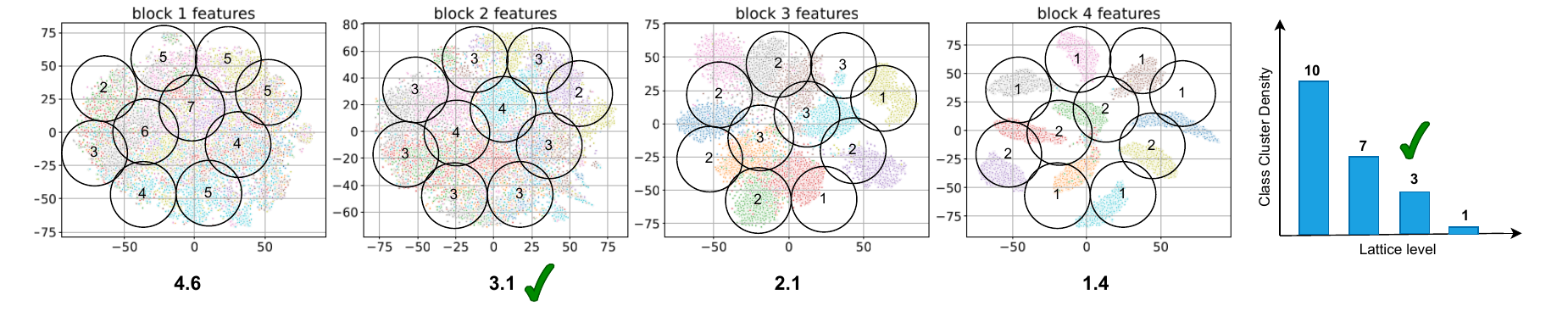}
    \vspace{-8pt}
    \caption{An illustrative example on 10 classes of how lattice levels are selected to supervise layers (blocks in our implementation) in the network. We perform clustering with $n$ (here 10) centers at each layer and obtain the average number of unique classes present per cluster, compute the same over the formal concept extents at each lattice level, and then choose closest alignment.}
    \label{fig:class_cluster_density}
    \vspace{-3mm}
\end{figure*}

\vspace{0.1mm}
\noindent \textbf{Class-Cluster Density.}
Modern neural network architectures are organized into progressive modular components, for example, residual blocks in ResNets, transformer encoder blocks in ViTs, etc. We refer to these generically as `blocks' and treat them as candidates for semantic supervision.
We formalize the notion of \emph{class-cluster density} to quantify the semantic granularity of intermediate network representations and to align lattice levels with network depth. Let $f_j(x) \in \mathbb{R}^{d_j}$ denote the feature embedding of input $x$ at network block $j$. Given a dataset with $n$ classes, we apply $k$-means clustering with $k = n$ to the set $\{f_j(x)\}$ over the training samples. For a cluster $K$, let $\mathcal{Y}(K)$ denote the set of ground-truth class labels of samples assigned to $K$. We define the class-cluster density at block $j$ as:
\vspace{-8pt}
\begin{equation}
D_j \;=\; \frac{1}{n} \sum_{k=1}^{n} \left|\{ \mathcal{Y}(K_k) \right\}|
\vspace{-4pt}
\end{equation}
where $\{K_k\}_{k=1}^n$ are the resulting clusters. For example, if $K=\{(x1,y1),(x2,y2),(x3,y2)\}$ , then $\mathcal{Y}(K) = \{y_1, y_2\}$ and $|\mathcal{Y}(K)|=2$. The above equation averages this count across all $n$ clusters. Intuitively, $D_j$ measures the average number of distinct classes grouped together by the representation at depth $j$; higher values indicate more class-agnostic (general) features, while lower values reflect increased class-specific separation.

An analogous quantity can be defined for the formal concept lattice. For each lattice level $i$,
we define this quantity as $\bar{D}_i \;=\; \frac{1}{|\mathcal{L}_i|} \sum_{fc \in \mathcal{L}_i} |\{\texttt{fc.extent}\}|$. By construction, higher lattice levels correspond to more general concepts and thus larger average extents, while lower levels correspond to finer-grained concepts with smaller extents.
This establishes a natural alignment principle: early network blocks with high $D_j$ should be supervised by higher lattice levels with large $\bar{D}_i$, while deeper blocks with lower $D_j$ should align with lower lattice levels. In practice, we assign each lattice level $i$ to the earliest network block $j$ such that $D_j \leq \bar{D}_i$ (see Fig \ref{fig:main_figure} bottom middle), ensuring that semantic supervision is applied at a representational depth whose granularity matches that of the corresponding lattice level.
This strategy is illustrated in \cref{fig:class_cluster_density} and an algorithm is included in Appendix \ref{appendix_sec:implementation_details}.

\vspace{-5pt}
\subsection{FoCA CBM Training}
\label{subsec_foca_Cbm_training}
\vspace{-3pt}
We refer to our models as FoCA CBMs (Formal Concept Analysis CBMs). Once appropriate depths in the backbone have been determined, at each selected position \( j \) in the network, we introduce a concept encoder \( q_j \) to project the intermediate feature representation into a concept space defined by the attribute set \( M_i \) from lattice level \( i \) ($i$ denotes an index in the lattice, $j$ denotes an index in the network). 
Formally, each encoder learns a mapping 
$ q_j: \mathbb{R}^{d_j} \rightarrow \mathbb{R}^{|M_i|}$
where \( d_j \) is the dimensionality of the downsampled feature at layer \( j \), obtained via global average pooling, and \( |M_i| \) is the size of the attribute set \( M_i \) (see Fig \ref{fig:main_figure}, bottom). We also introduce \( l \) classifiers $p_j: \mathbb{R}^{|M_i|} \rightarrow \mathbb{R}^n$, each operating on the output of the corresponding concept encoder \( q_j \), where \( n \) is the total number of classes.

\noindent \textbf{Obtaining Level-Wise Attribute and Class Ground Truths.} Given a sample \( x \) with ground truth class label \( y \), let \( \mathbf{C}(x) \in \{0,1\}^{|M|} \) denote its complete ground truth attribute vector, where \( |M| \) indicates all the attributes from the lattice. 
For lattice level \( i \), let \( \mathcal{M}_i \subseteq \{1, \ldots, |M|\} \) denote the set of attribute indices corresponding to attributes at that level. 
The level-wise ground truth attribute vector for sample \( x \) at level \( i \) is then obtained as
\( 
\mathbf{C}^{i}(x) = \mathbf{C}(x)[\mathcal{M}_i] \),
i.e., by selecting the entries in \( \mathbf{C}(x) \) indexed by attributes belonging to \( \mathcal{M}_i \).
Similarly, we define the \textit{class group} of \( y \) at level \( i \) as the union of all class sets (extents) of formal concepts at $\mathcal{L}_i$ whose extents contain \( y \). 
Formally, if \( \mathcal{L}_i \) denotes the set of formal concepts at level \( i \), then the class group is given by:
\vspace{-7pt}
\begin{equation}
G_i(y) = \bigcup_{\substack{\texttt{fc} \in \mathcal{L}_i \\ y \in \texttt{fc.extent}}} \texttt{fc.extent} 
\end{equation}
\vspace{-9pt}

As an example, let level \( l_i \) contains the formal concepts \textit{$\langle\{dog, cat\}, \{animal, vertebrate, ...\} \rangle$}, \textit{$\langle\{cat, tiger\}, \{animal, vertebrate, ...\} \rangle$} and \textit{$\langle\{tiger, zebra\}, \{animal, vertebrate, ...\} \rangle$}. If sample $x$ is a \textit{cat}, since \textit{cat} appears in two extents, the resulting group is \( G_i(y) = \{dog, cat, tiger\}\). \( \bar{G}_{y}^{i}  = \mathbf{1}_{c \in G^{i}_{y}} \) represents the corresponding binary vector.
This is Step 4 in Fig \ref{fig:main_figure}. 
As we proceed to deeper lattice levels, these groups become progressively smaller, ultimately converging to singleton sets representing individual classes.

\noindent \textbf{Overall Training Process.} The network is jointly trained for both attribute prediction and classification, using the aforementioned iterative refinement strategy. For each attribute encoder output, we apply a binary cross-entropy loss \( \ell_{\text{BCE}} \) against the corresponding attribute set. For all intermediate classifiers, we supervise using group-level labels derived from the lattice, again using \( \ell_{\text{BCE}} \). The final classifier is trained with standard cross-entropy loss \( \ell_{\text{CE}} \) using the ground truth class label. Denoting ${s}_j(x) = p_j(q_j(f_j(x)))$ as the output of the classifier at layer $j$, we have $\hat{s}_j(x)  = 
s_j(x) \cdot \sigma(s_{j-1}(x))$ as the post-iterative refinement output for classes at layer $j$, and $a_j(x) = \sigma(q_j(f_j(x)))$ as the sigmoid output for attributes at layer $j$. The overall loss $\ell_{\text{total}}$ is hence:
\vspace{-11pt}
\begin{multline}\label{eqn:jnt_loss}
    \alpha \sum_{j=1}^{l}(\underbrace{\mathbf{C}^{i}(x) \cdot \log(a_j(x)) + (1-\mathbf{C}^{i}(x) \cdot (1- \log(a_j(x)))}_{\ell_{\text{BCE}_j}^{\text{attr}}})  \\ \vspace{-4pt}+
\beta \sum_{j=1}^{l} (\underbrace{\bar{G}_{y}^{i} \cdot \log(\hat{s}_j(x)) + (1-\bar{G}_{y}^{i}) \cdot (1-\log(\hat{s}_j(x))}_{\ell_{\text{BCE}_j}^{\text{group}}}) \\
\vspace{-4pt}+ \ell_{\text{CE}_l}\vspace{-7pt}
\end{multline}
\noindent where \( \alpha \) and \( \beta \) are weighting hyperparameters that balance the contribution of attribute and group-level supervision.
\vspace{-7pt}
\section{Some Theoretical Implications}
\label{sec:theory}
\vspace{-4pt}
We now present a theoretical analysis highlighting advantages of FoCA CBMs: (a) over non-hierarchical CBMs, and (b) more specifically, over other hierarchical CBM variants. Focusing first on (a), we study how imposing a lattice-ordered structure on classes and attributes provides a semantic scaffold for the network, in contrast to unordered alternatives, and show that FoCA CBMs preserve semantic concept ordering across layers. Let $f^{\texttt{foca}}$ denote a FoCA CBM trained with iterative refinement
, and let $f^{\texttt{rnd}}$ denote a network trained with random class groupings $\{G_{\text{rnd}}^{i}(y)\}$ that do not satisfy the subset constraint $G_{\text{rnd}}^{i+1}(y) \subseteq G_{\text{rnd}}^{i}(y)$. At zero training loss, preservation of ordering is trivial: a FoCA CBM respects lattice structure by construction, while random groupings may not. In general, multiple parameter configurations can attain the same empirical risk \cite{10.1145/3446776}. FoCA CBMs, however, introduce an inductive bias toward order-consistent solutions, ensuring that any formal concept activated at a given layer is also activated in all preceding layers, a guarantee absent under random group-based supervision.
The substantive question lies in the realistic regime where the empirical risk $\hat{\ell}$ differs from the optimal risk $\ell^{*}$ by some margin $\epsilon > 0$. Crucially, not all prediction errors constitute ordering violations. For example, for a given input, if two consecutive layers incorrectly predict a true class as absent, the semantic ordering is preserved despite the misclassification. We therefore isolate and bound the probability of ordering-specific errors (configurations where $c \in \hat{G}_{i+1} \setminus \hat{G}_i$) as a function of the generalization gap $\epsilon = |\hat{\ell} - \ell^*|$. The following theorem establishes that, under bounded generalization error, a FoCA CBM maintains hierarchical consistency with high probability (over the training set $X$), while random supervision fails this guarantee even at zero training loss.

\begin{restatable}[Inductive Bias towards Order Consistency]{theorem}{ordering}
\label{theorem:1}
For an input $(x, y) \sim X \times Y$, define:
\vspace{-6pt}
\begin{equation*}
\hat{G}_i(x) = \left\{g {~\in G_i(y)} \mid \hat{
{s}}_j(x){[g]} \geq \tau_c\right\},
\vspace{-4pt}
\end{equation*}
where $\tau_c \in [0,1]$ is a threshold, typically chosen as $0.5$. Also, let $E_{i}^{c}$ denote the event $c \in \hat{G}_{i+1}(x) \setminus \hat{G}_i(x)$. Then, given $|\widehat{\ell_{\texttt{total}}} - \ell^{*}_{\texttt{total}}| \leq \epsilon$, assuming $\max_{i, c}P(E_{i-1}^c) + P(E^{c}_i) \leq \frac{1}{e}$,
\vspace{-6pt}
\begin{equation*}
\Pr_{f^{\texttt{foca}}}\left(\hat{G}_{i+1}(x) \not\subseteq \hat{G}_i(x) \right) \ll \Pr_{f^{\texttt{rnd}}}\left(\hat{G}_{i+1}(x) \not\subseteq \hat{G}_i(x)\right).
\vspace{-4pt}
\end{equation*}
\end{restatable}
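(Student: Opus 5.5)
Our plan is to treat the two sides of the inequality separately: an upper bound for $f^{\texttt{foca}}$ that goes to zero as $\epsilon \to 0$, and a lower bound for $f^{\texttt{rnd}}$ that stays bounded away from zero even at zero training loss.

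\textbf{Union bound over classes.} First we write the violation event as a union over classes,
\[
\{\hat{G}_{i+1}(x) \not\subseteq \hat{G}_i(x)\} = \bigcup_{c} E_i^c,
\]
so that $\Pr(\hat{G}_{i+1}\not\subseteq \hat{G}_i) \le \sum_c P(E_i^c)$.

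\textbf{Upper bound for FoCA.} For $f^{\texttt{foca}}$ we exploit the multiplicative iterative refinement $\hat{s}_{j}(x) = s_{j}(x)\,\sigma(s_{j-1}(x))$ with outputs in $[0,1]$. The event $E_i^c$ requires $\hat{s}_{i+1}(x)[c]\ge\tau_c$ while $\hat{s}_i(x)[c]<\tau_c$. Since $\hat{s}_{i+1}[c]\le \sigma(s_i[c])$, which is tied to $\hat{s}_i[c]$ through the same recursion, this can only happen when the layer-$i$ output is mis-calibrated relative to its target $\bar{G}^i_y[c]$. The lattice guarantees $G_{i+1}(y)\subseteq G_i(y)$, so any target-consistent prediction pair never produces $E_i^c$.

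\textbf{Linking violations to the loss gap.} Next we relate violations to the excess risk. On $E_i^c$, at least one of the two group BCE terms at layers $i$ and $i+1$ incurs loss at least $\log(1/\tau_c)$ or $\log(1/(1-\tau_c))$ above its optimum, because the targets are nested while the predictions are not. By a Markov-type argument on the nonnegative excess loss, $\beta\,\kappa(\tau_c)\,P(E_i^c) \le \epsilon$ for a constant $\kappa>0$. The chained dependence through $\sigma(s_{j-1})$ couples $E_i^c$ with $E_{i-1}^c$. We would bound the joint contribution using the assumption $P(E_{i-1}^c)+P(E_i^c)\le 1/e$, which keeps $p\log(1/p)$-type terms in their monotone regime; there the entropy-like penalty from a cross-entropy deviation dominates linearly. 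This yields
\[
\Pr_{f^{\texttt{foca}}}\bigl(\hat{G}_{i+1}\not\subseteq\hat{G}_i\bigr) = O(n\,\epsilon/\beta),
\]
and we would state this with high probability over the training sample via a standard concentration step.

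\textbf{Lower bound for random groupings.} For $f^{\texttt{rnd}}$, by hypothesis there exist $y$ and $c$ with $c\in G^{i+1}_{\text{rnd}}(y)\setminus G^{i}_{\text{rnd}}(y)$. At zero or near-zero loss the network predicts its targets, so $E_i^c$ occurs for every $x$ of class $y$. Hence
\[
\Pr_{f^{\texttt{rnd}}}\bigl(\hat{G}_{i+1}\not\subseteq\hat{G}_i\bigr)\ge \Pr(Y\in\mathcal{V}) - O(\epsilon),
\]
where $\mathcal{V}$ is the set of violating classes. This is a constant independent of $\epsilon$. Comparing the two bounds gives the ``$\ll$'' for small $\epsilon$.

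\textbf{Expected obstacle.} The hardest step is converting the aggregate loss gap $\epsilon$ into a per-event probability bound when $E_i^c$ depends on two coupled layers through the multiplicative refinement. We would first try a pointwise inequality of the form: excess BCE at $(i,i+1)$ is at least $\kappa\cdot\mathbf{1}[E_i^c]$. If that fails due to the coupling, we would instead bound $-\log$ of the refinement factor and use the $1/e$ condition to control the cross term.
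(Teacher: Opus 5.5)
Your argument is sound and shares the paper's core (its Steps 1--3): the nesting $G^{i+1}(y)\subseteq G^{i}(y)$, a fixed excess-loss penalty on every violation, and an expectation/Markov bound, which in the paper reads $\sum_{i}\sum_{c}P(E_i^c)\le 2\epsilon/\gamma_{\min}$. You diverge at the final step.

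Rather than a union bound, the paper applies the asymmetric Lov\'asz Local Lemma. It takes a dependency graph linking $E_i^c$ only to $E_{i-1}^c$ and uses the certificate $y_{c,i}=e^{eP(E_i^c)}-1$. The hypothesis $P(E_{i-1}^c)+P(E_i^c)\le 1/e$ is exactly the condition $e\,P(E_i^c)\,e^{-e(P(E_{i-1}^c)+P(E_i^c))}\ge P(E_i^c)$ needed to verify that certificate, and the result is $\Pr(\bigcap_{i,c}\overline{E_i^c})\ge e^{-2e\epsilon/\gamma_{\min}}$. Your union bound needs neither the dependency structure nor the $1/e$ hypothesis. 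It yields $1-2\epsilon/\gamma_{\min}$, which is the stronger bound whenever $2\epsilon/\gamma_{\min}$ is small. So your attempt to push the $1/e$ condition through a ``monotone regime of $p\log(1/p)$'' is unnecessary and can be dropped: linearity of expectation over a sum of nonnegative excess losses does not care how $E_i^c$ and $E_{i-1}^c$ are coupled.

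Apply the Markov step to the class-summed excess loss, as the paper does. This gives $\sum_c P(E_i^c)\le\epsilon/(\beta\kappa)$ and removes your factor $n$. Your tracking of $\beta$ and your constant $\kappa=\min\{\log(1/\tau_c),\log(1/(1-\tau_c))\}$ are in fact more careful than the paper's $\gamma_{\min}=-2\max(\log\tau,\log(1-\tau))$. That factor $2$ is not supported in the case $c\in G^{i+1}(y)$, where only the layer-$i$ term is forced across the threshold.

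For the random baseline, the paper models group membership as random with sizes $k_i,k_{i+1}$ and obtains $\Pr(E_i^c)\ge\frac{k_{i+1}}{|G|}(1-\frac{k_i}{|G|})=\Omega(1/L)$. That is an explicit rate in terms of the hierarchy's shape, but the passage from violating targets to violating predictions is only asserted. You instead fix a violating pair and use near-realizability, which makes that passage explicit. Since $\hat G_i(x)\subseteq G_i(y)$, a pair with $c\in G^{i+1}_{\text{rnd}}(y)\setminus G^{i}_{\text{rnd}}(y)$ forces $c\notin\hat G_i(x)$. So $E_i^c$ reduces to the layer-$(i+1)$ prediction being correct, which fails only on an $\epsilon$-controlled mass.

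Finally, $\hat s_{i+1}[c]\le\sigma(s_i[c])$ requires $s_{i+1}[c]\le 1$, which is false for raw logits. It is not load-bearing, so omit it.
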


\vspace{-3mm}
We provide a brief proof sketch herein, while deferring the detailed proof to Appendix \ref{appendix_sec:theory}. 
\vspace{-3.5mm}
\begin{proof}[Proof Sketch]
The proof proceeds in three steps. First, we establish that ground truth class groups respect the superset ordering: $G^{i+1}(y) \subseteq G^{i}(y)$ for all $i$ and $y$. 
Second, we quantify the loss penalty for ordering violations. Each violation at transition $i \to i+1$ incurs excess loss of at least $\gamma_{\min} > 0$ compared to correct predictions. 
Third, we apply the asymmetric Lovász Local Lemma \cite{ErdosLovasz1975} to the collection of violation events $\{E_i\}_{i=1}^{L-1}$. The dependency structure where each event $E_i$ depends only on adjacent events $E_{i-1}$ and $E_{i+1}$, enables us to lower bound the probability of maintaining ordering across all layers as $\Pr(\bigcap_i \overline{E}_i)$. In stark contrast, random groupings yield high violation probabilities, via a simple combinatorial argument, even at zero training loss.\vspace{-7pt}\end{proof}

Notably this affinity for order consistency holds vacuously for vanilla CBMs where mappings are restricted to only the most fine-grained bottom layer of a formal concept lattice. FoCA CBMs, on the other hand, generalize this approach by allowing mappings to any layer of the formal concept lattice. 
We next analyze the advantages of using a formal concept lattice as opposed to an arbitrary hierarchy. Drawing on insights from the information bottleneck theory \cite{tishby2015deeplearninginformationbottleneck}, we posit that the increase in mutual information between successive layers and the output \(I(f_j(X);Y) - I(f_{j-1}(X);Y)\) is upper-bounded by a constant $\Delta$, that depends on structural details such as the architecture and optimizer. In general, for DNNs, information gain between successive layers may be method-dependent and not guaranteed, as qualitatively illustrated by the t-SNE visualizations in the top row of Fig \ref{fig:per_block_embeddings}. In contrast, supervision of intermediate layers using a formal lattice induces a guaranteed lower bound on the information gain between any two consecutively supervised layers in FoCA-CBMs, a result formally established below in \cref{theorem:foca_optimal}.

\begin{figure}[t]
    \centering
\includegraphics[width=0.94\linewidth]{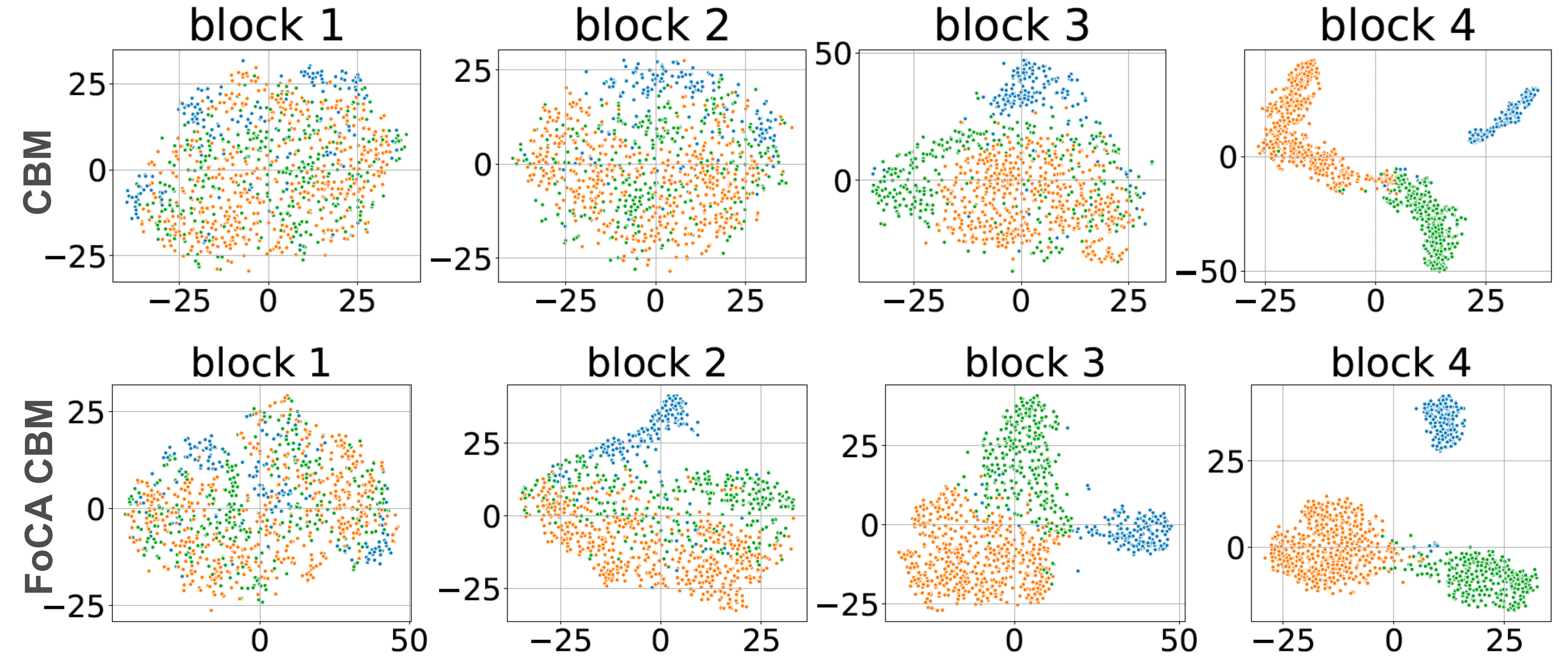}
\vspace{-7pt}
    \caption{t-SNE plots of sample embeddings from AwA2 obtained from trained ResNet-18 backbones of Vanilla CBM and FoCA CBM. On a CBM, the clusters separate at the final block; in FoCA CBM, the separation happens gradually over blocks, showing graded semantic learning.}
\label{fig:per_block_embeddings}
\vspace{-12pt}
\end{figure}

\begin{table*}[t]
\setlength{\tabcolsep}{3pt}
\centering
\small
\caption{Results on Classification Test Accuracy, Cluster Impurity (CI) and Cluster Compactness (DBI) on ImageNet100, AwA2, CIFAR100 datasets averaged over 3 seeds. Best in bold, second best underlined. FoCA CBM-N is a Naive variant of our method where the attribute sets obtained from the lattice are all stacked after the backbone followed by a standard classifier.}
\vspace{-6pt}
\begin{tabular}{cccccccccc}
\toprule
\textbf{}                                   & \multicolumn{3}{c}{\textbf{ImageNet100}}      & \multicolumn{3}{c}{\textbf{AwA2}}              & \multicolumn{3}{c}{\textbf{CIFAR100}}         \\ \midrule    
\textbf{}                                   & \textbf{Acc $\uparrow$} & \textbf{CI $\downarrow$} & \textbf{DBI $\downarrow$} & \textbf{Acc $\uparrow$} & \textbf{CI $\downarrow$} & \textbf{DBI $\downarrow$} & \textbf{Acc $\uparrow$} & \textbf{CI $\downarrow$} & \textbf{DBI $\downarrow$} \\ \midrule
\textbf{Vanilla CBM} \scriptsize{\textcolor{gray}{[ICML'20]}} & 88.27\tiny{$\pm$0.490}             & 0.662\tiny{$\pm$0.005}       & 2.197\tiny{$\pm$0.023}        & 90.36\tiny{$\pm$0.210}             & \underline{0.628}\tiny{$\pm$0.004}       & 2.137\tiny{$\pm$0.011}        & 76.63\tiny{$\pm$0.690}             & 0.712\tiny{$\pm$0.006}       & 2.238\tiny{$\pm$0.024}        \\
\textbf{MLPCBM}                            & 86.88\tiny{$\pm$0.290}             & \underline{0.659}\tiny{$\pm$0.006}       & 2.210\tiny{$\pm$0.029}        & 89.65\tiny{$\pm$0.370}             & 0.637\tiny{$\pm$0.007}       & \underline{2.120}\tiny{$\pm$0.018}         & 76.17\tiny{$\pm$0.620}             & 0.722\tiny{$\pm$0.006}       & 2.276\tiny{$\pm$0.027}        \\
\textbf{Posthoc CBM} \scriptsize{\textcolor{gray}{[ICLR'23]}}                       & 67.25\tiny{$\pm$0.700}             & 0.820\tiny{$\pm$0.000}       & 2.530\tiny{$\pm$0.000}        & 81.00\tiny{$\pm$0.340}             & 0.773\tiny{$\pm$0.000}       & 2.674\tiny{$\pm$0.000}        & 52.00\tiny{$\pm$0.005}             & 0.851\tiny{$\pm$0.000}       & 2.779\tiny{$\pm$0.000}        \\
\textbf{LFCBM}  \scriptsize{\textcolor{gray}{[ICLR'23]}}                            & 86.32\tiny{$\pm$0.240}             & 0.676\tiny{$\pm$0.000}     & 2.448\tiny{$\pm$0.000}        & 83.03\tiny{$\pm$0.020}             & 0.699\tiny{$\pm$0.000}       & 2.803\tiny{$\pm$0.000}        & 65.13\tiny{$\pm$0.120}             & 0.907\tiny{$\pm$0.000}       & 2.519\tiny{$\pm$0.000}        \\    
\textbf{CEM}  \scriptsize{\textcolor{gray}{[NeurIPS'22]}}                             & 86.51\tiny{$\pm$0.400}             & 0.678\tiny{$\pm$0.004}       & 2.178\tiny{$\pm$0.026}        & \textbf{93.11\tiny{$\pm$0.170}}             & 0.646\tiny{$\pm$0.003}       & 2.278\tiny{$\pm$0.009}        & 77.32\tiny{$\pm$0.570}             & 0.811\tiny{$\pm$0.003}       & 2.497\tiny{$\pm$0.019}        \\
\textbf{LaBo} \scriptsize{\textcolor{gray}{[CVPR'23]}}                            & 74.48\tiny{$\pm$0.004}             & 0.676\tiny{$\pm$0.000}       & 2.448\tiny{$\pm$0.000}        & 91.53\tiny{$\pm$0.010}             & 0.699\tiny{$\pm$0.000}       & 2.803\tiny{$\pm$0.000}        & 65.23\tiny{$\pm$0.003}             & 0.907\tiny{$\pm$0.000}       & 2.519\tiny{$\pm$0.000}        \\
\textbf{SCBM} \scriptsize{\textcolor{gray}{[NeurIPS'24]}} & 85.97\tiny{$\pm$0.150}             & 0.662\tiny{$\pm$0.003}       & 2.049\tiny{$\pm$0.024}        & 90.40\tiny{$\pm$0.260}             & 0.615\tiny{$\pm$0.002}       & 2.096\tiny{$\pm$0.014}         & 79.13\tiny{$\pm$1.100}             & 0.705\tiny{$\pm$0.004}       & 2.114\tiny{$\pm$0.018}        \\
\textbf{ProbCBM} \scriptsize{\textcolor{gray}{[ICML'23]}} & 85.75\tiny{$\pm$0.820}             & 0.693\tiny{$\pm$0.003}       & 2.269\tiny{$\pm$0.027}        & 89.80\tiny{$\pm$0.020}             & 0.661\tiny{$\pm$0.002}       & 2.373\tiny{$\pm$0.007}         & 78.86\tiny{$\pm$0.100}             & 0.732\tiny{$\pm$0.005}       & 2.401\tiny{$\pm$0.012}        \\
\textbf{CF-CBM} \scriptsize{\textcolor{gray}{[NeurIPS'24]}} & 87.30\tiny{$\pm$0.010}             & 0.676\tiny{$\pm$0.000}       & 2.448\tiny{$\pm$0.000}        & 89.19\tiny{$\pm$0.810}             & 0.699\tiny{$\pm$0.000}       & 2.803\tiny{$\pm$0.000}        & 60.02\tiny{$\pm$0.540}             & 0.907\tiny{$\pm$0.000}       & 2.519\tiny{$\pm$0.000}        \\
\textbf{HybridCBM} \scriptsize{\textcolor{gray}{[CVPR'25]}} & 79.51\tiny{$\pm$0.370}             & 0.676\tiny{$\pm$0.000}       & 2.448\tiny{$\pm$0.000}        & \underline{92.25\tiny{$\pm$0.07}}             & 0.699\tiny{$\pm$0.000}       & 2.803\tiny{$\pm$0.000}        & 59.52\tiny{$\pm$0.090}             & 0.907\tiny{$\pm$0.000}       & 2.519\tiny{$\pm$0.000}        \\ \midrule
\rowcolor{yellow!40!white}\textbf{FoCA CBM-N} \scriptsize{\textcolor{gray}{[Ours]}}                            & \underline{88.36\tiny{$\pm$0.290}}             & 0.665\tiny{$\pm$0.005}       & \underline{2.150}\tiny{$\pm$0.021}        & 88.26\tiny{$\pm$0.270}             & 0.659\tiny{$\pm$0.005}       & 2.273\tiny{$\pm$0.015}        & \textbf{82.41\tiny{$\pm$0.050}}             & \underline{0.688}\tiny{$\pm$0.005}       & \underline{2.177}\tiny{$\pm$0.021}        \\ 
\rowcolor{yellow!40!white}\textbf{FoCA CBM} \scriptsize{\textcolor{gray}{[Ours]}}                          & \textbf{91.88\tiny{$\pm$0.350}}             & \textbf{0.573}\tiny{$\pm$0.005}       & \textbf{1.862}\tiny{$\pm$0.027}        & 92.13\tiny{$\pm$0.280}             & \textbf{0.571}\tiny{$\pm$0.004}       & \textbf{2.057}\tiny{$\pm$0.010}        & \underline{79.47\tiny{$\pm$0.200}}             & \textbf{0.622}\tiny{$\pm$0.004}       & \textbf{1.855}\tiny{$\pm$0.020}       \\ \bottomrule        
\end{tabular}
\label{tab:main_table}
\vspace{-8pt}
\end{table*}

\begin{restatable}[Information-Theoretic Benefit of FCA Supervision]{theorem}{IBFCA}
\label{theorem:foca_optimal}
Consider a FoCA-CBM trained with formal concept lattice $\mathcal{L}$ constructed from $\langle G, M, I \rangle$. Let network layer $j$ be supervised by lattice level $i$ via class groups $G^{i}$ and attribute sets $M_i$. Then, under bounded training error $|\hat{\ell} - \ell^*| \leq \epsilon$ with $N$ training samples, the $\epsilon$-calibrated information gain of the network for layer $j$ is:
\vspace{-6pt}
\begin{equation*}\label{eqn:info_gain}
    I_{\mathcal{D}}(f_j(X); Y) - I_{\mathcal{D}}(f_{j-1}(X); Y) \geq \Delta_{\mathrm{lattice}}^{(i)} - 2\Delta_{\mathrm{align}}(\epsilon), 
\vspace{-4pt}
\end{equation*}
where $\Delta_{\mathrm{align}}(\epsilon) = O\!\big(\sqrt{\epsilon\log |G|}+N^{-1/2}\big)$.
\end{restatable}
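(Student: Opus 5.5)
The plan is to define $\Delta_{\mathrm{lattice}}^{(i)}$ as the information that the lattice itself carries about $Y$ when passing from level $i-1$ to level $i$:
\[
\Delta_{\mathrm{lattice}}^{(i)} := H(G^{i-1}(Y)) - H(G^{i}(Y)),
\]
with $G^i(Y)$ the class-group random variable. Two structural facts make this well posed and nonnegative.

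\begin{itemize}
\item Since $G^{i}(y)\subseteq G^{i-1}(y)$ (the first step of the proof of \cref{theorem:1}), the level-$i$ grouping is a refinement of the level-$(i-1)$ grouping.
\item Both groupings are deterministic functions of $Y$. Hence
\[
I(G^{i}(Y);Y)-I(G^{i-1}(Y);Y)=H(G^{i}(Y))-H(G^{i-1}(Y))\ge 0.
\]
\end{itemize}

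The sign in the definition should be fixed so that this gain is positive (refinement increases information). In terms of extents it equals the entropy reduction $H(Y\mid G^{i-1})-H(Y\mid G^{i})$.

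Next I would sandwich the network information between the lattice quantities:
\[
\bigl|I_{\mathcal D}(f_j(X);Y) - I(G^{i}(Y);Y)\bigr| \le \Delta_{\mathrm{align}}(\epsilon).
\]
This is proved in two directions.

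\begin{itemize}
\item \emph{Lower direction.} By the data processing inequality applied to the chain $Y \to X \to f_j(X)\to \hat G_i(x)$ built from the iterative-refinement outputs $\hat s_j$,
\[
I(f_j(X);Y)\ge I(\hat G_i(X);Y).
\]
\item \emph{Upper direction.} Supervision at layer $j$ uses only the group targets. I would take this as the modelling assumption that $f_j$ retains no class information beyond what its supervisory group carries up to the alignment slack, in line with the information-bottleneck premise introduced before the theorem.
\end{itemize}

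Then I would convert bounded loss into closeness of $\hat G_i$ and $G^i$. The excess BCE group loss is at most $\epsilon$. By Pinsker's inequality, the total variation between the predicted group indicators and the true ones is $O(\sqrt{\epsilon})$, so $\Pr(\hat G_i\neq G^i)\le O(\sqrt\epsilon)$. Fano's inequality (or a continuity bound for entropy) over an alphabet of size at most $2^{|G|}$, or $|G|$ when restricted to realized extents, then gives
\[
|I(\hat G_i;Y)-I(G^i;Y)| \le O\bigl(\sqrt{\epsilon\log|G|}\bigr).
\]
Replacing population mutual information by the empirical $I_{\mathcal D}$ over $N$ samples adds a standard plug-in concentration term of order $O(N^{-1/2})$ for finite alphabets (for example by McDiarmid's inequality), which completes $\Delta_{\mathrm{align}}(\epsilon)$.

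Finally, I would apply the sandwich at layer $j$ (level $i$) from below and at the previously supervised layer $j-1$ (level $i-1$) from above, and subtract:
\begin{align*}
I_{\mathcal D}(f_j;Y)-I_{\mathcal D}(f_{j-1};Y) &\ge \bigl[I(G^i;Y)-\Delta_{\mathrm{align}}\bigr]-\bigl[I(G^{i-1};Y)+\Delta_{\mathrm{align}}\bigr]\\
&=\Delta^{(i)}_{\mathrm{lattice}}-2\Delta_{\mathrm{align}}(\epsilon).
\end{align*}

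I expect the upper direction of the sandwich, bounding $I(f_{j-1}(X);Y)$ from above by the lattice information at level $i-1$, to be the main obstacle. Data processing only gives lower bounds, so a generic feature map could carry extra class information. My first attempt would be to measure information through the supervised readout, that is, define the layer's information via its concept-bottleneck output $q_{j-1}(f_{j-1}(X))$ rather than the raw features. Because the classifier $p_{j-1}$ is trained only against $G^{i-1}$, its information about $Y$ beyond the group can be bounded by the same $\epsilon$-to-TV argument, giving an upper bound symmetric to the lower one. If that fails, I would state the bound as an explicit bottleneck assumption.
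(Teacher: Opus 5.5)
\paragraph{Main gap: the upper half of the sandwich.}
Your skeleton matches the paper's proof: a level-wise lattice information, a two-sided alignment $|I_{\mathcal D}(f_j(X);Y)-I^{(i)}_{\mathrm{lattice}}|\le\Delta_{\mathrm{align}}(\epsilon)$ at layers $j$ and $j-1$, and then subtraction. The gap is the one you flag: the upper half of the sandwich at layer $j-1$. It cannot be obtained from the loss bound at all.

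A small group BCE only forces each logit to the correct side of the threshold. It places no constraint on how the features, or the continuous readouts $q_{j-1}(f_{j-1}(X))$ and $s_{j-1}(X)$, vary with $Y$ inside a group. This is the concept-leakage phenomenon the paper itself probes in Table~\ref{tab:leakage_hierarchies}.

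Under the empirical distribution the upper bound is simply false. If $f_{j-1}$ is injective on the $N$ training inputs, as continuous features generically are, then $Y$ is a function of $f_{j-1}(X)$ under $\mathcal D$. Hence $I_{\mathcal D}(f_{j-1}(X);Y)=H_{\mathcal D}(Y)$ for every $\epsilon$. If $f_j$ is injective too, the left side of the theorem is $0$, and the stated inequality fails once $2\Delta_{\mathrm{align}}(\epsilon)<\Delta^{(i)}_{\mathrm{lattice}}$.

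So your first fallback does not work. The $\epsilon$-to-error argument bounds only the information in the thresholded set $\hat G_{i-1}$, which is a quantized readout, not $f_{j-1}(X)$ or $q_{j-1}(f_{j-1}(X))$. What remains is your second fallback: an explicit bottleneck hypothesis of the form $I_{\mathcal D}(f_{j-1}(X);Y)\le I^{(i-1)}_{\mathrm{lattice}}+\Delta_{\mathrm{align}}(\epsilon)$.

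The paper does not do better here. Its Step 4 derives only $H(Y\mid f_j(x))\le(1-\zeta)\log m+H_{\mathrm{bin}}(\zeta)+\zeta\log n$, i.e.\ the lower side, and then writes the alignment with $\pm$. The same hypothesis is therefore used tacitly. On the provable side, your route is more careful than the paper's Lemmas 1--3 and Step 4. You use data processing to an $f_j$-measurable thresholded readout, an error bound, then Fano. The paper instead treats the model's sigmoid mass $\zeta$ as if it were the posterior of $Y$ given $f_j(x)$.

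\paragraph{Three smaller points.}
First, nonnegativity of your $\Delta^{(i)}_{\mathrm{lattice}}=H(G^{i}(Y))-H(G^{i-1}(Y))$ does not follow from $G^{i}(y)\subseteq G^{i-1}(y)$. The groups are unions of overlapping extents, so $G^{i-1}(Y)$ need not be a function of $G^{i}(Y)$. For a counterexample, take extents $\emptyset$, the singletons, the pairs $\{a_1,a_2\},\{r,r'\},\{p_1,q_1\},\{p_2,q_2\}$ (one level), the sets $\{a_1,a_2,r,r'\},\{a_1,p_1,q_1\},\{a_2,p_2,q_2\}$ (the next, more general level), and the top. This family is intersection-closed, hence a concept lattice. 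Inclusion holds, yet $a_1,a_2$ share a level-$i$ group while having different level-$(i-1)$ groups. For uniform $Y$, $H(G^{i}(Y))=2<2.25=H(G^{i-1}(Y))$ bits. This does not break the subtraction, but your ``gain'' can be negative. The paper's $I^{(i)}_{\mathrm{lattice}}=H(Y)-\mathbb{E}_y\log|G^{i}(y)|$ is monotone directly from inclusion.

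Second, $\hat s_j=s_j\cdot\sigma(s_{j-1})$ depends on $f_{j-1}(X)$, so $Y\to X\to f_j(X)\to\hat G_i$ is not a Markov chain as written. You can repair this: $\sigma(\hat s_j[c])\ge\tfrac12$ iff $s_j[c]\ge 0$, since the multiplier is positive. This is the same passage to raw logits as in the paper's Lemma 3.

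Third, Pinsker costs you a square root. Any misclassified class adds at least $\log 2$ to the BCE, so Markov gives $\Pr(\hat G_i\neq G^{i})=O(\epsilon)$. Fano then gives $O(\epsilon\log|G|+\epsilon\log(1/\epsilon))$, which lies within $O(\sqrt{\epsilon\log|G|})$ once $\epsilon\log|G|\le 1$. Your $O(\sqrt{\epsilon})$ error rate only yields $O(\sqrt{\epsilon}\,\log|G|)$.
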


The formality property (Eqn \ref{eqn:formality}) guarantees that each attribute set $M_i$ is both informationally complete and parsimonious for its associated class-group structure. In contrast, random concept selection (Thm \ref{theorem:1}) breaks this optimality, resulting in ordering violations and unwarranted information loss.

\vspace{-6pt}
\section{Experiments}
\label{sec:exps}
\vspace{-4pt}
\textbf{Datasets:} We study our approach on three widely used benchmark datasets in concept-based learning: ImageNet100 \cite{ILSVRC15}, AwA2 \cite{awa2} and CIFAR100 \cite{Krizhevsky2009LearningML}. 
AwA2 is an expert-annotated dataset with class-level attributes, while for ImageNet100 and CIFAR100, we acquire class-level attribute annotations as in \cite{oikarinen2023labelfree} using an LLM. We use these benchmark datasets as they capture conceptual diversity among them. More dataset details are in Appendix \ref{appendix_sec:dataset_details}.

\vspace{0.07mm}
\noindent \textbf{Baselines:} We compare our approach with ten concept-based learning models: (1) Vanilla CBMs \cite{cbms}, (2) MLPCBMs (an extension of vanilla CBMs), (3) Posthoc CBMs \cite{yuksekgonul2023posthoc}, (4) Label-free CBMs \cite{oikarinen2023labelfree}, (5) Concept Embedding Models \cite{cems}, (6) Language in a Bottle \cite{LaBo}, (7) Stochastic CBMs \cite{scbms}, (8) Probabilistic CBMs \cite{Kim2023ProbabilisticCB}, (9) Coarse-to-Fine CBMs \cite{panousis2024coarsetofine}, and (10) Hybrid CBMs \cite{liu2025hybrid}. 
These models represent different flavors of concept-based approaches with strong performance. All models, including ours, are ResNet-based.

\vspace{0.07mm}
\noindent \textbf{Metrics:} 
In addition to classification accuracy, we evaluate the semantic quality of learned embeddings across network depth using clustering-based metrics.
After training, we apply $k$-means clustering with $n$ centers (= number of classes) to the feature embeddings at the end of each backbone block. We evaluate clusters using (i) Cluster Impurity (CI), using the Gini index \cite{gini}, which captures class heterogeneity within clusters, and (ii) Cluster Compactness, using the Davies--Bouldin Index (DBI) \cite{dbi}, which quantifies intra-cluster compactness and inter-cluster separation. Lower values of CI and DBI indicate more semantically structured representations.

\vspace{0.07mm}
\noindent \textbf{Results:} Our results over all metrics (\textit{Test Accuracy, CI, DBI}) are reported for all baselines and our method in Table \ref{tab:main_table}. FoCA CBM-N (N=Naive) is a naive variant of our method where attribute sets from multiple lattice levels are added as consecutive linear layers after the backbone. 
This is then trained like a Vanilla CBM with multiple attribute layers, followed by a classifier (one can view this as akin to a hierarchical CBM, with attribute sets constructed using our FCA lattice).
The \textit{CI} and \textit{DBI} metrics give us a score per block of each model; we report the mean across all blocks.
We see that our models consistently learn more meaningful embeddings, outperforming all baselines across datasets in terms of \textit{CI} and \textit{DBI}. In terms of accuracy, our models are consistently competitive across datasets.

\begin{figure*}
    \centering
    \includegraphics[width=0.9\linewidth]{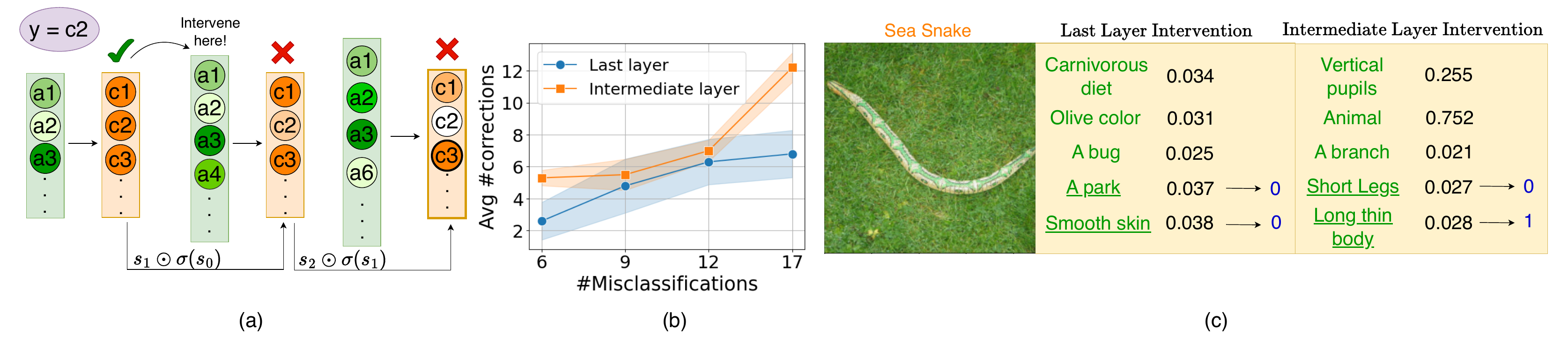}
    \vspace{-6pt}
    \caption{(a) Severity of misclassification informs which level of attributes to intervene on. Ground truth class for current input is $c2$, which is not in prediction set at layer 2; we hence intervene at 2nd layer; (b) Comparison of average number of corrections on interventions at last attribute layer (blue) and at appropriate intermediate layer on Imagenet100 (orange). Intervening at apt level improves performance; (c) Example attributes intervened on at last and intermediate layers; note that intermediate layers have more general attributes.}
    \vspace{-10pt}
    \label{fig:intervention_results}
\end{figure*}

\vspace{0.07mm}
\noindent \textbf{Performance on ViT Backbones:} Existing CBM methods largely focus on CNN architectures. To expand on this, we also studied our approach on ViT backbones, aligning appropriate levels from our lattice to intermediate blocks of a ViT architecture. We then compared our method (FoCA ViT) with a ViT CBM (CBM trained with a ViT backbone) on the CIFAR100 dataset. FoCA ViT outperformed the ViT CBM on all our metrics: 86.65 {\tiny $\pm$ 0.295} vs 84.49 {\tiny $\pm$ 0.547} test accuracy, 0.755 {\tiny $\pm$ 0.004} vs 0.774 {\tiny $\pm$ 0.016} CI and 1.983 {\tiny $\pm$ 0.021} vs 2.237 {\tiny $\pm$ 0.006} DBI, all averaged over three seeds (more baselines in Appendix \ref{appendix_sec:more_analysis}). 

\vspace{-8pt}
\section{Analysis}\label{sec:analysis}
\vspace{-4pt}

\noindent \textbf{Considering Alternative Hierarchies:} 
Our framework is not restricted to FCA-derived hierarchies. Any hierarchical structure satisfying the subset-superset ordering property can be used to guide 
network learning. To demonstrate this generality, we propose an 
alternative LLM-based hierarchy construction method:
An LLM (GPT4) is provided with the class-attribute incidence matrix and is prompted to generate attribute sets that satisfy the subset-superset relation, along with a group of classes associated with each class. 
This method produces a hierarchy with the required ordering properties and can be integrated into our framework.
Table \ref{tab:leakage_hierarchies} (top value in each row) shows the competitive performance of the GPT4-based model, demonstrating our method's generalizability. 
That said, we observe that FCA-based hierarchies are superior in terms of \textit{concept leakage}. To quantify this, we randomly \textit{turn off} some attributes ($1\%$) in both models and measure the resulting drop in test accuracy (standard test to study concept leakage in CBMs). 
We observe that the model based on GPT4 hierarchy shows minimal change, indicating significant concept leakage, while our FCA-based model shows strong concept compactness. 





\newcommand{\error}[1]{\textsubscript{\tiny$\pm#1$}}

\begin{table}[t]
\setlength{\tabcolsep}{8pt}
\centering
\small
\caption{Comparison with LLM (GPT4)-based hierarchy; both models show strong performance. However, turning off some attributes shows significant concept leakage in GPT4-based model, while FCA-based model shows strong concept compactness.}
\label{tab:leakage_hierarchies}
\vspace{-4pt}
\begin{tabular}{cccc}
\toprule
\textbf{Hierarchy} & \textbf{ImageNet100} & \textbf{AwA2} & \textbf{CIFAR100} \\
\midrule
\multirow{3}{*}{\textbf{FCA}} &
\textbf{$91.88$\error{0.35}} &
$92.13$\error{0.28} &
$79.47$\error{0.20} \\

& \textcolor{blue}{$\downarrow$} &
  \textcolor{blue}{$\downarrow$} &
  \textcolor{blue}{$\downarrow$} \\

& \textcolor{blue}{$60.57$\error{0.03}} &
  \textcolor{blue}{$91.51$\error{0.29}} &
  \textcolor{blue}{$51.06$\error{0.06}} \\
\midrule
\multirow{3}{*}{\textbf{GPT4}} &
$91.47$\error{0.41} &
\textbf{$92.67$\error{0.10}} &
\textbf{$80.35$\error{0.07}} \\

& \textcolor{blue}{$\downarrow$} &
  \textcolor{blue}{$\downarrow$} &
  \textcolor{blue}{$\downarrow$} \\

& \textcolor{blue}{\textbf{$89.20$\error{0.40}}} &
  \textcolor{blue}{$92.30$\error{0.39}} &
  \textcolor{blue}{$76.16$\error{0.08}} \\
\bottomrule
\end{tabular}
\vspace{-15pt}
\end{table}

\noindent \textbf{Multi-Level Interventions:}
A key advantage of CBMs is their support for test-time interventions that probe concept-to-class mappings. While standard approaches apply random interventions at the final concept layer, our framework enables \emph{level-aware} interventions aligned with semantic granularity. Intuitively, coarse misclassifications (e.g., \textit{dog} vs. \textit{elephant}) require intervention on general attributes, whereas fine-grained confusions (e.g., \textit{dog} vs.\textit{cat}) require more specific corrections.
We quantify misclassification severity by identifying the deepest layer at which the ground-truth class remains in the predicted class group (Fig \ref{fig:intervention_results}(a)). Due to iterative refinement, classes eliminated at a given level rarely reappear, making this layer a natural intervention point. We intervene at the corresponding attribute layer by randomly modifying $k$ attributes and propagating the updated activations forward, and compare this to interventions applied only at the final layer.
As shown in Fig \ref{fig:intervention_results}(b), level-aware interventions consistently outperform final-layer interventions across four random sets of misclassifications on ImageNet100 (averaged over 10 trials of 20 interventions per sample). Fig \ref{fig:intervention_results}(c) illustrates that earlier layers predominantly involve general attributes, whereas final-layer interventions mix general and fine-grained concepts. 
\begin{wrapfigure}[8]{r}{0.29\textwidth}
    \vspace{-8pt}
    \centering
\includegraphics[width=\linewidth]
{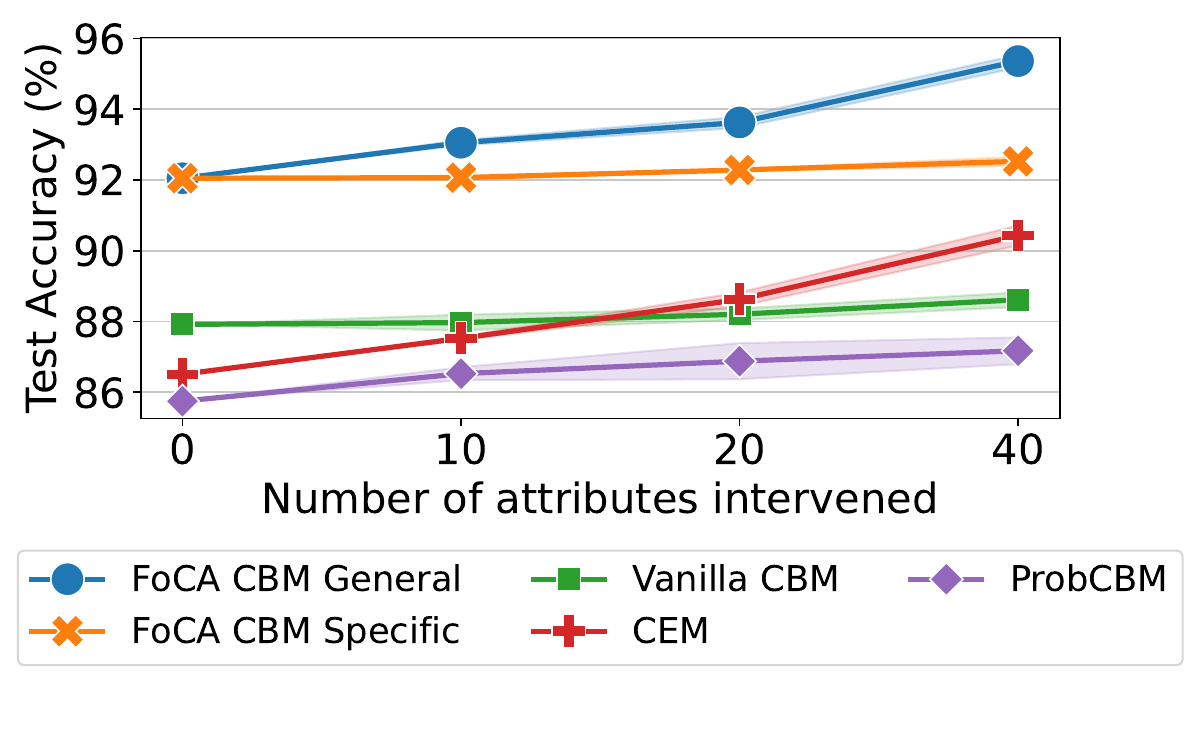}
\end{wrapfigure}
Additionally, we evaluate our models intervention effectiveness with respect to random oracle interventions over the whole test set and find that our general-layer interventions yield the highest accuracy gains per oracle intervention, as is shown in the oracle interventions vs accuracy gains curve to the right. Together, these results show that semantic scaffolding enables effective interventions at multiple abstraction levels.

\begin{figure*}[t]
    \centering
    \includegraphics[width=\linewidth]{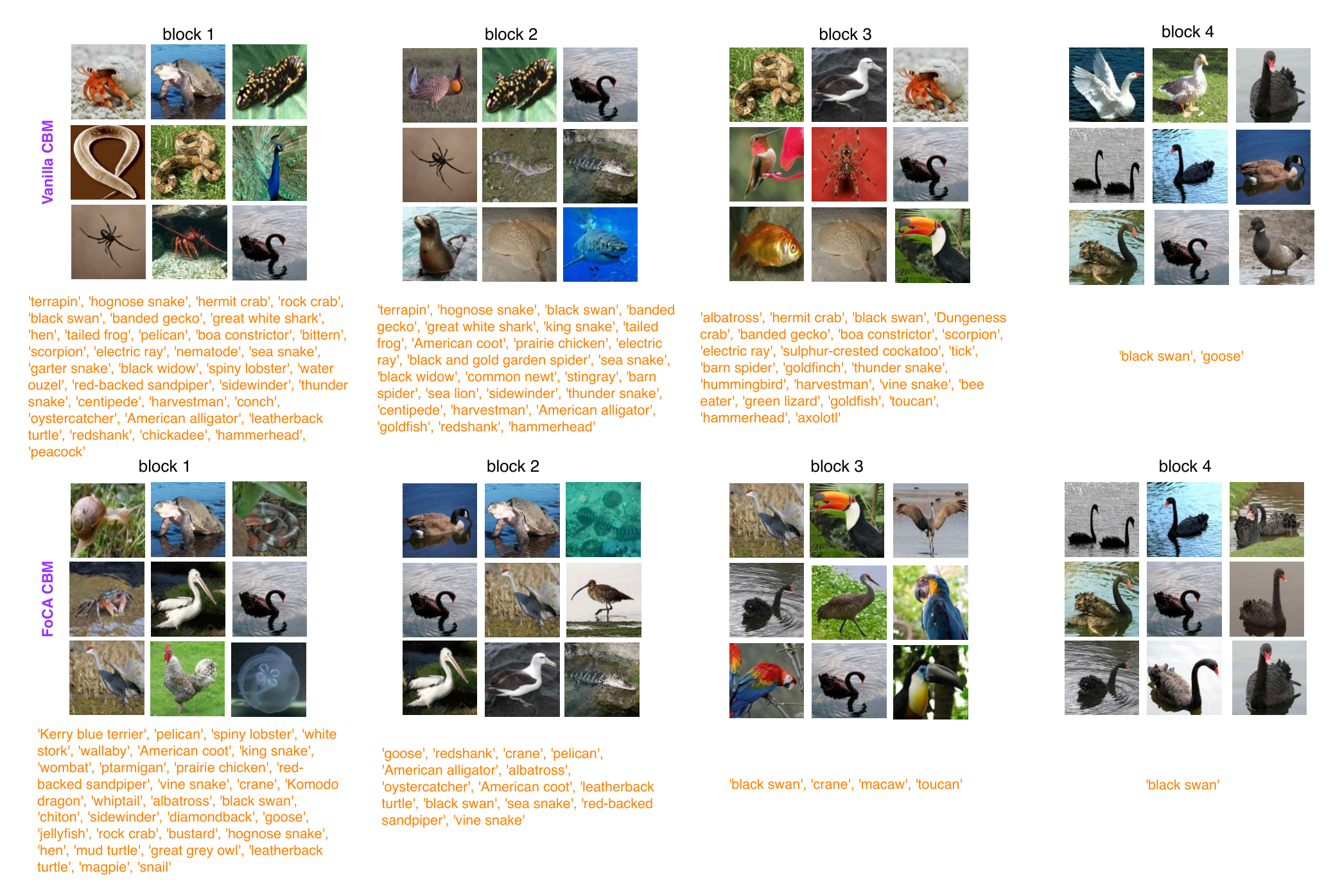}
    \vspace{-16pt}
    \caption{\textbf{Visualizing some class clusters at different blocks}. Classes from the clusters of a Vanilla CBM are mixed up across blocks, whereas a FoCA CBM gradually telescopes relevant concepts.}
    \vspace{-15pt}
    \label{fig:cluster_qualitative}
\end{figure*}

\vspace{-2pt}
\noindent \textbf{Scaling and Practicality:}
Formal concept lattices are constructed as a one-time offline preprocessing step,
\begin{wraptable}[8]{r}{0.3\textwidth}
\vspace{-1em}
\centering
\scriptsize
\setlength{\tabcolsep}{2pt}
\caption{Computational cost (in Giga Floating Point Operations) of different models on all three datasets.}
\vspace{-4pt}
\label{tab:fpops}
\begin{tabular}{cccc}
\toprule
\textbf{Model} & \textbf{ImageNet100} & \textbf{AwA2} & \textbf{CIFAR100} \\
\midrule
\textbf{CBM} & 8.21G & 3.64G & 8.21G \\
\textbf{CEM} & 8.259G & 3.64G & 8.26G \\
\textbf{SCBM} & 17.44G & 7.28G & 17.44G \\
\textbf{FoCA CBM} & 8.21G & 3.64G & 8.21G \\
\bottomrule
\end{tabular}
\vspace{-1em}
\end{wraptable}
taking approximately 4.5 seconds for ImageNet100, 37 seconds for AwA2, and 2 seconds for CIFAR100. The worst-case complexity of lattice construction is  $O(|\mathcal{L}|\cdot |G|^2 \cdot |M|)$, where $|\mathcal{L}|$ is the number of formal concepts; however, real-world class--attribute relations are typically sparse (fill ratio $<0.1$), resulting in near-quadratic growth in practice (Table \ref{tab:lattice-details}). Table \ref{tab:fpops} reports the computational cost (in FLOPs) of FoCA CBMs and representative baselines. Despite introducing intermediate supervision, FoCA CBMs incur computational costs comparable to standard CBMs. Prior work has also shown that formal concepts remain stable under incremental updates \cite{kuznetsov2007stability}, supporting the applicability of FCA-based hierarchies in dynamic settings.

\vspace{-2pt}
\noindent \textbf{Looking into the Clusters:} We examine the clusters from the blocks of a FoCA CBM and Vanilla CBM. As shown in Fig \ref{fig:cluster_qualitative}, the classes in a Vanilla CBM’s clusters only separate out into the group \textit{\{black swan, goose\}} at the last block, being quite noisy throughout. However in a FoCA CBM, we observe gradual semantic refinement. The first block captures a mixture of classes, the second block refines it to classes that are associated with water, the third block narrows this down to a set of birds and finally the last block has a separate cluster for \textit{\{black swan\}}.

Further analysis, ablations, and more results including interventions, impact of iterative refinement, lattice and position choices are provided in Appendix \ref{appendix_sec:ablations}, \ref{appendix_sec:more_analysis}.
\vspace{-8pt}
\section{Conclusion}
\label{sec:conclusion}
\vspace{-6pt}
In this work, we examine how concept-based learning models can respect hierarchical structure across a network's depth.
Rather than learning concepts as a flat set at a single stage, we propose guiding representation learning using a structured semantic hierarchy derived from Formal Concept Analysis. The resulting formal concept lattice provides principled supervision points that align semantic granularity with network depth, enabling representations to evolve naturally from general to specific.
Our approach improves interpretability by exposing meaningful intermediate concepts and enhances learning through gradual semantic refinement, as reflected in improved clustering quality. We further show that multi-level interventions are both feasible and more effective than standard single-layer alternatives through our approach. We believe this framework brings concept-based models closer to human semantic reasoning, and opens avenues for weakly supervised hierarchies and extensions to other modalities.

\noindent \textbf{Acknowledgments: }
Deepika SN Vemuri and Krishn Vishwas Kher would like
to thank the MoE Prime Minister's Research Fellowship (\href{https://pmrf.in/}{PMRF}) for the fellowship support. We thank the anonymous reviewers for their helpful
feedback in improving the presentation of the paper.

\vspace{-4pt}
\section*{Impact Statement}
\vspace{-4pt}
Our work advances the fields of Concept-Based Learning and Interpretability by establishing a principled method for organizing semantic concepts in neural networks. The hierarchical structure we introduce enables multi-level interpretability, where users can interact with models at varying levels of abstraction. We believe that this has significant implications for high-stakes domains like healthcare and medical diagnostics where the operators can verify that a model's reasoning respects the appropriate diagnostic hierarchies (e.g. identifying organs before specific pathologies). The multi-level intervention capability enables more nuanced interactions with these models. Users can now target corrections at the appropriate level of semantic granularity - a broad category mistake or finer-grained mistake. By formalizing the connection between Formal Concept Analysis and Deep Learning, our work bridges communities that have traditionally worked in isolation. We believe that this cross-pollination could inspire new research directions leading to models that better align with human cognitive structures.

\bibliography{main}
\bibliographystyle{icml2026}

\newpage
\appendix
\onecolumn
\clearpage
\setcounter{page}{0}

\renewcommand{\thesection}{A\arabic{section}}
\renewcommand{\thetable}{A\arabic{table}}
\renewcommand{\thefigure}{A\arabic{figure}}
\newtheorem*{theorem*}{Theorem}

{\Large \textbf{Appendix}}\\
\thispagestyle{empty}
 \noindent In this appendix, we provide additional details of our work, including the following information. 

\section*{Table of Contents}
\addcontentsline{toc}{section}{Table of Contents}
\begin{enumerate}
    \item[A1] {\textbf{Notation}} \dotfill 1 
    
    \item[A2] {\textbf{Ablations}} \\
    {\textit{Ablation on Iterative Refinement}} \dotfill 1 \\
    {\textit{Ablation on Class-Cluster Density}} \dotfill 1 \\
    
    \item[A3]{\textbf{Proofs and Further Theoretical Analysis}} \dotfill 2 
    
    \item[A4] {\textbf{More Analysis}} 
    
    {\textit{Impact of Class-Ordering}} \dotfill 8 \\
    {\textit{Impact of CBM Training Mode}} \dotfill 8 \\
    {\textit{Choice of $\alpha$ and $\beta$}} \dotfill 9 \\
    {\textit{More Cluster-Based Analysis}} \dotfill 9 \\
    {\textit{More Results on ViT-based backbones}} \dotfill 9
    
    \item[A5] {\textbf{Dataset Details}} \dotfill 10
    \item[A6] {\textbf{Lattice Details}} \dotfill 11
    \item[A7] {\textbf{Implementation Details}} 

    {\textit{Evaluation Metric Details}} \dotfill 12 \\
    {\textit{Model Details}} \dotfill 12 \\
    {\textit{Computational Complexity}} \dotfill 13 \\
    {\textit{Hyperparameter Details}} \dotfill 13 \\
    {\textit{GPT-Hierarchy Details}} \dotfill 14
    
    \item[A8] {\textbf{Limitations}} \dotfill 15
\end{enumerate}
\vfill

\newpage
\section{Notation}
\begin{table}[H]
\vfill
\centering
\label{tab:notation}
\begin{tabular}{ll}
\toprule
\textbf{Symbol} & \textbf{Description} \\
\midrule
$\langle G, M , I \rangle$ & Formal context of the set of objects $G$, the set of attributes $M$, with the incidence relation $I$ \\
$\langle A, B \rangle$ & Formal concept with the extent $A \subseteq G$ and the intent $B \subseteq M$ \\
$\uparrow, \downarrow$ & Concept-forming operators \\
$\preceq$ & Subset-superset ordering \\
$\mathcal{B}(G, M, I)$ & Set of all formal concepts corresponding to the formal context $\langle G, M , I \rangle$ \\
$\mathcal{L}$ & Formal concept lattice, $\mathcal{L} = \langle \mathcal{B}(G, M, I) , \preceq \rangle$ \\
$L$ & Number of levels in the lattice \\
$\mathcal{L}_1, \ldots, \mathcal{L}_L$ & Set of formal concepts at each level of the lattice \\
$M_1, \ldots, M_L$ & Attribute set at each level of the lattice \\
\midrule
$n$ & Total number of classes \\
$\sigma$ & Sigmoid \\
$d_j$ & Dimensionality of the downsampled feature space at selected position $j$ in the backbone network \\
$f_j$ & Global average pooling output at semantic layer $j$ for input $x$, $f_j : \mathcal{X} \rightarrow \mathbb{R}^{d_j}$ \\
$q_j$ & Concept encoder for semantic layer $j$ corresponding to lattice layer $i$, $ q_j: \mathbb{R}^{d_j} \rightarrow \mathbb{R}^{|M_i|}$ \\
$p_j$ & Classifier for semantic layer $j$ corresponding to lattice layer $i$, $p_j: \mathbb{R}^{|M_i|} \rightarrow \mathbb{R}^n$ \\
$s_j$ & Classifier output at semantic layer $j$ for input $x$, $s_j : \mathcal{X} \rightarrow \mathbb{R}^{n} = p_j \circ q_j \circ f_j$ \\
$\hat{s}_j$ & Post-iterative refinement output at semantic layer $j$ for input $x$, $\hat{s}_j : \mathcal{X} \rightarrow \mathbb{R}^{n} = s_j \cdot (\sigma \circ s_{j-1})$ \\
$a_j$ & Attribute sigmoid output at semantic layer $j$ for input $x$, $a_j : \mathcal{X} \rightarrow \mathbb{R}^{n} = \sigma \circ q_j \circ f_j$ \\
\midrule
$\mathbf{C}$ & Complete ground truth attribute vector for sample $x$ \\
$\mathbf{C}^{i}$ & Ground truth attribute vector for sample $x$ at lattice level $i$ \\
$G_{y}^{i}$ & Class group of sample $x$ with class label $y$ at lattice level $i$ \\
$H(\cdot)$ & Entropy \\
$I_\mathcal{D}(\cdot ; \cdot)$ & Mutual information for dataset $\mathcal{D}$ \\
\bottomrule
\end{tabular}
\vfill
\end{table}


\section{Ablations}
\label{appendix_sec:ablations}
\vspace{1.5mm}
\noindent \textbf{Ablation on Iterative Refinement:} 
We study the impact of \textit{iterative refinement} and perform an ablation with and without it on our method. We see a consistent drop in accuracy across datasets without iterative refinement, with consistent increases in \textit{CI} and \textit{DBI} as well, on almost all datasets. We report these results in Tables \ref{tab:ablation_iterative_refinement_cifar100}, \ref{tab:ablation_iterative_refinement_inet100} and \ref{tab:ablation_iterative_refinement_awa2}. These results highlight the usefulness of our class group-based refinement strategy.

\begin{table}[h]
\centering
\begin{small}
\caption{Our group-based iterative refinement strategy improves on all metrics on CIFAR100.} 
\label{tab:ablation_iterative_refinement_cifar100}
\begin{tabular}{c ccc}
\toprule
 \multicolumn{4}{c}{CIFAR100}        \\ 
 \cmidrule(l){2-4} 
    & Acc $\uparrow$ & CI $\downarrow$ & DBI $\downarrow$
     \\ \midrule 
With & 79.7 & 0.6221 & 1.8558 \\
Without & 77.19 & 0.6612 & 1.8762 \\ 
\bottomrule
\end{tabular}
\end{small}
\end{table}

\begin{table}[h]
\centering
\begin{small}
\caption{Our group-based iterative refinement strategy improves on all metrics on ImageNet100.} 
\label{tab:ablation_iterative_refinement_inet100}
\begin{tabular}{c ccc}
\toprule
 \multicolumn{4}{c}{ImageNet100}        \\ 
 \cmidrule(l){2-4} 
    & Acc $\uparrow$ & CI $\downarrow$ & DBI $\downarrow$
     \\ \midrule 
With & 91.92 & 0.5737 & 1.8623 \\
Without & 88.19 & 0.6162 & 2.0639 \\ 
\bottomrule
\end{tabular}
\end{small}
\end{table}

\begin{table}[h]
\centering
\begin{small}
\caption{Our group-based iterative refinement strategy improves on almost all metrics on AwA2, with the CI scores being in the same range.} 
\label{tab:ablation_iterative_refinement_awa2}
\begin{tabular}{c ccc}
\toprule
 \multicolumn{4}{c}{AwA2}        \\ 
 \cmidrule(l){2-4} 
    & Acc $\uparrow$ & CI $\downarrow$ & DBI $\downarrow$
     \\ \midrule 
With & 92.20 & 0.5288 & 1.932 \\
Without & 91.85 & 0.5231 & 2.017 \\
\bottomrule
\end{tabular}
\end{small}
\end{table}

\noindent \textbf{Ablation on Class-Cluster Density:} We study the impact of class-cluster-density-based backbone position selection and lattice level selection. In order to isolate the impacts, we try both (1) fixing the backbone position and using non class-cluster-density-based lattice level selection, and (2) fixing the lattice level and using non class-cluster-density-based backbone position selection. We use two semantic layers for these experiments and provide the results in Tables \ref{tab:backbone_pos_ablation} and \ref{tab:backbone_level_ablation}. The results indicate how different levels carry different amounts of information.

\begin{table}[H]
\centering
\begin{small}
\caption{Accuracy on varying the semantic layers position in FoCA CBM models on all datasets. Models have the following naming scheme: $\texttt{FoCA\_<levels>\_<positions>}$. For example, FoCA\_(2, 1)\_(3, 4) indicates that the attributes are extracted from the 2nd and 1st levels in the lattice, and are used to supervise the 3rd and 4th blocks in the ResNet. clf0 and clf1 correspond to the two classifiers from the two semantic layers.}
\label{tab:backbone_pos_ablation}
\begin{tabular}{c cc c cc c cc}
\toprule
\multirow{2}{*}{Model} & \multicolumn{2}{c}{AwA2}         & \multirow{2}{*}{Model} & \multicolumn{2}{c}{CIFAR100}    & \multirow{2}{*}{Model} & \multicolumn{2}{c}{ImageNet100} \\ \cmidrule(l){2-3} \cmidrule(l){5-6} \cmidrule(l){8-9}
    & clf0 & clf1 & & clf0 & clf1 & & clf0 & clf1 \\ \midrule 
FoCA\_(2, 1)\_(2, 4) & 85.61 & 89.88 & FoCA\_(2, 1)\_(2, 4) & 92.41 & 68.89 & FoCA\_(5,1)\_(2, 4) & 79.56 & 90.69 \\
FoCA\_(2, 1)\_(3, 4) & 87.90 & 92.20 & FoCA\_(2, 1)\_(3, 4) & 94.30 & 79.70 & FoCA\_(5,1)\_(3, 4) & 96.26 & 91.92 \\  \bottomrule
\end{tabular}
\end{small}
\end{table}

\begin{table}[H]
\centering
\begin{small}
\caption{Accuracy on varying the levels chosen from the lattice in FoCA CBM models on all datasets.Models have the following naming scheme: \texttt{FoCA\_<levels>\_<positions>}. For example, FoCA\_(3, 1)\_(3, 4) indicates that the attributes are extracted from the 3rd and 1st levels in the lattice, and are used to supervise the 3rd and 4th blocks in the ResNet. clf0 and clf1 correspond to the two classifiers from the two semantic layers.}
\label{tab:backbone_level_ablation}
\begin{tabular}{c cc c cc c cc}
\toprule
\multirow{2}{*}{Model} & \multicolumn{2}{c}{AwA2}         & \multirow{2}{*}{Model} & \multicolumn{2}{c}{CIFAR100}    & \multirow{2}{*}{Model} & \multicolumn{2}{c}{ImageNet100} \\ \cmidrule(l){2-3} \cmidrule(l){5-6} \cmidrule(l){8-9}
    & clf0 & clf1 & & clf0 & clf1 & & clf0 & clf1 \\ \midrule 
FoCA\_(3, 1)\_(3, 4) & 87.90 & 92.20 & FoCA\_(2, 1)\_(3, 4) & 94.30 & 79.70 & FoCA\_(3,1)\_(3, 4) & 91.80 & 88.30 \\
FoCA\_(4, 1)\_(3, 4) & 95.40 & 91.61 & FoCA\_(3, 1)\_(3, 4) & 92.16 & 78.44 & FoCA\_(5,1)\_(3, 4) & 96.26 & 91.92 \\
FoCA\_(6, 1)\_(3, 4) & 98.27 & 91.77 & FoCA\_(5, 1)\_(3, 4) & 89.26 & 73.91 & FoCA\_(7,1)\_(3, 4) & 93.98 & 91.00 \\  \bottomrule
\end{tabular}
\end{small}
\end{table}

\begin{algorithm*}[t]
\caption{Training Lattice-Guided Concept-Based Models}
\label{alg:foca-training}
\begin{algorithmic}[1]
\Require Dataset $\mathcal{D} = \{(x, c, y)\}$, class-attribute annotations $(G, M, I)$, base network $f$, number of supervision points $l$, loss weights $\alpha, \beta$

\State $\mathcal{L} \gets \textsc{ConstructLattice}(G, M, I)$ \Comment{Build concept lattice}

\For{$i = 1$ to $l$}
    \State $\mathcal{M}_i \gets \bigcup_{\langle A, B \rangle \in \mathcal{L}_i} B$ \Comment{Extract attribute set from level $i$ of the lattice $\mathcal{L}$}
\EndFor

\State $\{j_1, \dots, j_l\} ,\{\mathcal{L}_1, \dots, \mathcal{L}_l\} \gets \textsc{SelectLayersAndLevels}(f, \mathcal{D}, \mathcal{L}, l)$ \Comment{Select supervision points in network and lattice levels using class cluster density}

\State Initialize concept encoders $\{q_1, \dots, q_l\}$ and classifiers $\{p_1, \dots, p_l\}$

\For{epoch $= 1$ to $N$}
    \For{each $(x, c, y) \in \mathcal{D}$}
        \State $\hat{y}_0 \gets \infty$ \Comment{Initialize with all classes possible}
        \State $\ell_{\text{attr}} \gets 0$, $\ell_{\text{group}} \gets 0$, $\ell_{\text{class}} \gets 0$
        
        \For{$i = 1$ to $l$}
            \State $h_i \gets f_{\leq j_i}(x)$ \Comment{Forward through layer $j_i$}
            \State $\hat{c}_i \gets q_i(h_i)$ \Comment{Predict concepts from $M_i$}
            \State $\hat{y}_i \gets p_i(\hat{c}_i)$ \Comment{Predict classes}
            
            \State ${C}_y^i \gets [c[\mathcal{M}_i]]$ \Comment{Ground truth concepts for level $i$ by selecting the concepts from $c$ that are present in $\mathcal{M}_i$}
            \State $G_y^i \gets \bigcup_{\langle A, B \rangle \in \mathcal{L}_i, y \in A} A$ \Comment{Ground truth group}
            
            \State $\ell_{\text{attr}} \gets \ell_{\text{attr}} + \text{BCE}(\sigma(\hat{c}_i), {C}_y^i)$ \Comment{Concept loss}
            \State $\hat{y}_i \gets \hat{y}_i \odot \sigma(\hat{y}_{i-1})$ \Comment{Iterative refinement}
            \If{$i < l$}
                \State $\ell_{\text{group}} \gets \ell_{\text{group}} + \text{BCE}(\sigma(\hat{y}_i), G_y^i)$ \Comment{Group loss}
            \Else
                \State $\ell_{\text{class}} \gets \text{CE}(\textsc{softmax}(\hat{y}_i), y)$ \Comment{Final class loss}
            \EndIf
        \EndFor
        
        \State $\ell_{\text{total}} \gets \alpha \cdot \ell_{\text{attr}} + \beta \cdot \ell_{\text{group}} + \ell_{\text{class}}$
        \State Update parameters via backpropagation on $\ell_{\text{total}}$
    \EndFor
\EndFor

\State \Return Trained model with hierarchical concept structure
\end{algorithmic}
\end{algorithm*}

\section{Proofs and Further Theoretical Analysis}
\label{appendix_sec:theory}
In this section, we provide the proof of Theorem 1, along with a similar theorem for attributes.

\ordering*

\begin{proof}
    We provide mathematical details to the proof sketch described in Section \ref{sec:theory} of the main paper. We begin by proving the following simple lemma. 

\noindent\textbf{Step 1: Ground Truth Ordering.}
\begin{lemma}\label{lem:gt_order}
$G^{i+1}(y) \subseteq G^i(y)$ for all $i \in [L-1]$, $y \in G$.
\end{lemma}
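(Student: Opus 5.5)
The plan is to reduce the inclusion to a single structural property of how the levels $\mathcal{L}_1,\dots,\mathcal{L}_L$ are carved out of the lattice $\langle \mathcal{B}(G,M,I),\preceq\rangle$ of Definition~\ref{defn_lattice}. That property is: \emph{every formal concept at level $i+1$ has an upper neighbour (a cover in the Hasse diagram) at level $i$.} The inclusion $G^{i+1}(y)\subseteq G^i(y)$ then follows from the definition of class groups and the monotonicity of extents along $\preceq$.

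First I will pin down the level assignment the paper uses implicitly. Level $i$ consists of concepts at Hasse-diagram distance $i-1$ from the top concept $\langle G, G^{\uparrow}\rangle$. The argument works for either of the two natural readings, shortest-path depth or longest-chain depth. Let $\texttt{fc}=\langle A,B\rangle\in\mathcal{L}_{i+1}$ and take a path of the defining length from the top to $\texttt{fc}$. Its second-to-last vertex $\texttt{fc}'=\langle A',B'\rangle$ is an upper cover of $\texttt{fc}$, and it lies at depth exactly $i$:
\begin{itemize}
\item under the shortest-path reading, its depth is at most $i$ via the path prefix, and at least $i$ since otherwise $\texttt{fc}$ would have depth $\le i$;
\item under the longest-chain reading, its depth is at least $i$ via the prefix, and at most $i$ since otherwise $\texttt{fc}$ would have depth $>i+1$.
\end{itemize}
So $\texttt{fc}'\in\mathcal{L}_i$. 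This is the step I expect to be the only real obstacle: it depends on fixing the level convention precisely, and I will state the convention explicitly so that this "parent at the previous level" property is immediate.

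Next, since $\texttt{fc}\preceq\texttt{fc}'$, the order definition gives $A\subseteq A'$. Alternatively, $B\supseteq B'$ gives $A=B^{\downarrow}\subseteq B'^{\downarrow}=A'$ by antitonicity of $\downarrow$.

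Now take $g\in G^{i+1}(y)$. By Eq.~(4), there is some $\texttt{fc}=\langle A,B\rangle\in\mathcal{L}_{i+1}$ with $y\in A$ and $g\in A$. Choose its parent $\texttt{fc}'=\langle A',B'\rangle\in\mathcal{L}_i$ as above. Then $y\in A\subseteq A'$, so $\texttt{fc}'$ is one of the concepts indexed in the union defining $G^i(y)$. Also $g\in A\subseteq A'\subseteq G^i(y)$, which proves the inclusion.

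Finally I will handle the boundary case. Level $1$ is just the top concept, whose extent is all of $G$, so $G^1(y)=G$ and the claim for $i=1$ is trivial. I will also note that $G^{i}(y)$ is nonempty, and in fact contains $y$, whenever some concept at level $i$ has $y$ in its extent. By the same parent argument, this nonemptiness propagates upward once it holds at level $i+1$.
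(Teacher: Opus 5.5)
Your argument follows the same route as the paper's. Both take the level-$(i+1)$ concept whose extent contains $y$ and $c$, pass to a level-$i$ concept with a larger extent, and conclude $c\in G^{i}(y)$; the paper phrases this as a contradiction. The paper only asserts the existence of that level-$i$ superconcept ``by lattice structure,'' whereas you derive it from a depth convention, and that is where the one real problem lies. Your premise that depth measured from the top is the convention the paper ``uses implicitly'' is not right. Algorithm~\ref{alg:level} assigns levels bottom-up, by the length of the longest cover chain from the infimum, and Definition~\ref{defn_lattice} only numbers these levels from the top. Under that assignment an upper cover of a concept can sit two or more levels higher, so your parent step fails, and the lemma itself is false.

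Here is a minimal example. Take $G=\{g_1,g_2,g_3\}$ and $M=\{m_1,m_2,m_3\}$, where $g_1$ has only $m_1$, $g_2$ has $m_2$ and $m_3$, and $g_3$ has only $m_3$. The concepts $\langle\emptyset,M\rangle$, $\langle\{g_1\},\{m_1\}\rangle$, $\langle\{g_2\},\{m_2,m_3\}\rangle$, $\langle\{g_2,g_3\},\{m_3\}\rangle$, $\langle G,\emptyset\rangle$ form the pentagon $N_5$, with bottom-up heights $0,1,1,2,3$. Numbering from the top gives $\mathcal{L}_2=\{\langle\{g_2,g_3\},\{m_3\}\rangle\}$ and puts $\langle\{g_1\},\{m_1\}\rangle$ in $\mathcal{L}_3$. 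Its only upper cover is the top concept in $\mathcal{L}_1$, so $G^{3}(g_1)=\{g_1\}\not\subseteq\emptyset=G^{2}(g_1)$. Under either of your top-down conventions, $\langle\{g_1\},\{m_1\}\rangle$ lands in $\mathcal{L}_2$ instead, and the inclusion holds.

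So your proof is correct for a top-down level assignment, or more generally under the hypothesis that every concept in $\mathcal{L}_{i+1}$ has a superconcept in $\mathcal{L}_i$. You should state that hypothesis explicitly rather than attribute it to the paper. The paper's own ``by lattice structure'' step silently needs the same hypothesis, and its level-assignment algorithm does not guarantee it.
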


\begin{proof}
For sake of contradiction, suppose $\exists c \in G^{i+1}(y) \setminus G^i(y)$. Then $\exists \langle A, B \rangle \in \mathcal{L}_{i+1}$ with $c, y \in A$. By lattice structure, $\exists \langle C, D \rangle \in \mathcal{L}_i$ with $A \subseteq C$ (subconcept property). Thus $c \in C$ and $y \in C$, implying $C \subseteq G^i(y)$ by definition, giving $c \in G^i(y)$, which is absurd. Hence, the assertion in the lemma holds.
\end{proof}

\noindent\textbf{Step 2: Loss Penalty for Violations.}

Next, we examine how much extra loss is incurred by an ordering mismatch over a prediction that obeys the ground truth, to later help us derive a bound on the probability of ordering mismatches. For simplicity, we assume the $0-1$ loss, and we only consider the group loss for the rest of this proof. First, note that for a sample $(x, y)$ and class $c$, the per-layer loss is:
\begin{equation}
\ell_i^c = \begin{cases}
-\log \hat{y}_i^c & \text{if } c \in G^i(y) \\
-\log(1 - \hat{y}_i^c) & \text{if } c \notin G^i(y)
\end{cases}
\end{equation}

\begin{lemma}\label{lem:loss_penalty}
If $c \in \hat{G}_{i+1}(x) \setminus \hat{G}_i(x)$ (ordering violation), then:
\begin{equation}
\ell_i^c + \ell_{i+1}^c \geq \ell_i^{c,*} + \ell_{i+1}^{c,*} + \gamma_{\min}
\end{equation}
where $\ell_i^{c,*}$ is the optimal loss achievable respecting ordering, and $\gamma_{\min} = -2\max(\log(\tau), \log(1-\tau))$.
\end{lemma}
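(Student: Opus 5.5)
The plan is a case analysis on where the true membership of $c$ lies, combined with the monotonicity of the per-class BCE terms in the prediction $\hat{y}^c$. By the definition of $\hat{G}_i(x)$, the violation event $c \in \hat{G}_{i+1}(x)\setminus \hat{G}_i(x)$ means $\hat{y}_{i+1}^c \geq \tau$ and $\hat{y}_i^c < \tau$. By Lemma~\ref{lem:gt_order}, $G^{i+1}(y) \subseteq G^i(y)$, so only three ground-truth configurations are possible: (a) $c \in G^{i+1}(y)$, hence also $c \in G^i(y)$; (b) $c \in G^i(y)\setminus G^{i+1}(y)$; and (c) $c \notin G^i(y)$, hence also $c\notin G^{i+1}(y)$. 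The fourth pattern, $c\in G^{i+1}(y)\setminus G^i(y)$, is excluded by Step~1. This exclusion is exactly what makes the lattice-derived groups differ from random ones.

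In each case I will identify which of the two layers is on the wrong side of the threshold. In case (a), layer $i$ predicts $\hat{y}_i^c<\tau$ although the target is $1$, so $\ell_i^c = -\log \hat{y}_i^c > -\log\tau$. In case (c), layer $i+1$ predicts $\hat{y}_{i+1}^c\geq\tau$ although the target is $0$, so $\ell_{i+1}^c \geq -\log(1-\tau)$. In case (b), both layers are wrong: $\ell_i^c > -\log\tau$ and $\ell_{i+1}^c \geq -\log(1-\tau)$. I take $\ell_i^{c,*}$ to be the loss of the order-respecting prediction that puts each layer on the correct side of $\tau$. The natural reference point is the threshold itself: any order-respecting correct prediction achieves per-layer loss at most $-\log\tau$ for a positive target and at most $-\log(1-\tau)$ for a negative target. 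So the comparison reduces to how far the violating loss exceeds these quantities.

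The main obstacle is that the stated $\gamma_{\min} = -2\max(\log\tau,\log(1-\tau))$ is an absolute quantity, not a difference of losses. To make the additive-gap claim literally true, I will interpret $\ell^{c,*}$ as the idealized optimum in which a correct prediction attains zero loss, i.e.\ $\hat{y}\to 1$ for positive targets and $\hat{y}\to 0$ for negative ones. This is consistent with the proof's remark that it works in a $0$-$1$-style idealization. Under this interpretation the excess loss is just the lower bound on the erroneous term(s).

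Summing over the erroneous layers gives at least $\min(-\log\tau,-\log(1-\tau))$ in the single-error cases (a) and (c), and at least $-\log\tau-\log(1-\tau)$ in case (b). The step I must check is whether these bounds reach $-2\max(\log\tau,\log(1-\tau)) = 2\min(-\log\tau,-\log(1-\tau))$. In case (b) they do. In cases (a) and (c) only one layer is wrong, which yields a single copy of the minimum rather than two. If that falls short, I will first try to recover the factor $2$ from the iterative refinement coupling $\hat{s}_{i+1} = s_{i+1}\,\sigma(s_i)$. The idea is that $\hat{y}_{i+1}^c\geq\tau$ with $\hat{y}_i^c<\tau$ forces $s_{i+1}$ to be large, which inflates the loss at layer $i+1$ as well. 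If that does not work, I will state the lemma with $\gamma_{\min}$ equal to the single-copy minimum, which is still strictly positive for $\tau\in(0,1)$. Positivity is all that the subsequent Lovász Local Lemma step requires.
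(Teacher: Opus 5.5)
Your case split and the idealization $\ell_i^{c,*}=0$ are exactly what the paper does. Its proof lists the same three configurations via Lemma~\ref{lem:gt_order} and then simply asserts that ``simple casework'' gives $\gamma_{\min}=-2\max(\log\tau,\log(1-\tau))$. Your suspicion about the factor $2$ is correct, and the defect lies in the statement, not in your computation.

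Over violating predictions, the infimum of $\ell_i^c+\ell_{i+1}^c$ is $-\log\tau$ in case (a), letting $\hat y_i^c\to\tau^-$ and $\hat y_{i+1}^c\to 1$. In case (c) it is $-\log(1-\tau)$, letting $\hat y_i^c\to 0$ and $\hat y_{i+1}^c=\tau$. The first is at least $\gamma_{\min}$ only for $\tau\le(3-\sqrt{5})/2$, and the second only for $\tau\ge(\sqrt{5}-1)/2$. So no threshold makes the stated bound hold in all cases. Concretely, at $\tau=1/2$ in case (a), $\hat y_i^c=0.49$ and $\hat y_{i+1}^c=0.99$ give total loss $\approx 0.72<2\log 2$. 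If one reads the theorem's definition $\hat G_i(x)\subseteq G_i(y)$ literally, only case (a) can occur and the sharp constant is $-\log\tau$, which still falls short at $\tau=1/2$.

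Do not spend effort trying to recover the factor from the refinement $\hat s_{i+1}=s_{i+1}\,\sigma(s_i)$; it cannot work. Because $\sigma(s_i)>0$ and $s_{i+1}$ is a free logit, $\hat s_{i+1}$ can be made any real number independently of $\hat s_i=s_i\,\sigma(s_{i-1})$. Hence $(\hat y_i^c,\hat y_{i+1}^c)$ ranges over all of $(0,1)^2$, and the examples above are realizable by the architecture.

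Your fallback is the correct resolution. The casework proves the lemma with $\gamma=-\max(\log\tau,\log(1-\tau))=\gamma_{\min}/2$, which is the best uniform constant over the three cases. Since Step~3 only uses $\gamma>0$, the downstream bound $\sum_{i,c}\Pr(E_i^c)\le 2\epsilon/\gamma$ survives at the cost of a factor $2$.
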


\begin{proof}
Since neural networks are universal function approximators, we take $\ell_i^{c,*} = 0$. 
By Lemma~\ref{lem:gt_order}, there are three exhaustive cases to consider, namely:

\begin{itemize}
    \item \textit{Case 1:} $c \in G^{i+1}(y) \subseteq G^i(y)$. Ground truth requires $\hat{y}_i^c, \hat{y}_{i+1}^c \geq \tau$.
    \item \textit{Case 2:} $c \in G^i(y) \setminus G^{i+1}(y)$. Ground truth requires $\hat{y}_i^c \geq \tau$, $\hat{y}_{i+1}^c < \tau$.
    \item \textit{Case 3:} $c \notin G^i(y) \cup G^{i+1}(y)$. Ground truth requires both predictions low.
\end{itemize}

Simple casework in each of these cases gives $\gamma_{\min}=-2\max(\log(\tau), \log(1-\tau))$.

\end{proof}

\noindent\textbf{Step 3: Probability Upper Bound For a Single Ordering Mismatch.}

By the definition of the group loss function and its contribution to the total loss (Objective \ref{eqn:jnt_loss}), we know that the risk $\widehat{\ell_{\texttt{group}}} \leq \widehat{\ell_{\texttt{total}}}$. Furthermore, we assume that $\ell^{*}_{\texttt{total}} = \ell^{*}_{\texttt{group}} = 0$ owing to the universal function approximation capabilities of neural networks. Therefore, given that $|\widehat{\ell_{\texttt{total}}} - \ell_{\texttt{total}}| \leq \epsilon$, we can sum individual risks specific to layer indices and classes to write,

$|\widehat{\ell_{\texttt{total}}} - \ell_{\texttt{total}}| \leq \epsilon \iff \sum_{i=1}^{L-1}\sum_{c \in G}\ell_i^c + \ell_{i+1}^c \leq \sum_{i=1}^{L-1}\sum_{c \in G}\ell_i^{c,*} + \ell_{i+1}^{c,*} + 2\epsilon$.

Next, we write out the sum of losses at two consecutive layers for the same class as a total expectation over the probability of the event $E_i^c$
\begin{equation}
\mathbb{E}_{(x,y)\sim\mathcal{D}}\left[\sum_{i=1}^{L-1}\sum_{c \in G} \ell_i^c + \ell_{i+1}^c - \ell_i^{c,*} - \ell_{i+1}^{c,*}\right] \leq 2\epsilon.
\end{equation}
Switching sums and expectations, we get:
\begin{equation}
\begin{aligned}
\mathbb{E}_{(x,y)\sim\mathcal{D}}\left[\sum_{i=1}^{L-1}\sum_{c \in G} \ell_i^c + \ell_{i+1}^c - \ell_i^{c,*} - \ell_{i+1}^{c,*}\right] = \sum_{i=1}^{L-1}\sum_{c \in G}\mathbb{E}_{(x,y)\sim\mathcal{D}}\left[\ell_i^c + \ell_{i+1}^c - \ell_i^{c,*} - \ell_{i+1}^{c,*}\right] \geq \\ \sum_{i=1}^{L-1}\sum_{c \in G}\mathrm{P}(E^c_i) \cdot \gamma_{\min} \implies \sum_{i=1}^{L-1}\sum_{c \in G} \texttt{P}(E^{c}_i) \leq \frac{2\epsilon}{\gamma_{\min}},
\end{aligned}
\end{equation}
where the first inequality follows by expanding the inner expectation of the Bernoulli random variable $E^c_i$ and the fact from Lemma \ref{lem:loss_penalty} that $\ell_i^c + \ell_{i+1}^c - \ell_i^{c,*} - \ell_{i+1}^{c,*} \geq \gamma_{\min}$, since $\ell_i^c + \ell_{i+1}^c$ is a non-negative cross entropy loss and $\ell_i^{c,*} + \ell_{i+1}^{c,*}$ is $0$.

\noindent\textbf{Step 4: Applying Asymmetric Lovász Local Lemma.}

To provide a tighter lower bound on the probability that no ordering mismatch occurs under lighter assumptions, we turn to applying the (asymmetric version of the) Lovász Local Lemma (\emph{LLL}) to the events $E^{c}_i$. We begin by stating the \emph{LLL} as follows \cite{7354459,MR382050}:

\begin{lemma}[\textit{Asymmetric Lovász Local Lemma}]\label{lemma:alll}
Let $\mathcal{E}_1, \mathcal{E}_2, \ldots, \mathcal{E}_n$ be events in a probability space with a dependency graph $\Gamma$ such that 
$\texttt{P}(\mathcal{E}_i) \leq p_i \in [0,1]$. If $\exists y_1, y_2, \ldots y_n > 0$ for which:
\begin{equation}
    p_i \leq \frac{y_i}{\sum_{S \subseteq \Gamma^{+}(i)}\Pi_{j \in S}(1+y_j)}, ~\forall i \in [n],
\end{equation}
then $\texttt{P}\left(\bigcap_{i=1}^{n}\overline{\mathcal{E}_i}\right) = \Pi_{i\in[n]}\frac{1}{(1+y_i)} > 0$.
\end{lemma}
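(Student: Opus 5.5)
We reduce the stated hypothesis to the classical form of the asymmetric local lemma and then run the standard inductive conditioning argument. We read $\Gamma(i)$ as the set of neighbours of $i$ in the dependency graph, so that $\mathcal{E}_i$ is mutually independent of $\{\mathcal{E}_j : j \notin \Gamma(i)\cup\{i\}\}$, and $\Gamma^{+}(i)=\Gamma(i)\cup\{i\}$.

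\emph{Reduction.} Set $x_i = y_i/(1+y_i)\in(0,1)$. The sum $\sum_{S\subseteq\Gamma^{+}(i)}\prod_{j\in S}(1+y_j)$ contains the term $S=\Gamma^{+}(i)$, and every term is positive. Hence the hypothesis gives
\[
p_i \;\le\; \frac{y_i}{\prod_{j\in\Gamma^{+}(i)}(1+y_j)} \;=\; \frac{y_i}{1+y_i}\prod_{j\in\Gamma(i)}\frac{1}{1+y_j} \;=\; x_i\prod_{j\in\Gamma(i)}(1-x_j).
\]
This is exactly the classical Erd\H{o}s--Lov\'asz condition. We will prove the conclusion in the form $\texttt{P}\bigl(\bigcap_i\overline{\mathcal{E}_i}\bigr)\ge\prod_i(1-x_i)=\prod_i\frac{1}{1+y_i}>0$. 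We read the ``$=$'' in the statement as ``$\ge$'', since equality fails in general (for instance when all the events are empty).

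\emph{Key claim.} For every $S\subseteq[n]$ with $i\notin S$ and $\texttt{P}\bigl(\bigcap_{j\in S}\overline{\mathcal{E}_j}\bigr)>0$,
\[
\texttt{P}\Bigl(\mathcal{E}_i \,\Bigm|\, \bigcap_{j\in S}\overline{\mathcal{E}_j}\Bigr)\le x_i .
\]
We prove this by induction on $|S|$. When $|S|=0$ it reads $\texttt{P}(\mathcal{E}_i)\le p_i\le x_i$.

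For the inductive step, split $S=S_1\cup S_2$ with $S_1=S\cap\Gamma(i)$ and $S_2=S\setminus\Gamma(i)$. Write the conditional probability as $\texttt{P}(\mathcal{E}_i\cap F_1\mid F_2)/\texttt{P}(F_1\mid F_2)$, where $F_k=\bigcap_{j\in S_k}\overline{\mathcal{E}_j}$.

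\emph{Numerator.} It is at most $\texttt{P}(\mathcal{E}_i\mid F_2)$. By mutual independence of $\mathcal{E}_i$ from the non-neighbours, this equals $\texttt{P}(\mathcal{E}_i)\le x_i\prod_{j\in\Gamma(i)}(1-x_j)$.

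\emph{Denominator.} Enumerate $S_1=\{j_1,\dots,j_r\}$ and apply the chain rule:
\[
\texttt{P}(F_1\mid F_2)=\prod_{t=1}^{r}\Bigl(1-\texttt{P}\bigl(\mathcal{E}_{j_t}\bigm| F_2\cap\overline{\mathcal{E}_{j_1}}\cap\dots\cap\overline{\mathcal{E}_{j_{t-1}}}\bigr)\Bigr).
\]
Each conditioning set is a subset of $S$ that avoids $j_t$ and is strictly smaller than $S$, because it omits at least $j_t$. The induction hypothesis therefore bounds each factor below by $1-x_{j_t}$, so
\[
\texttt{P}(F_1\mid F_2)\ge\prod_{j\in S_1}(1-x_j)\ge\prod_{j\in\Gamma(i)}(1-x_j).
\]
Dividing the numerator bound by the denominator bound yields $x_i$, which closes the induction.

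\emph{Conclusion.} By the chain rule over $i=1,\dots,n$,
\[
\texttt{P}\Bigl(\bigcap_i\overline{\mathcal{E}_i}\Bigr)=\prod_i\Bigl(1-\texttt{P}\bigl(\mathcal{E}_i\bigm|\textstyle\bigcap_{j<i}\overline{\mathcal{E}_j}\bigr)\Bigr)\ge\prod_i(1-x_i)=\prod_i\frac{1}{1+y_i}>0 .
\]
The positivity requirement on the conditioning events is maintained inductively: each successive prefix intersection has probability at least the corresponding partial product of the $(1-x_j)$, which is positive since every $x_j<1$.

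\emph{Main obstacle.} The delicate step is the denominator bound in the inductive claim. The induction has to be set up over all subsets $S$, not just prefixes of a fixed ordering, so that the intermediate conditionings $F_2\cap\overline{\mathcal{E}_{j_1}}\cap\dots$ are covered. One must also check that these conditioning events have positive probability before dividing, which follows from the same inductive lower bounds. Everything else is bookkeeping.
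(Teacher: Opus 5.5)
Your proof is correct. It is the standard Erd\H{o}s--Lov\'asz/Spencer conditioning induction, which is exactly the classical result the paper cites without proof. Both of your repairs are valid and needed: the printed denominator $\sum_{S\subseteq\Gamma^{+}(i)}\prod_{j\in S}(1+y_j)$ equals $\prod_{j\in\Gamma^{+}(i)}(2+y_j)\ge\prod_{j\in\Gamma^{+}(i)}(1+y_j)$, so the stated hypothesis implies the classical one, and the conclusion can only hold as the inequality $\ge$, not with equality.

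The product-form version you actually prove, with hypothesis $p_i\le y_i/\prod_{j\in\Gamma^{+}(i)}(1+y_j)$, is the one the paper invokes later in Step 4, where the denominator is $(1+y_{\{c,i\}})(1+y_{\{c,i-1\}})$. The literal statement of the lemma, with its stronger hypothesis, would not cover that use.
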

Here, $\Gamma^{+}(i)$ denotes the union of the neighbors of the event $\mathcal{E}_i$ (along with itself) in $\Gamma$, the dependency graph that exists over the events.
In our setting, the only neighbour that a given event $\mathcal{E}^{c}_i$ directly depends on is $\mathcal{E}^{c}_{i-1}$ for $i>0$. Therefore applying the \emph{LLL} in our setting reduces to finding $y_1, y_2, \ldots y_{|G|\times L}$ such that,

\begin{equation}\label{eqn:tantamount_lll}
    \Pr(E^{c}_i) \leq \frac{y_{\{c, i\}}}{(1 + y_{\{c,i\}})(1 + y_{\{c,i-1\}})}, ~\forall i \in [n] \setminus \{0\}, c \in G.
\end{equation}

To do so, we first note that since the events $E^{c}_i$ correspond to Bernoulli random variables, we can apply a tight inequality applicable for Bernoulli random variables known as the Kearns-Saul inequality \cite{10.5555/2074094.2074131, JMLR:v16:berend15a}, stated as follows:

\begin{lemma}[\textit{Kearns-Saul Inequality}]
    For all $p \in (0,1)$ and $t\in \mathbb{R}$,
    \begin{equation}\label{eqn:ks_defn}
       (1-p)e^{-tp} + pe^{t(1-p)} \leq e^{\left(\frac{1-2p}{4 \log\left(\frac{1-p}{p}\right)}t^2\right)}. 
    \end{equation}
\end{lemma}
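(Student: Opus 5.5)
Throughout, write $r=\frac{1-p}{p}$ and $c=\frac{1-2p}{4\log r}$. For $p=\tfrac12$ we read $c$ as its limit $\tfrac18$. Taking logarithms, the claim becomes $\phi(t)\le c\,t^2$ for all $t\in\mathbb{R}$, where
\begin{equation*}
\phi(t)=\log\bigl((1-p)e^{-tp}+pe^{t(1-p)}\bigr)=-tp+\log\bigl(1-p+pe^{t}\bigr).
\end{equation*}
The plan is a direct calculus argument on the gap function $h(t)=c\,t^2-\phi(t)$. The idea is that the constant $c$ is exactly the one making the parabola $c\,t^2$ tangent to $\phi$ at the origin and at one more point $t^\ast$. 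A sign analysis of $h''$ then rules out any other crossings.

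\textbf{Derivatives and the two tangency points.} Let $q(t)=\frac{pe^t}{1-p+pe^t}\in(0,1)$. Then
\begin{equation*}
\phi'(t)=q(t)-p,\qquad \phi''(t)=q(t)\bigl(1-q(t)\bigr).
\end{equation*}
So $h(0)=0$ and $h'(0)=0$ at the origin. Take $t^\ast=2\log r$.
\begin{itemize}
\item Since $pe^{t^\ast}=(1-p)^2/p$, we get $1-p+pe^{t^\ast}=(1-p)/p=r$.
\item Hence $\phi(t^\ast)=-2p\log r+\log r=(1-2p)\log r$.
\item Also $c\,(t^\ast)^2=\frac{1-2p}{4\log r}\cdot 4\log^2 r=(1-2p)\log r$, so $h(t^\ast)=0$.
\item Further, $q(t^\ast)=1-p$, so $\phi'(t^\ast)=1-2p$.
\item Finally $2c\,t^\ast=\frac{1-2p}{\log r}\log r=1-2p$, so $h'(t^\ast)=0$.
\end{itemize}
For $p\neq\tfrac12$ we have $t^\ast\neq0$: it is positive if $p<\tfrac12$ and negative if $p>\tfrac12$. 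In both cases $c>0$, since $1-2p$ and $\log r$ share a sign.

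\textbf{Sign structure of $h''$.} We have $h''(t)=2c-q(t)(1-q(t))$. The map $t\mapsto q(t)$ is a strictly increasing bijection onto $(0,1)$, and $q(1-q)$ is a unimodal bump with maximum $\tfrac14$ at $q=\tfrac12$. So $t\mapsto q(t)(1-q(t))$ strictly increases and then strictly decreases, tending to $0$ at $\pm\infty$.

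We need $0<2c<\tfrac14$, i.e. $c<\tfrac18$ for $p\ne\tfrac12$. This is the elementary inequality $\log r>2(1-2p)$ for $p<\tfrac12$. It follows from $\log\frac{1+u}{1-u}>2u$ with $u=1-2p$, and the case $p>\tfrac12$ is symmetric. Granting this, $h''$ has sign pattern $+,-,+$ on $\mathbb{R}$. Hence $h'$ is increasing, then decreasing, then increasing, and so has at most three zeros.

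\textbf{Counting zeros of $h'$.} We already know $h'(0)=h'(t^\ast)=0$. Since $h(0)=h(t^\ast)=0$, Rolle's theorem gives a third zero $s$ strictly between $0$ and $t^\ast$. These are therefore all the zeros of $h'$.

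As $t\to\pm\infty$, $\phi$ grows only linearly while $c\,t^2$ grows quadratically, so $h'(t)\to\pm\infty$. Consequently $h'$ changes sign at each of its three zeros: negative, positive, negative, positive. Thus $h$ has local minima at $\min(0,t^\ast)$ and $\max(0,t^\ast)$ and a local maximum at $s$. Both minima have value $0$ and $h\to+\infty$ at both ends, so $h\ge0$ everywhere. This proves the inequality.

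\textbf{The case $p=\tfrac12$.} Here $\phi(t)=\log\cosh(t/2)$, and $\phi''=q(1-q)\le\tfrac14=2c$. So $h''\ge0$, $h$ is convex with $h(0)=h'(0)=0$, and $h\ge0$. This is Hoeffding's lemma.

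\textbf{Main obstacle.} The delicate point is establishing that $h''$ changes sign exactly twice. This requires both the unimodality of $q(1-q)$ in $t$ and the strict bound $c<\tfrac18$. Once that is in place, the zero-counting via Rolle's theorem is mechanical.
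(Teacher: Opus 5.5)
Your proof is correct, but it does not follow the paper, because the paper gives no proof of this lemma. It imports the inequality from Kearns--Saul and Berend--Kontorovich and uses it as a black box in the Lov\'asz Local Lemma step.

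You instead give a self-contained calculus proof. You write $\phi(t)=-tp+\log(1-p+pe^{t})$ and show that the parabola $c\,t^2$ touches $\phi$ to first order at $t=0$ and at $t^\ast=2\log\frac{1-p}{p}$. You then get the sign pattern $+,-,+$ for $h''$ from two facts: $0<2c<\tfrac14$ (via $\log\frac{1+u}{1-u}>2u$) and the unimodality of $q(1-q)$ in $t$. That pattern caps the zeros of $h'$ at three and forces $h\ge 0$.

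Compared with the citation, your route buys three things:
\begin{itemize}
\item The appendix becomes self-contained.
\item The equality $h(t^\ast)=0$ at $t^\ast\neq 0$ shows that $\frac{1-2p}{4\log((1-p)/p)}$ is the smallest admissible constant, a sharpness fact the citation leaves implicit.
\item It makes explicit that the statement is meaningful at $p=\tfrac12$ only if the constant is read as its limit $\tfrac18$. In that case your argument reduces to Hoeffding's lemma.
\end{itemize}

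One step needs an extra line. The claim that $h'$ ``changes sign at each of its three zeros'' does not follow from the zero count and the limits of $h'$ at $\pm\infty$ alone. Those facts still allow sign patterns such as $-,-,+,+$.

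The claim does follow from the monotonicity structure you already established. Let $a<b$ be the two zeros of $h''$. Then $h'$ is strictly monotone on each of $(-\infty,a]$, $[a,b]$ and $[b,\infty)$, so each piece contains at most one zero of $h'$. If $h'(a)=0$ or $h'(b)=0$, the shared endpoint would cover two pieces and leave room for at most two zeros. So having three zeros forces $h'(a)\neq 0\neq h'(b)$. Every zero then lies in the interior of a strictly monotone piece and is a genuine sign change.

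Equivalently, apply Rolle's theorem to $h'$. This places $a$ and $b$ strictly between consecutive zeros $z_1<z_2<z_3$ of $h'$, so $h''(z_k)\neq 0$ at each zero. With this line added, the argument is complete.
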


The above inequality can be rewritten as,

\begin{equation}\label{eqn:ks_1}
    p \leq \frac{e^{\left(\frac{1-2p}{4\log\left(\frac{1-p}{p}\right)}\right)t^2 + pt} - 1}{(e^t-1)}, ~\forall p \in (0,1), t \in \mathbb{R}.
\end{equation}

Notice that $f(t) = \left(\frac{1-2p}{4\log\left(\frac{1-p}{p}\right)}\right)t^2 + pt-t$ attains its minimum value at $t^{*}(p) = 2\cdot\frac{1-p}{1-2p}\log(\frac{1-p}{p})$. Furthermore, $f(t^{*}(p)) \leq 0 ~\leq t^{*}(p)$, as can be verified by direct calculation. Therefore for a fixed $p \in (0,1)$ we can write,

\begin{equation}\label{eqn:ks_2}
    p \leq \frac{e^{f(t^{*}(p))+t^{*}(p)} - 1}{(e^{t^{*}(p)}-1)} = \frac{e^{f(t^{*}(p))+t^{*}(p)} - e^{f(t^{*}(p))} + e^{f(t^{*}(p))} - 1}{(e^{t^{*}(p)}-1)} = e^{f(t^{*}(p))} + \frac{e^{f(t^{*}(p))} - 1}{(e^{t^{*}(p)}-1)} \leq e^{f(t^{*}(p))}.
\end{equation}

Now let $\kappa=e$.
Choose $y_{\{c, i\}} = e^{\kappa P(E^{c}_i)}-1$. Notice that,

\begin{align}\label{eqn:ks_3}
    \frac{y_{\{c, i\}}}{(1 + y_{\{c,i\}})(1 + y_{\{c,i-1\}})} \\
    = e^{-\kappa(\texttt{P}(E^{c}_{i-1}) + \texttt{P}(E^{c}_i))}\cdot (e^{\kappa \texttt{P}(E^c_i)} -1) \\ \geq e^{-\kappa(\texttt{P}(E^{c}_{i-1}) + \texttt{P}(E^{c}_i))} \cdot (e\cdot \texttt{P}(E^{c}_i)),
\end{align}

where the last inequality can be obtained by first noting that the functions $h(x) = e^{\kappa x} - 1$ and $g(x) = \kappa x$ are monotonic functions on $[0,1]$ with $h(0) = g(0) = 0$. Writing the Taylor series expansion of $h(x), g(x)$ in $[0,1]$ shows that proving $\frac{\partial h}{\partial x} \geq \frac{\partial g}{\partial x} ~\forall x \in [0,1] \implies h(x) \geq g(x) ~\forall x \in [0,1]$. Consequently, we note that $\frac{\partial h}{\partial x} = \kappa e^{\kappa x}$, and $\frac{\partial g}{\partial x} = \kappa$. Lastly, note that $\min_{[0,1]} \frac{\partial h}{\partial x} = \kappa$, which proves the required assertion.

By using our assumption that $\texttt{P}(E^c_i) + \texttt{P}(E^c_{i-1}) \leq \frac{1}{e}$, we have that,

\begin{align}\label{eqn:ks_4}
    1 - \kappa (\texttt{P}(E^c_i) + \texttt{P}(E^c_{i-1})) \geq 0 
\end{align}

by direct substitution.

Therefore, \begin{equation}\label{eqn:ks_5}
    \texttt{P}(E^{c}_i) \leq \frac{y_{\{c, i\}}}{(1 + y_{\{c,i\}})(1 + y_{\{c,i-1\}})}, ~\forall i \in [n] \setminus \{0\}, c \in G.
\end{equation}

Therefore, $\texttt{P}(\bigcap_{i=1}^{|G|\cdot L}\overline{\mathcal{E}_i}) = \Pi_{i\in[|G|\cdot L]}\frac{1}{(1+y_i)} \geq \Pi_{i\in[|G|\cdot L]}e^{-\kappa \texttt{P}(E^{c}_i)} \geq e^{-\frac{2\epsilon\kappa}{\gamma_{\min}}}$.
\end{proof}

\noindent\textbf{Step 5: Random Baseline Analysis.}

\begin{lemma}\label{lem:random_baseline}
    For random groupings where $G_{\text{rnd}}^{i+1}(y) \not\subseteq G_{\text{rnd}}^i(y)$ in general, with group sizes $k_{i+1}, k_i$:
    \begin{equation}
        \Pr(E^c_i) \geq \frac{k_{i+1}}{|G|}\left(1 - \frac{k_i}{|G|}\right)
    \end{equation}
    For hierarchies with $k_i = \Theta(|G|/L)$, this gives $\Pr(E^c_i) = \Omega(1/L)$.
\end{lemma}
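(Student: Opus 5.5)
We model the random baseline with a simple probabilistic assumption and do a direct counting argument. Fix a transition $i \to i+1$ and a class $c$. Assume the random groupings $G_{\text{rnd}}^{i}(y)$ and $G_{\text{rnd}}^{i+1}(y)$ are drawn as uniformly random subsets of $G$ of sizes $k_i$ and $k_{i+1}$. They are drawn independently of each other, since nothing in the random construction couples consecutive levels; this independence is exactly the absence of the constraint $G_{\text{rnd}}^{i+1}(y) \subseteq G_{\text{rnd}}^{i}(y)$. We then consider the zero-training-loss regime, since the statement says random supervision fails even there. At zero loss, every thresholded prediction matches its target: $\hat{G}_{i}(x) = G_{\text{rnd}}^{i}(y)$ and $\hat{G}_{i+1}(x) = G_{\text{rnd}}^{i+1}(y)$. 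Hence the violation event $E_i^c$ coincides with the set-theoretic event $c \in G_{\text{rnd}}^{i+1}(y) \setminus G_{\text{rnd}}^{i}(y)$. The key point is that fitting the data perfectly cannot help, because the targets themselves already violate the ordering.

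The main step is then a product computation. For a uniformly random $k$-subset of $G$, each fixed $c$ lies in it with probability $k/|G|$. By independence,
\begin{equation*}
\Pr(E_i^c) = \Pr\bigl(c \in G_{\text{rnd}}^{i+1}(y)\bigr)\,\Pr\bigl(c \notin G_{\text{rnd}}^{i}(y)\bigr) = \frac{k_{i+1}}{|G|}\Bigl(1-\frac{k_i}{|G|}\Bigr).
\end{equation*}
This already gives the claimed lower bound, with equality. If one prefers not to assume exact uniformity, it suffices to assume the random groupings are exchangeable over classes and not positively correlated across levels. In that case the equality weakens to the stated inequality: averaging over $c$ gives $\mathbb{E}|G^{i+1}\setminus G^{i}| \ge k_{i+1}(1-k_i/|G|)$.

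For the asymptotic clause, substitute $k_i, k_{i+1} = \Theta(|G|/L)$. This gives $\frac{k_{i+1}}{|G|} = \Theta(1/L)$, and $1 - \frac{k_i}{|G|} \ge 1 - O(1/L)$, which is bounded below by a constant once $L \ge 2$ and the hidden constant makes $k_i < |G|$. The product is therefore $\Omega(1/L)$.

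Finally, we contrast this with the FoCA bound. Summing the random baseline's $\Omega(1/L)$ over the $|G|(L-1)$ events gives $\Omega(|G|)$ expected violations, independent of $\epsilon$. FoCA, by Step 3, has $\sum_{i,c}\Pr(E_i^c) \le 2\epsilon/\gamma_{\min}$, so its violations vanish as $\epsilon \to 0$. This is what justifies the $\ll$ in Theorem~\ref{theorem:1}.

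The step I expect to be the main obstacle is justifying the modelling assumption, namely independence and uniformity of the random groups across levels. The statement only says the groupings do not satisfy the subset constraint "in general", so the argument must rest on an explicit distributional assumption about $G_{\text{rnd}}$. I would state it up front as the definition of the random baseline rather than try to derive it.
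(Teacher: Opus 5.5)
Your proposal is correct and follows the paper's argument. Under independent random assignment of the groups at consecutive levels, $\Pr\bigl(c \in G_{\text{rnd}}^{i+1}(y) \setminus G_{\text{rnd}}^{i}(y)\bigr)$ factorizes as $\frac{k_{i+1}}{|G|}\bigl(1-\frac{k_i}{|G|}\bigr)$, and substituting $k_i = \Theta(|G|/L)$ gives $\Omega(1/L)$. You also add an explicit zero-training-loss step identifying $\hat{G}_i(x)$ with $G_{\text{rnd}}^{i}(y)$, which supplies the link between predictions and random targets that the paper's proof leaves implicit; the paper instead goes on to compute the probability of at least one violation over all $|G|$ classes for $k_i = 2k_{i+1} = |G|/L$, which the stated bound does not need.
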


\begin{proof}
    Under random independent assignment, a class $c$ satisfies:
    \begin{align}
        \Pr(c \in G_{\text{rnd}}^{i+1}(y)) &= \frac{k_{i+1}}{|G|} \\
        \Pr(c \notin G_{\text{rnd}}^i(y)) &= 1 - \frac{k_i}{|G|}
    \end{align}
    
    Therefore:
    \begin{equation}
        \Pr(c \in G_{\text{rnd}}^{i+1}(y) \setminus G_{\text{rnd}}^i(y)) = \frac{k_{i+1}}{|G|}\left(1 - \frac{k_i}{|G|}\right)
    \end{equation}
    
    The probability of at least one violation among $|G|$ classes:
    \begin{equation}
        \Pr(E^c_i) = 1 - \left(1 - \frac{k_{i+1}}{|G|}\left(1 - \frac{k_i}{n}\right)\right)^{|G|}
    \end{equation}
    
    In our experiments typical hierarchies are about $k_{i+1} = |G|/(2L)$ and $k_i = 2k_{i+1} = |G|/L$:
    \begin{equation}
        \frac{k_{i+1}}{|G|}\left(1 - \frac{k_i}{|G|}\right) = \frac{1}{2L}\left(1 - \frac{1}{L}\right) = \frac{L-1}{2L^2}
    \end{equation}
    
    Thus:
    \begin{equation}
        \Pr(E_i) \geq 1 - \exp\left(-\frac{|G|(L-1)}{2L^2}\right)
    \end{equation}
    
    By union bound:
    \begin{equation}
        \Pr\left(\bigcup_{i=1}^{L-1} E^c_i\right) \geq \max_i \Pr(E^c_i) = \Omega(1/L)
    \end{equation}
    
    Therefore, random supervision leads to violations with constant probability, independent of training quality or sample size. The above probability multiplied by PAC bound gives the desired bound.
\end{proof}

\noindent \textbf{Theorem 2.}
\begin{theorem*}[Explicit Minimisation of Ordering Mismatches for Attributes]\label{thm:order_aa}
For an input $x$ with true label $y$, define:
\begin{equation}
\mathcal{A}_i(x) = \left\{a \mid \hat{a}_i^c(x) \geq \tau_a\right\},
\end{equation}
where $\tau_a \in [0,1]$ is a threshold, typically chosen as $0.5$. Then, given $|\widehat{\ell_{\texttt{total}}} - \ell^{*}_{\texttt{total}}| \leq \epsilon$,
\begin{equation}
\Pr_{f^{\texttt{foca}}}\left(\mathcal{A}_{i}(x) \not\subseteq \mathcal{A}_{i+1}(x) \right) << \Pr_{f^{\texttt{rnd}}}\left(\mathcal{A}_{i}(x) \not\subseteq \mathcal{A}_{i+1}(x)\right).
\end{equation}
\end{theorem*}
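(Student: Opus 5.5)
The plan is to mirror the five-step proof of Theorem~\ref{theorem:1}, with attribute sets in place of class groups and the ordering direction reversed: deeper layers should predict supersets of the attributes predicted at shallower layers. Throughout, $\mathcal{A}_i(x)$ is read as the set of attributes in $M_i$ whose sigmoid output $a_j(x)$ at the layer $j$ supervised by level $i$ exceeds $\tau_a$. The violation event is $F_i^a := \{a \in \mathcal{A}_i(x)\setminus\mathcal{A}_{i+1}(x)\}$.

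\textbf{Step 1 (ground-truth nesting).} The first step is the analogue of Lemma~\ref{lem:gt_order}. For every level $i$ and label $y$, the ground-truth active attributes at level $i$ should be contained in those at level $i+1$. I would argue this as follows. Take $a \in B$ for a concept $\langle A,B\rangle\in\mathcal{L}_i$ with $y\in A$. Since the lattice is complete, the concept $\langle y^{\uparrow\downarrow}, y^{\uparrow}\rangle$ is the most specific concept containing $y$. By the antitone ordering $\langle A_1,B_1\rangle\preceq\langle A_2,B_2\rangle \Rightarrow B_1\supseteq B_2$, every subconcept of $\langle A,B\rangle$ containing $y$ has an intent that includes $a$. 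The level-indexing convention then gives a concept in $\mathcal{L}_{i+1}$ below $\langle A,B\rangle$ that contains $y$, so $a$ lies in the level-$(i+1)$ ground truth. This is where the formality equations (Eqn.~\ref{eqn:formality}) are used.

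\textbf{Steps 2--4 (loss penalty, first-moment bound, and Lov\'asz Local Lemma).} These transfer almost verbatim from the group analysis.
\begin{enumerate}
\item The loss-penalty step follows Lemma~\ref{lem:loss_penalty}, using the attribute BCE terms $\ell^{\text{attr}}_{\text{BCE}_j}$. By Step 1 there are three exhaustive cases for $a$: it is in both ground-truth sets, only in the level-$(i+1)$ set, or in neither. In each case, a prediction pattern with $a$ above threshold at level $i$ and below it at level $i+1$ incurs excess loss at least $\gamma^a_{\min} = -2\max(\log\tau_a,\log(1-\tau_a))$ over the order-respecting optimum. As in the paper, that optimum is taken to be zero.
\item Since $\alpha\,\widehat{\ell_{\text{attr}}}\le \widehat{\ell_{\texttt{total}}}$, the expectation-switching argument of Step 3 gives $\sum_{i,a}\Pr(F_i^a)\le 2\epsilon/(\alpha\gamma^a_{\min})$.
\item I would then apply Lemma~\ref{lemma:alll} with the chain dependency graph $F_i^a \sim F_{i-1}^a$. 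I would choose $y_{\{a,i\}}=e^{e\Pr(F_i^a)}-1$ and use the Kearns--Saul-based verification exactly as before, under the analogous assumption $\Pr(F_{i-1}^a)+\Pr(F_i^a)\le 1/e$. This yields $\Pr(\bigcap \overline{F_i^a})\ge e^{-2e\epsilon/(\alpha\gamma^a_{\min})}$, which is close to $1$ for small $\epsilon$.
\end{enumerate}

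\textbf{Step 5 (random baseline).} For random attribute sets that are not nested, repeat the combinatorial argument of Lemma~\ref{lem:random_baseline}. If the ground-truth attribute sets have sizes $m_i, m_{i+1}$ drawn independently from $M$, a fixed attribute is a ground-truth violation with probability $\frac{m_i}{|M|}(1-\frac{m_{i+1}}{|M|})$. A perfectly trained $f^{\texttt{rnd}}$ reproduces its ground truth, so at zero loss the violation probability is bounded below by $1-(1-\frac{m_i}{|M|}(1-\frac{m_{i+1}}{|M|}))^{|M|}$, a constant independent of $\epsilon$ and $N$. Comparing this constant to the $1-e^{-O(\epsilon)}$ bound from Steps 2--4 gives the claimed $\ll$.

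\textbf{Main obstacle.} I expect Step 1 to be the hardest. Intents grow downward in the lattice, but $M_i$ is a union over a whole level, and "levels" need not be compatible with the cover relation. An attribute $a$ in some level-$i$ intent containing $y$ might lack a descendant concept at exactly level $i+1$ containing $y$; this happens when $y$'s object concept sits above level $i+1$. I would handle this by defining each level as the set of concepts at a fixed rank from the top, and letting a class whose object concept lies above level $i+1$ inherit its last intent. This padding convention makes the nesting hold trivially. I would state it explicitly as an assumption on how levels are formed.
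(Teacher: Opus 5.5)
Your plan is the paper's own: its proof is a one-line appeal to Theorem~\ref{theorem:1} with the ordering reversed. There is, however, a genuine gap in Step~1, the one step with real content. It rests on the wrong ground truth, it fails as written, and the padding patch does not fix it. The paper's level-$i$ attribute target is $\mathbf{C}^{i}(x)=\mathbf{C}(x)[\mathcal{M}_i]$. Its active set is $y^{\uparrow}\cap M_i$, i.e.\ every attribute of $y$ that occurs in \emph{some} level-$i$ intent. It is not the union of intents of level-$i$ concepts whose extents contain $y$. With your definition, nesting fails even when the object concept of $y$ lies below level $i+1$. Take $y^{\uparrow}=\{a,e\}$, $p^{\uparrow}=\{a,b,d\}$, $q^{\uparrow}=\{a,b,c\}$. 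Levels $1$ and $2$ are $\{\langle\{y,p,q\},\{a\}\rangle\}$ and $\{\langle\{p,q\},\{a,b\}\rangle\}$, and the object concepts sit at level $3$. So your targets for $y$ run $\{a\}\to\emptyset\to\{a,e\}$, and the padding rule never fires. This matters for the rest of the proof: Steps~2--3 turn the bound on $\ell^{\text{attr}}$ into a violation bound only when ``ground truth'' means the targets the loss actually uses. Here $a$ lands in the case your trichotomy excludes (in your level-$1$ set but not your level-$2$ set). A violation on $a$ is therefore free in your accounting, whereas the real loss treats $a$ as a positive at both levels and charges at least $-\log\tau_a$.

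The repair is short. Both targets are restrictions of the same vector $\mathbf{C}(x)$, so nesting reduces to $M_i\subseteq M_{i+1}$. This holds for every $y$, wherever its object concept sits, with no padding. It holds for the paper's levels, which are longest-chain ranks from the infimum (Algorithm~\ref{alg:level}). A non-bottom concept at level $i$ has a lower cover at level $i+1$, namely the penultimate element of a longest chain from the bottom, and intents only grow downward; iterate for non-adjacent supervised levels.

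Ranking from the top, as you propose, breaks even this. Use longest-chain ranks from the top (top at rank $1$) and take $y^{\uparrow}=\{a,w,z\}$, $p^{\uparrow}=\{a\}$, $q^{\uparrow}=\{w,z\}$, $r^{\uparrow}=\{w\}$. The concept $\langle\{y,p\},\{a\}\rangle$ has rank $2$, but its only lower cover $\langle\{y\},\{a,w,z\}\rangle$ has rank $4$, via $\langle\{y,q,r\},\{w\}\rangle\succ\langle\{y,q\},\{w,z\}\rangle$. So the rank-$2$ union $\{a,w\}$ is not contained in the rank-$3$ union $\{w,z\}$, and your padded targets fail there as well.

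With the correct targets, which agree on $M_i$ at both levels, Step~2 has only two cases, and each violation forces a single wrong-side prediction. The penalty is therefore $\min\{-\log\tau_a,-\log(1-\tau_a)\}$, half your $\gamma^a_{\min}$; this is harmless. Finally, drop Step~4. Step~3 plus a union bound already gives $\Pr(\bigcup_{i,a}F_i^a)\le 2\epsilon/(\alpha\gamma)$, which beats $1-e^{-2e\epsilon/(\alpha\gamma)}$ for small $\epsilon$. It needs no chain-dependency claim, which is unavailable anyway since all $F_i^a$ depend on the same $(x,y)$ through a shared backbone. It also needs no $1/e$ hypothesis, which this statement, unlike Theorem~\ref{theorem:1}, does not grant.
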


\begin{proof}
    The proof is similar to Theorem 1, except with the ordering changed.
\end{proof}

\section*{Proof of Theorem \ref{theorem:foca_optimal}}

\IBFCA*

\paragraph{Notation.}
For a sample $(x,y)$ let  
\[
\hat{s}_j[c] := s_j[c]\cdot\sigma(s_{j-1}[c])\qquad\text{and}\qquad p_j[c]:=\sigma(\hat{s}_j[c])
\]
where \(\sigma(z)=1/(1+e^{-z})\). Let \(m:=|G^{i}(y)|\). The group BCE loss at layer \(j\) is
\[
\ell_{\mathrm{BCE},j}^{\mathrm{group}}(x,y) = -\sum_{c\in G^{(i)}(y)}\log p_j[c] - \sum_{c\notin G^{(i)}(y)}\log(1-p_j[c]).
\]

\begin{assumption}[BCE Loss Bound on Groups]
\label{assump:bce_groups}
For layer $j$ supervised by lattice level $i$, the group-level BCE loss satisfies:
\begin{equation}
\mathbb{E}_{(x,y) \sim \mathcal{D}}\left[\ell_{\text{BCE}_j}^{\text{group}}\right] = \mathbb{E}_{(x,y)}\left[\sum_{c \in G} \textbf{1}_{c \in G^{(i)}(y)} (-\log \sigma(\hat{s}_j[c])) + \textbf{1}_{c \notin G^{(i)}(y)} (-\log(1-\sigma(\hat{s}_j[c])))\right] \leq \beta \epsilon,
\end{equation}
where $\beta > 0$ is the weight from Equation (3), and $\hat{s}_j = \sigma(s_j \cdot \sigma(s_{j-1}))$ is the refined prediction.
\end{assumption}

\begin{assumption}[Cluster-Density Matching]
\label{assump:cluster_match}
Layer $j$ is selected via the class-cluster density procedure such that:
\begin{equation}
\left|\text{ClassClusterDensity}(j) - \mathbb{E}_{\text{fc} \in \mathcal{L}_i}[|\text{fc.extent}|]\right| \leq \gamma,
\end{equation}
where $\gamma = O(N^{-1/2})$ from finite-sample concentration.
\end{assumption}

\bigskip

\noindent\textbf{Lemma 1 (Average sigmoid is close to 1 on the true group).}
For any sample \((x,y)\) whose per-sample group loss satisfies \(\ell_{\mathrm{BCE},j}^{\mathrm{group}}(x,y)\le K\epsilon\) (which holds for most samples by Markov/Chebyshev from the expectation bound), we have
\begin{equation}\label{eq:avg_sigmoid}
\frac{1}{m}\sum_{c\in G^{(i)}(y)} p_j[c] \;\ge\; \exp\!\Big(-\frac{K\epsilon}{m}\Big) \;=\; 1 - O\!\Big(\frac{\epsilon}{m}\Big).
\end{equation}

\emph{Proof.} For the classes in the true group apply Jensen to \(-\log\):
\[
-\log\!\Big( \frac{1}{m}\sum_{c\in G^{(i)}(y)} p_j[c]\Big)
\le \frac{1}{m}\sum_{c\in G^{(i)}(y)} -\log p_j[c]
\le \frac{K\epsilon}{m}.
\]
Exponentiating yields \eqref{eq:avg_sigmoid}. \(\square\)

\bigskip

\noindent\textbf{Lemma 2 (Pigeonhole \(/\) concentration \-- most individual sigmoids are high).}
Define \(a := \dfrac{K\epsilon}{m}\). For any \(\delta\in(0,1)\) let \(\alpha\) be the fraction of classes \(c\in G^{i}(y)\) with \(p_j[c]\le 1-\delta\). Then
\[
\alpha \le \frac{a}{\delta}.
\]
Choosing \(\delta=\sqrt{a}\) gives \(\alpha \le \sqrt{a} = O\!\Big(\sqrt{\frac{\epsilon}{m}}\Big)\). Hence at least a \(1-O(\sqrt{\epsilon/m})\) fraction of classes in the true group satisfy
\[
p_j[c]\ge 1 - O\!\Big(\sqrt{\frac{\epsilon}{m}}\Big).
\]

\emph{Proof.} If \(\alpha m\) classes have \(p_j[c]\le 1-\delta\) then the group average is \(\le \alpha(1-\delta)+(1-\alpha)\cdot 1 = 1-\alpha\delta\). Combine with \eqref{eq:avg_sigmoid} to get \(\alpha\delta\le a\), hence \(\alpha\le a/\delta\). \(\square\)

\bigskip

\noindent\textbf{Lemma 3 (From high sigmoid to large refined logit and per-class logit separation).}
Let \(\tau := \sqrt{\dfrac{K\epsilon}{m}}\) (so \(\tau\to0\) with \(\epsilon\to0\)). For the majority of classes in \(G^{i}(y)\) (fraction \(1-O(\tau)\)) we have
\[
p_j[c] = \sigma(\hat{s}_j[c]) \ge 1-\tau \quad\Longrightarrow\quad
\hat{s}_j[c] \ge \sigma^{-1}(1-\tau) = \log\!\frac{1-\tau}{\tau} =: L_j^+.
\]
Since \(\hat{s}_j[c]=s_j[c]\cdot\sigma(s_{j-1}[c])\) and (by earlier-level supervision) for classes that were kept at layer \(j-1\) we have \(\sigma(s_{j-1}[c])\ge 1/2\) (for well-trained earlier layer), we deduce for those classes
\[
s_j[c] \ge \frac{L_j^+}{\sigma(s_{j-1}[c])} \ge L_j^+.
\]
Similarly, for most classes outside the true group one obtains \(p_j[c]\le \tau\) and hence \(\hat{s}_j[c] \le \sigma^{-1}(\tau)= -\log\frac{1-\tau}{\tau} =: L_j^- <0\) and therefore \(s_j[c]\le L_j^- / \sigma(s_{j-1}[c])\le L_j^- <0\).
Thus the majority of in-group logits are \(\ge L_j^+\) and the majority of out-group logits are \(\le L_j^-\), and
\[
L_j^+ - L_j^- \;=\; \Theta(\log(1/\tau)) \;=\; \Theta\!\Big(\tfrac12\log\frac{m}{\epsilon}\Big).
\]

\noindent\textbf{Step 4 (Conditional entropy alignment).}
Fix a typical sample \((x,y)\) for which the sigmoid-concentration conclusions hold, and let \(\zeta\) denote the total probability mass (under the model's per-class probabilities implied by the refined sigmoids) on classes outside the true group \(G^{i}(y)\). From Lemmas 1–3 we have \(\zeta = O(\tau) = O\!\big(\sqrt{\tfrac{\epsilon}{m}}\big)\) (up to constants and averaging effects). The conditional entropy of \(Y\) given the representation \(f_j(x)\) satisfies
\[
H(Y\mid f_j(x)) \le (1-\zeta)\log m + H_{\mathrm{bin}}(\zeta) + \zeta\log(n),
\]
where \(H_{\mathrm{bin}}(\zeta)= -\zeta\log\zeta-(1-\zeta)\log(1-\zeta)=O(\zeta\log(1/\zeta))\). Therefore
\[
H(Y\mid f_j(x)) = \log m + O\!\big(\zeta\log n\big) = \log |G^{(i)}(y)| + O\!\Big(\sqrt{\frac{\epsilon}{m}}\log n\Big).
\]
Taking expectation over \((x,y)\) and combining with finite-sample error from the class-cluster density matching (Assumption \(\gamma = O(N^{-1/2})\)) yields
\[
\Big| H_{\mathcal{D}}(Y\mid f_j(X)) - \mathbb{E}_{y}[\log |G^{i}(y)|] \Big|
\le \Delta_{\mathrm{align}}(\epsilon),
\]
with
\[
\Delta_{\mathrm{align}}(\epsilon) = O\!\Big(\sqrt{\epsilon\log n} + N^{-1/2}\Big).
\]
This is the claimed alignment error.

\bigskip

\noindent\textbf{Final step (Information-gain comparison).}
Using \(I(Y; f_j(X)) = H(Y) - H(Y\mid f_j(X))\) and the above alignment,
\[
I_{\mathcal{D}}(f_j(X);Y) = H(Y) - \mathbb{E}_y[\log |G^{i}(y)|] \pm \Delta_{\mathrm{align}}(\epsilon)
= I_{\mathrm{lattice}}^{(i)}(Y) \pm \Delta_{\mathrm{align}}(\epsilon).
\]
Similarly for layer \(j-1\) aligned with level \(i-1\) we get
\[
I_{\mathcal{D}}(f_{j-1}(X);Y) = I_{\mathrm{lattice}}^{(i-1)}(Y) \pm \Delta_{\mathrm{align}}(\epsilon).
\]
Subtracting,
\[
I_{\mathcal{D}}(f_j(X);Y) - I_{\mathcal{D}}(f_{j-1}(X);Y)
\ge \Delta_{\mathrm{lattice}}^{(i)} - 2\Delta_{\mathrm{align}}(\epsilon) 
.\qed
\]

\section{More Analysis}
\label{appendix_sec:more_analysis}
\noindent \textbf{Impact of Class-Ordering:} We empirically validate Theorem \ref{theorem:1} by training a FoCA CBM on ImageNet100 with random class ordering. Here, a sample is supervised by the right group with a probability of 0.5 and is supervised by a wrong group with a probabililty of 0.5. We observe that this leads to a sharp fall in test accuracy (91.36 $\rightarrow$ 28.96), showing the importance of class ordering. 

\vspace{1.5mm}
\noindent \textbf{Impact of CBM Training Mode:} There are several ways of learning concept-based models: \textit{independent, sequential} and \textit{joint} optimization. All the results in the paper are over \textit{joint} models, following standard practice. To study the impact of this, we train FoCA CBMs using the other training schemes on the ImageNet100 dataset. Interestingly, we observe significant performance differences across training schemes: \textit{Joint} - 91.88 {\tiny $\pm$ 0.35}, \textit{Sequential} - 86.69 {\tiny $\pm$ 0.04}, \textit{Independent} - 77.54 {\tiny $\pm$ 0.35}. We attribute this to two factors: (1) Broken iterative refinement: training attribute layers without class feedback breaks the coordination our cascading mechanism relies on; (2) Compounding error: errors at earlier layers cascade through the hierarchy. This implies that \textit{joint} optimization is essential, and high attribute accuracy in isolation is insufficient.

\noindent \textbf{Choice of $\alpha$ and $\beta$:}  We perform ablation studies on the two hyperparameters $\alpha$ and $\beta$. We fix one of the values of these hyperparameters from the best models and vary the other one. Both are varied among [0.01, 0.1, 1]. The results are reported in Tables \ref{tab:alpha_foca} and \ref{tab:beta_foca}. A lower value is preferred since the first two terms of our loss function in Eqn 4 are summed up over multiple layers, thus scaling the loss values. 
However, a more complete picture is obtained by also examining the other aspect we are concerned about, namely the goodness of the structure of the embeddings, which we study over the ImageNet100 dataset. Since our overall loss is a multi-objective optimization problem, there is naturally a trade off between accuracy and structure. As we can see from the below tables, giving more weight to the class grouping term leads to more coherent cluster groups (as indicated by the lower CI and DBI values) but at the cost of accuracy. Similarly, increasing the weight of the attribute loss leads to higher attribute accuracy, but degrades accuracy and clustering quality. This highlights the critical role that the auxiliary losses play in the learning. Crucially, setting either $\alpha=0$ or $\beta=0$ leads to dramatic accuracy drops (30.07 and 42.91 respectively vs. 91.92 at $\alpha$=$\beta$=0.01); CI and DBI are also poor due to the lack of proper supervision, confirming both loss terms are essential.



\begin{table}[t]
\centering
\begin{minipage}{0.48\textwidth}
    \centering
    \caption{Effect of $\alpha$ on FoCA CBM on accuracy.}
    \label{tab:alpha_foca}
    \begin{tabular}{lcc}
        \toprule
        Dataset & Alpha & FoCA CBM \\
        \midrule
        AWA2 & 0.01 & 92.36 \\
             & 0.1  & 92.13 \\
             & 1    & 91.94 \\
        \midrule
        CIFAR100 & 0.01 & 79.36 \\
                 & 0.1  & 75.84 \\
                 & 1    & 73.50 \\
        \midrule
        Imagenet100 & 0.01 & 91.92 \\
                    & 0.1  & 87.73 \\
                    & 1    & 87.56 \\
        \bottomrule
    \end{tabular}
\end{minipage}
\hfill
\begin{minipage}{0.48\textwidth}
    \centering
    \caption{Effect of $\beta$ on FoCA CBM on accuracy.}
    \label{tab:beta_foca}
    \begin{tabular}{lcc}
        \toprule
        Dataset & Beta & FoCA CBM \\
        \midrule
        AWA2 & 0.01 & 92.36 \\
             & 0.1  & 91.24 \\
             & 1    & 90.02 \\
        \midrule
        CIFAR100 & 0.01 & 79.36 \\
                 & 0.1  & 71.85 \\
                 & 1    & 65.09 \\
        \midrule
        Imagenet100 & 0.01 & 91.92 \\
                    & 0.1  & 86.94 \\
                    & 1    & 85.22 \\
        \bottomrule
    \end{tabular}
\end{minipage}
\end{table}

\begin{table}[H]
\centering
\scriptsize
\begin{minipage}{0.32\textwidth}
    \centering
    \begin{tabular}{lccc}
        \toprule
        $\beta$ & Acc & CI & DBI \\
        \midrule
        0.01 & \textbf{91.92} & 0.573 & 1.862 \\
        0.1  & 86.94 & 0.566 & 1.816 \\
        1    & 85.22 & \textbf{0.549} & \textbf{1.712} \\
        \bottomrule
    \end{tabular}
\end{minipage}
\hfill
\begin{minipage}{0.32\textwidth}
    \centering
    \begin{tabular}{lcccc}
        \toprule
        $\alpha$ & Acc & CI & DBI & Attr Acc \\
        \midrule
        0.01 & \textbf{91.92} & \textbf{0.573} & \textbf{1.862} & 88.48 \\
        0.1  & 87.73 & 0.6218 & 2.006 & 92.55 \\
        1    & 88.6 & 0.656 & 2.024 & \textbf{95.21} \\
        \bottomrule
    \end{tabular}
\end{minipage}
\hfill
\begin{minipage}{0.32\textwidth}
    \centering
    \begin{tabular}{lccc}
        \toprule
        $(\alpha,\beta)$ & Acc & CI & DBI \\
        \midrule
        $(0,0.01)$   & 30.07 & 0.675 & 2.257 \\
        $(0.01,0)$   & 42.91 & 0.641 & 2.075 \\
        $(0,0)$      & 29.24 & 0.6835 & 2.3902 \\
        \bottomrule
    \end{tabular}
\end{minipage}

\end{table}

Overall, these results suggest that the auxiliary loss terms are most effective when used as light regularizers, striking a balance between accuracy and semantic structure, which is consistent with standard multi-objective learning setups.

\begin{figure}[t]
    \centering
    \includegraphics[width=0.5\linewidth]{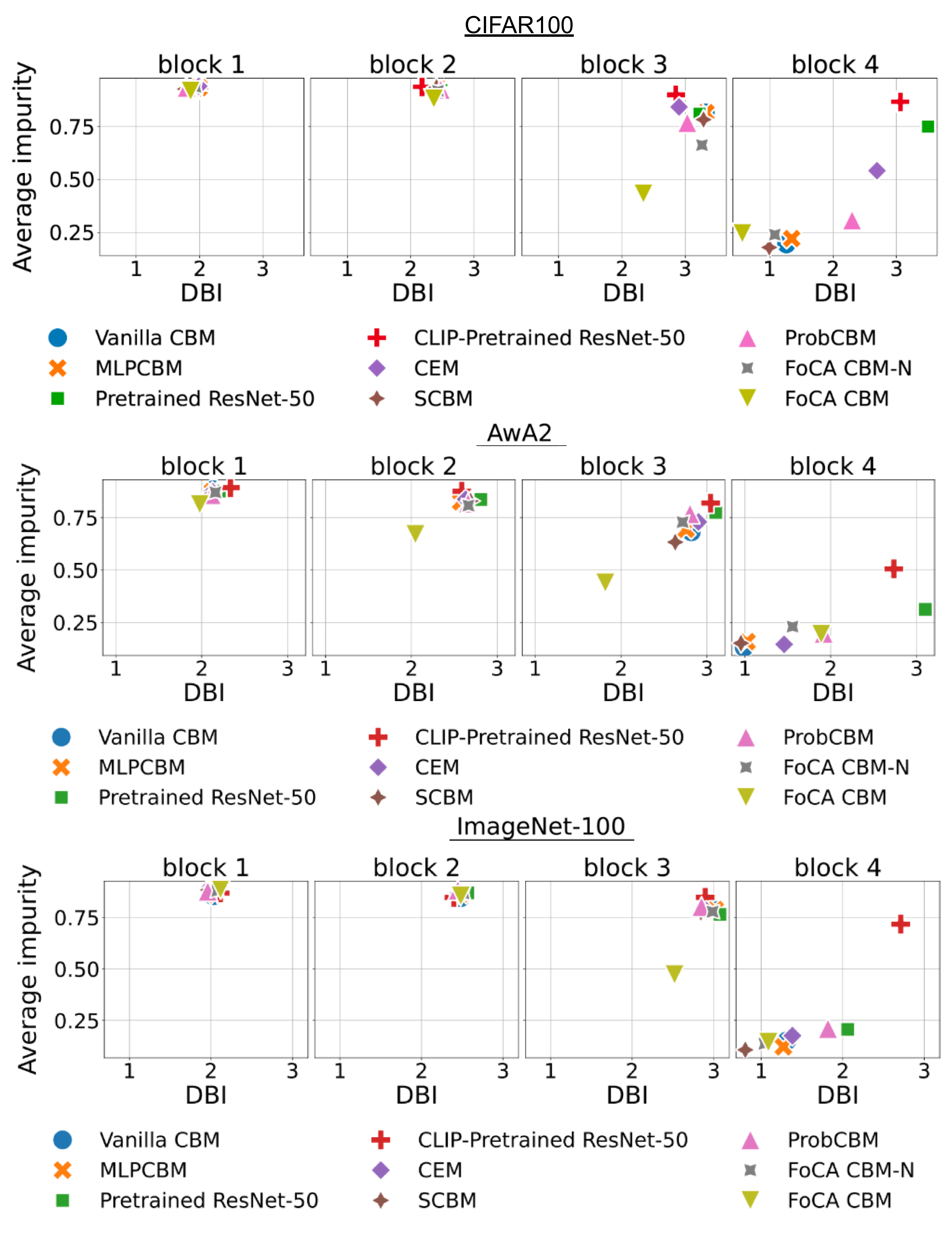}
    \caption{CI vs DBI plot per block on CIFAR100, AwA2 and ImageNet100 datasets. Each marker indicates a model trained on the respective dataset. Markers should ideally move towards the origin in higher blocks. We attribute the superiority of CBMs and MLPCBMs on AwA2 to the simplicity of the dataset.}
    \label{fig:ci_vs_dbi_appendix}
\end{figure}   

\noindent \textbf{More Cluster-based Analysis:} We provide more comprehensive results on our cluster-based analysis here. Going beyond the reported average \textit{CI} and \textit{DBI} scores in Table \ref{tab:main_table}, we examine herein how these metrics evolve across blocks of a model. Ideally, both scores should decrease as we move deeper into a network, indicating lower cluster impurity and more compact, well-separated representations. In other words, we would want embeddings to move closer to the origin of the CI-DBI space across blocks. As is evident from \cref{fig:ci_vs_dbi_appendix} for CIFAR-100, the FoCA CBM models move closest to the origin. In contrast, some baselines show little change across layers, suggesting weaker representation refinement.
The \textit{CI-DBI} plots for the other datasets are also provided. We also use line plots to visualize cluster impurity reduction alone through blocks (Fig \ref{fig:ci_vs_dbi_line}) and see that our models gradually reduce in cluster impurity, unlike the other models where there is either a sharp drop in the last block or no drop at all. This indicates our models' ability to learn more meaningful embeddings.
Finally, we provide more comprehensive t-SNE plots comparing the cluster formation in all blocks of the respective backbone networks in Fig \ref{fig:tsne_inet100}, \ref{fig:tsne_awa2}, \ref{fig:tsne_cifar100}. The set of classes considered for AwA2 were \textit{dalmatian, german shepherd, horse}, for ImageNet100 were \textit{agama, banded-gecko, whiptail} and for CIFAR100 were \textit{beetle, butterfly, cockroach}. 

\noindent \textbf{More Results on ViT-based backbones:} We obtained results on more baselines with ViT backbones on CIFAR100 and see that our method continues to outperform baselines.
The nature of the representations of ViTs is less explicitly understood than CNNs. One could view our method as providing an explicit inductive bias for hierarchical representations on such architectures. Recent works \cite{park2025geometry} show that hierarchical structure can emerge in the representation space of transformers. Our results provide empirical evidence that FoCA CBM's inductive bias successfully uncovers this structure in ViTs, as reflected by consistent gains across all three metrics.

\begin{table}
    \caption{Comparison of baselines with ViT-backbones. FoCA ViT outperforms over all metrics.}
\label{tab:appendix_vit_cbm_results}
    \centering
    \begin{tabular}{lccc}
        \toprule
        \textbf{ViT-Based Model} & \textbf{Acc} & \textbf{CI} & \textbf{DBI} \\
        \midrule
        ViT CBM      & 84.49 $\pm$ 0.55 & 0.774 $\pm$ 0.016 & 2.237 $\pm$ 0.006 \\
        LFCBM        & 72.40 $\pm$ 0.27 & 0.873 $\pm$ 0.000 & 2.797 $\pm$ 0.000 \\
        Posthoc CBM  & 71.29 $\pm$ 0.63 & 0.873 $\pm$ 0.000 & 2.797 $\pm$ 0.000 \\
        LaBo         & 76.12 $\pm$ 1.04 & 0.873 $\pm$ 0.000 & 2.797 $\pm$ 0.000 \\
        CFCBM        & 74.11 $\pm$ 0.44 & 0.873 $\pm$ 0.000 & 2.797 $\pm$ 0.000 \\
        FoCA ViT     & \textbf{86.65 $\pm$ 0.30} & \textbf{0.755 $\pm$ 0.004} & \textbf{1.983 $\pm$ 0.021} \\
        \bottomrule
    \end{tabular}
    
\end{table}

\section{Dataset Details}
\label{appendix_sec:dataset_details}
\begin{itemize}
    \item \textbf{AwA2: } The Animals with Attributes (AwA2) dataset \cite{awa2} is commonly used for zero-shot learning (ZSL) and attribute-based classification. It consists of 37322 images (26125 training, 11197 testing) of 50 animal classes, annotated with 85 numeric attribute values for each class and is class-level expert annotated. Each class in the dataset has on average $31 \pm 4$  active attributes, with 22 being the minimum number of attributes active for a particular class and 39 being maximum. 

    \item \textbf{CIFAR100: }
    CIFAR100 is a well-known subset of the Tiny Images dataset \cite{Krizhevsky2009LearningML} comprising 100 classes. Attributes associated with each of these classes are generated using GPT-3 \cite{oikarinen2023labelfree}, where an initial concept set is generated through queries like ``List the most important features for recognizing something as a \{class\}''. This is followed by a sequence of filtering steps, to improve the quality of the concept set. This involves the $k$-means clustering (with $k=700$) of similar attributes together based on their \texttt{all-MiniLM-L6-v2} embeddings and choosing the closest attributes to each cluster centroid as the cluster representatives. The dataset finally consists of 60000 images (50000 for training and 10000 for testing), 700 attributes and 100 classes. Each class in the dataset has on average $16 \pm 3$  active attributes, with 8 being the minimum number of attributes active for a particular class and 24 being maximum.
    
    \item \textbf{ImageNet100: } ImageNet100 is a well-known subset of the ImageNet-1k dataset \cite{ILSVRC15}. We randomly select 100 classes from the set of 1k classes. We follow the same process we followed for the CIFAR100 dataset here as well and acquire attributes from GPT-3. 
    The dataset finally consists of 134973 images (129973 training, 5000 testing) of 100 distinct classes and 700 LLM-generated unique attributes. Each class in the dataset has on average $18 \pm 3$ active attributes, with 9 being the minimum number of attributes active for a certain class and 25 being maximum.

\end{itemize}
Some example classes and their corresponding attributes are provided in Tab \ref{tab:appendix-datasetsexamples}.

\section{Lattice Details}
\label{appendix_sec:lattice_details}
Our formal concept lattices are constructed using the \texttt{concepts} \footnote{https://pypi.org/project/concepts/} Python module. This module employs the Fast Concept Analysis algorithm \cite{fastconceptanalysis} for generating the lattices (\textsc{ConstructLattice} in Algorithm \ref{alg:foca-training}). 
The hierarchy level of each formal concept is computed by first performing a topological sort on the lattice (which is always a directed acyclic graph), and then iteratively updating the level of the upper neighbors of each formal concept traversed in topological order (described in Algorithm \ref{alg:level}).

\begin{table}
\caption{Details about the formal concept lattice obtained for each dataset.}
\label{tab:lattice-details}
\centering
\begin{tabular}{lrrrrrrrr}
\toprule
\textbf{Dataset} & $|G|$ & $|M|$ & \textbf{Fill ratio} & $|\mathcal{L}|$ & $L$ & \textbf{Worst case} $|\mathcal{L}|$ & \textbf{Time} ($\mathrm{s}$) & \textbf{Space} ($\mathrm{MB}$) \\
\midrule
AwA2 & 50 & 85 & 0.368 & 64315 & 26 & \num{1.13e15} & 37.37 & 63.62 \\
CIFAR100 & 100 & 700 & 0.023 & 915 & 10 & \num{1.27e30}  & 1.97 & 1.00 \\
ImageNet-100 & 100 & 700 & 0.026 & 1593 & 10 & \num{1.27e30} & 4.56 & 1.68 \\
\bottomrule
\end{tabular}
\end{table}

\begin{itemize}
    \item The \textbf{AwA2} lattice consists of 50 classes, 85 attributes, and 64315 total formal concepts across 26 hierarchy levels with 1, 50, 743, 3038, 5755, 7440, 7876, 7472, 6680, 5738, 4800, 3912, 3083, 2310, 1693, 1221, 873, 613, 409, 262, 165, 98, 52, 23, 7, 1 formal concepts per level. Computing the lattice took $37.37$ seconds on average, and it occupies \qty{63.62}{\mega\byte} of space.
    
    \item The \textbf{CIFAR100} lattice consists of 100 classes, 700 attributes and 915 total formal concepts across 10 hierarchy levels with 1, 100, 345, 211, 131, 75, 33, 14, 4, 1 formal concepts per level. Computing the lattice took $1.97$ seconds on average, and it occupies \qty{1.00}{\mega\byte} of space.
    
    \item The \textbf{ImageNet-100} lattice consists of 100 classes, 700 attributes, and 1593 total formal concepts across 10 hierarchy levels. The number of formal concepts in the 10 levels is 1, 100, 592, 415, 250, 129, 65, 30, 10, 1, respectively, going from the infimum to the supremum. Computing the lattice took $4.56$ seconds on average, and it occupies \qty{1.68}{\mega\byte} of space. 
\end{itemize}

\begin{algorithm}[t]
\begin{small}
\caption{\textsc{Compute Hierarchy Levels$(\mathcal{L})$:}}\label{alg:level}
     \begin{algorithmic}[1]
     \Require Formal concept lattice $\mathcal{L}$.
     \State $\mathcal{L}_s \gets \mathrm{TopologicalSort(\mathcal{L})}$ \Comment{Infimum at the first index}
     \State $\mathrm{level}[\mathrm{fc}] \gets 0 \quad \forall~ \mathrm{fc} \in \mathcal{L}_s$
     \For {$u \in \mathcal{L}_s \setminus \{\mathcal{L}_s\text{.infimum}\}$} 
     \For {$v$ in $u$.upper\_neighbors} 
        \State $\mathrm{level}[v] = \max\{\mathrm{level}[v], \mathrm{level}[u]+1\}$
     \EndFor
     \EndFor
     \State \textbf{return} $\mathrm{level}$
    \end{algorithmic}
\end{small}
\end{algorithm}

\begin{figure*}
    \centering
\includegraphics[width=\linewidth]{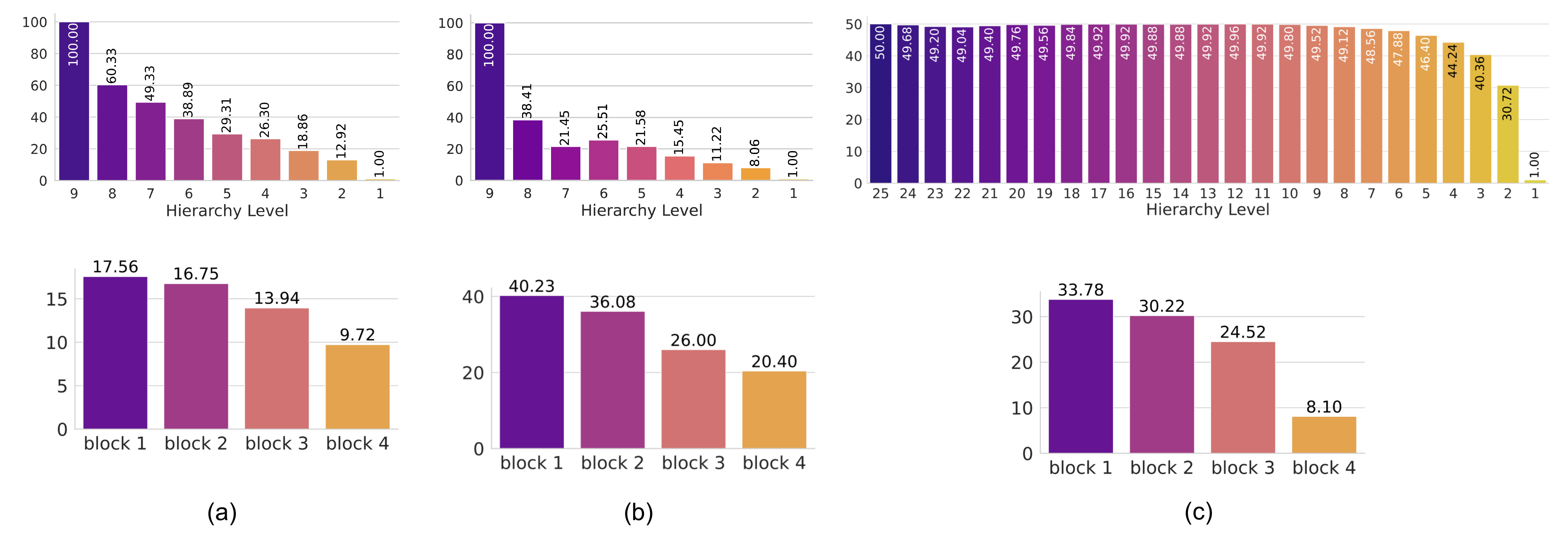}
    \caption{The average number of classes active per lattice level (top) and per block of a ResNet (bottom) for the ImageNet-100 (a), CIFAR100 (b) and AwA2 (c) datasets.}
    \label{fig:fc_distribution}
\end{figure*}

\section{Implementation Details}
\label{appendix_sec:implementation_details}
The algorithm for the whole method is provided in Algorithm \ref{alg:foca-training}.

\vspace{1.5mm}
\noindent \textbf{Evaluation Metric Details:}
We provide additional details herein on the clustering-based metrics (\textit{CI, DBI}) we report in our results in the main paper (Table \ref{tab:main_table}). At each block, we get the set of embeddings on the samples for all $n$ classes and cluster them (with $k = n$). For each cluster in a block, we compute the \texttt{gini-index} and \texttt{davies\_bouldin\_score}. Averaging this value over all the clusters in a block, gives the average impurity and average cluster compactness of that block. We then take the harmonic mean of this number across all blocks to get the numbers representing the average \textit{CI} and \textit{DBI} of the model. These are described in Algorithm \ref{alg:cluster_impurity} and \ref{alg:dbi}.

\vspace{1.5mm}
\noindent \textbf{Model Details:}
All models were run on a single NVIDIA GeForce RTX 3090. We use a ResNet18-based backbone for all AwA2 models and a ResNet50-based backbone for all CIFAR100 and ImageNet100 models. On our models, we place semantic layers at the end of blocks and hence have 4 backbone position choices corresponding to the 4 blocks in a ResNet. 
All the results reported in the main table are models with 2 semantic layers. 
\begin{algorithm}[t]
\begin{small}
\caption{\textsc{Compute Cluster Impurity}($f$, $\mathcal{D}$):}
\label{alg:cluster_impurity}
     \begin{algorithmic}[1]
     \Require Model $f$, Dataset $\mathcal{D}$.
     \State \textbf{Initialize} ci = 0\;
     \For {b in $f$.blocks} 
     \For {j in $\mathcal{D}$} 
     \State $E_{b_i} = E_{b_i} \cup \{h_i^j\}$
     \Comment{Set of embeddings at block $b_i$}
     \EndFor
     \State clusters = \texttt{k-means}($E_{b_i}$, $n$)
     \Comment{Cluster $E_{b_i}$ with $n$ centers}
     \For {c in clusters}
     \State ci += \texttt{gini-index}(c)
     \EndFor
     \State ci /= $n$
     \EndFor
     \State ci /= \#blocks
     \State \textbf{return} ci
    \end{algorithmic}
\end{small}
\end{algorithm}

\begin{algorithm}[t]
\begin{small}
\caption{\textsc{Compute DBI}($f$, $\mathcal{D}$):}
\label{alg:dbi}
     \begin{algorithmic}
     \Require Model $f$, Dataset $\mathcal{D}$.
     \State \textbf{Initialize} dbi = 0 
     \For {b in $f$.blocks} 
     \For {j in $\mathcal{D}$} 
     \State $E_{b_i} = E_{b_i} \cup \{h_i^j\}$
     \Comment{Set of embeddings at block $b_i$}
     \EndFor
     \State clusters = \texttt{k-means}($E_{b_i}$, $n$)
     \Comment{Cluster $E_{b_i}$ with $n$ centers}
     \State dbi += \texttt{davies\_bouldin\_score}($E_{b_i}$, clusters.labels)
     \EndFor
     \State dbi /= \#blocks
     \State \textbf{return} dbi
    \end{algorithmic}
\end{small}
\end{algorithm}

\begin{algorithm}[H]
\begin{small}
\caption{\textsc{ClassClusterDensityModel}($f$, $\mathcal{D}$):}
\label{alg:select_positions}
     \begin{algorithmic}
     \Require Model $f$, Dataset $\mathcal{D}$.
     \State Initialize avg\_class\_per\_block $\gets \mathbf{0}$
     \State $n \gets$ number of classes in $\mathcal{D}$
     \For {b in $f$.blocks} 
         \For {j in $\mathcal{D}$} 
             \State $E_{b_i} = E_{b_i} \cup \{h_i^j\}$
             \Comment{Set of embeddings at block $b_i$}
         \EndFor
         \State clusters = \texttt{k-means}($E_{b_i}$, $n$) \Comment{Cluster $E_{b_i}$ with $n$ centers}
         \For {c in clusters}
            \State avg\_class\_per\_block[b] += \texttt{UniqueClasses}(c) \Comment{gets number of unique classes, associated with the embeddings, in a cluster c}
         \EndFor
         \State avg\_class\_per\_block[b] /= $n$
     \EndFor
     \State \textbf{return} avg\_class\_per\_block
    \end{algorithmic}
\end{small}
\end{algorithm}

\begin{algorithm}[H]
\begin{small}
\caption{\textsc{ClassClusterDensityLattice}($\mathcal{L}$):}
\label{alg:select_levels}
     \begin{algorithmic}
        \Require Lattice $\mathcal{L}$
        \State Initialize avg\_class\_per\_level $\gets \mathbf{0}$
        \For {level in $\mathcal{L}$}
            \State count $\gets 0$
            \For {\texttt{fc} in level}
                \State count += 1
                \State avg\_class\_per\_level[level] += \texttt{len}(\texttt{fc.extent})
            \EndFor
            \State avg\_class\_per\_level[level] /= count
        \EndFor
        \State \textbf{return} avg\_class\_per\_level
    \end{algorithmic}
\end{small}
\end{algorithm}

\begin{algorithm}[t]
\begin{small}
\caption{\textsc{SelectLayersAndLevels}($f$, $\mathcal{D}$, $\mathcal{L}$, $m$)}
\label{alg:select_layers_and_levels}
\begin{algorithmic}[1]
    \Require Model $f$, Dataset $\mathcal{D}$, Lattice $\mathcal{L}$, number of (layer,level) pairs to select $m \ge 1$
    \State block\_density $\gets$ \textsc{ClassClusterDensityModel}($f$, $\mathcal{D}$) \Comment{length = \#blocks $=L_f$}
    \State level\_density $\gets$ \textsc{ClassClusterDensityLattice}($\mathcal{L}$) \Comment{length = \#levels $=L_\mathcal{L}$}
    \State $L_f \gets \text{len}(block\_density)$ \quad $L_\mathcal{L} \gets \text{len}(level\_density)$

    \State selected\_layers $\gets [\,L_f - 1\,]$ \Comment{index of last (top) block}
    \State selected\_levels $\gets [\,L_\mathcal{L} - 1\,]$ \Comment{index of most fine-grained lattice level}
    \State remaining $\gets m - 1$
    \State last\_level\_idx $\gets L_\mathcal{L} - 1$ \Comment{we will search levels strictly above this index}

    \For{layer\_idx in \texttt{range}($L_f - 2, -1, -1$)} \Comment{iterate remaining model blocks bottom→top}
        \If{remaining == 0} \State \textbf{break} \EndIf
        \State layer\_density $\gets$ block\_density[layer\_idx]
        \State chosen\_level $\gets$ \texttt{None}
        \Comment{search lattice levels from (last\_level\_idx - 1) downward to 0}
        \For{level\_idx in \texttt{range}(last\_level\_idx - 1, -1, -1)}
            \If{level\_density[level\_idx] $\ge$ layer\_density}
                \State chosen\_level $\gets$ level\_idx
                \State \textbf{break}
            \EndIf
        \EndFor

        \If{chosen\_level is not \texttt{None}}
            \State selected\_layers.\texttt{append}(layer\_idx)
            \State selected\_levels.\texttt{append}(chosen\_level)
            \State last\_level\_idx $\gets$ chosen\_level
            \State remaining $\gets$ remaining $-$ 1
        \EndIf
    \EndFor

    \State \textbf{return} selected\_layers, selected\_levels
\end{algorithmic}
\end{small}
\end{algorithm}

\vspace{1.5mm}
\noindent \textbf{Computational Complexity:} Our FoCA CBM models were trained in  $\approx$: 40 min for AwA2, 1.75 hrs for CIFAR100 and 3 hrs for ImageNet100 on a single NVIDIA GeForce RTX 3090; these times were about the same timings as Vanilla CBMs and MLPCBMs took. Our lattices are generated offline before training and took $\approx$: 37 secs for AwA2, 2 secs for CIFAR100 and 4.5 secs for ImageNet100, thus making this a near-negligible cost.



\vspace{1.5mm}
\noindent \textbf{Hyperparameter Details:} 
\begin{itemize}
    \item \textit{Vanilla CBMs and MLPCBMs}: All models here were trained for 30 epochs with a batch size of 128, an AdamW optimizer and a onecycle scheduler. The AwA2 and CIFAR100 models were trained using a learning rate of $\num{3e-4}$, while the ImageNet100 models were trained with a learning rate of $\num{1e-4}$.
    \item \textit{Posthoc CBMs}: Here, we use the multimodal CLIP-based backbones for AwA2 and ImageNet100 datasets, with $\lambda=\num{2e-4}$ and a batch size of 512. 
    For CIFAR100, we take the numbers from the respective paper (Posthoc CBMs).
    \item \textit{Label-Free CBMs}: Most hyperparameters used here were the same as the ones reported by the paper on the ImageNet dataset. Additionally, we use a clip-cutoff of 0.26 for ImageNet100 models and 0.25 for AwA2 models. 
    An Adam optimizer with a learning rate of $\num{1e-3}$ was used to learn the concepts. Finally, for learning the classes, we use \texttt{glmsaga} 
    with a regularization strength of $\num{1e-4}$ for ImageNet100 and $\num{3e-4}$ for AwA2. We use a batch size of 512. We report the CIFAR100 results from the paper. 
    \item \textit{Concept Embedding Models}: We train all these models for 100 epochs with a batch size of 256, a learning rate of 0.01 and an SGD optimizer.
    \item \textit{Language in a Bottle (LaBo)}: Since these models work with CLIP-based backbones, we use a CLIP:RN50, along with submodular concept selection, max epochs of 10000, batch size of 512 and learning rate of $\num{1e-5}$ on all datasets.
    \item \textit{Stochastic CBMs}: All models were trained for 70 epochs, with a batch size of 64, learning rate of $\num{3e-5}$ using an Adam optimizer, with the number of monte carlo samples being 100.
    \item \textit{Probabilistic CBMs}: 
    All models were trained with an embedding size of 16, training intervention probability of 0.25, 
    learning rate of $\num{1e-2}$ with an SGD optimizer, a batch size of 256 and max epochs of 100.
    \item \textit{Coarse-to-Fine CBMs}: For all the models here we first compute the CLIP similarities using a CLIP:RN50 model and then train the models. We use a learning rate of $\num{3e-4}$ with an Adam optimizer, batch size of 256 and number of epochs of 30.
    \item \textit{Hybrid CBMs}: Since these models work with CLIP-based backbones, we use a CLIP:RN50 on all datasets, along with submodular concept selection with a dynamic concept ratio of 0.5, max epochs of 5000 and learning rate of $\num{5e-5}$. For the AwA2 and CIFAR100 models, we use a batch size of 512 and for the Imagenet100 models, we use a batch size of 4096.
    \item \textit{FoCA CBMs}: Our hyperparameters here are the same as Vanilla CBM models. 
\end{itemize}

\vspace{1.5mm}
\noindent \textbf{GPT-Hierarchy Details:} 
The prompt used for GPT4 to generate the LLM-Based hierarchy was the following: 
\textit{Given a set of classes and attributes, generate 2 sets: a set of general attributes and a set of specific attribute, with the set of specific attributes being a superset of general attributes. I also need a class group set that would get activated using the general attributes per class. For example: a general set of attributes could be {"animal", "vertebrate", "mammal", "strong"} and a specific set of attributes could be {"animal", "vertebrate", "a long beak", "large wings", "mammal", "rocks", "strong"} and for the class "macaw", a class group could be {"indigo bunting, "macaw", "flamingo"} which could be activated by a subset of the general attributes.} 

This generates a two-level hierarchy which is compared with a two-level FoCA CBM.

\section{Limitations} 

To facilitate future work, we also outline a few limitations of our approach. Firstly, as with all concept-based methods, the quality of our intermediate semantic representations is dependent on the accuracy of the attribute annotations. This dependency is particularly pronounced in LLM-annotated datasets such as CIFAR100 and ImageNet100. Enhancing annotation quality could therefore lead to notable gains in both model performance and interpretability. 
Secondly, once again mirroring a concern that is common across concept-based models, constraining the model to operate through semantic concepts can, in some cases, limit overall performance, a trade-off that is reflected in parts of our results. While we demonstrate consistent improvements in interpretability and related metrics, narrowing this performance gap remains a key area for further investigation. Finally, concept-based models typically treat concepts as static entities. Developing mechanisms that allow concept representations to adapt dynamically to the context of a specific input (e.g., a given input image) could be an interesting approach to improvements in flexibility and model performance.

\begin{figure}
    \centering    \includegraphics[width=\linewidth]{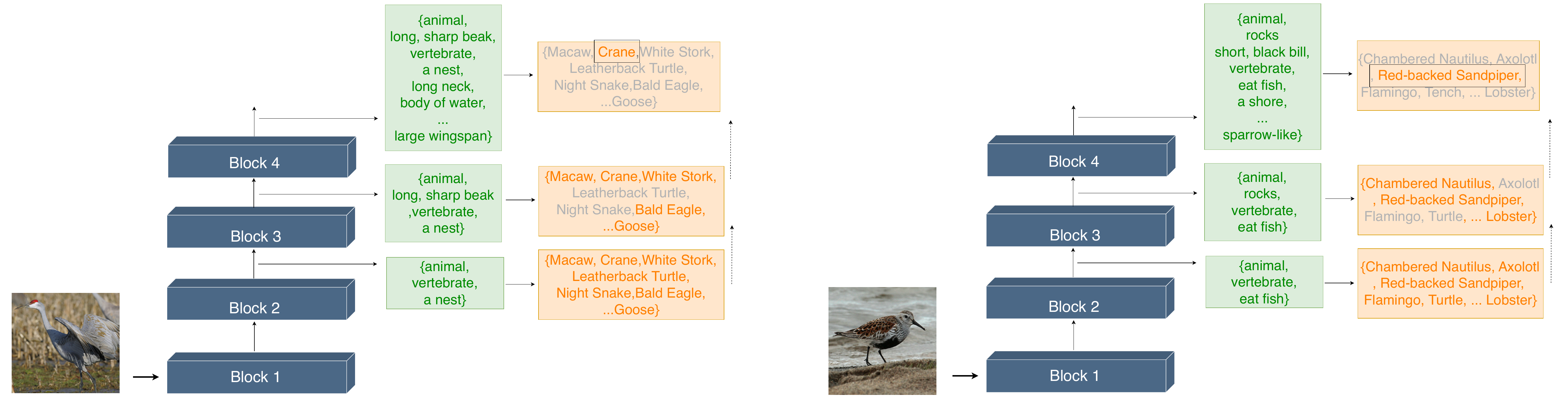}
    \caption{Some qualitative results of the predicted attributes and subsequent class group refinement for some samples from ImageNet100.}
\label{fig:qual_attrandclass}
\end{figure}

\begin{figure}[t]
    \centering
\includegraphics[width=0.5\linewidth]{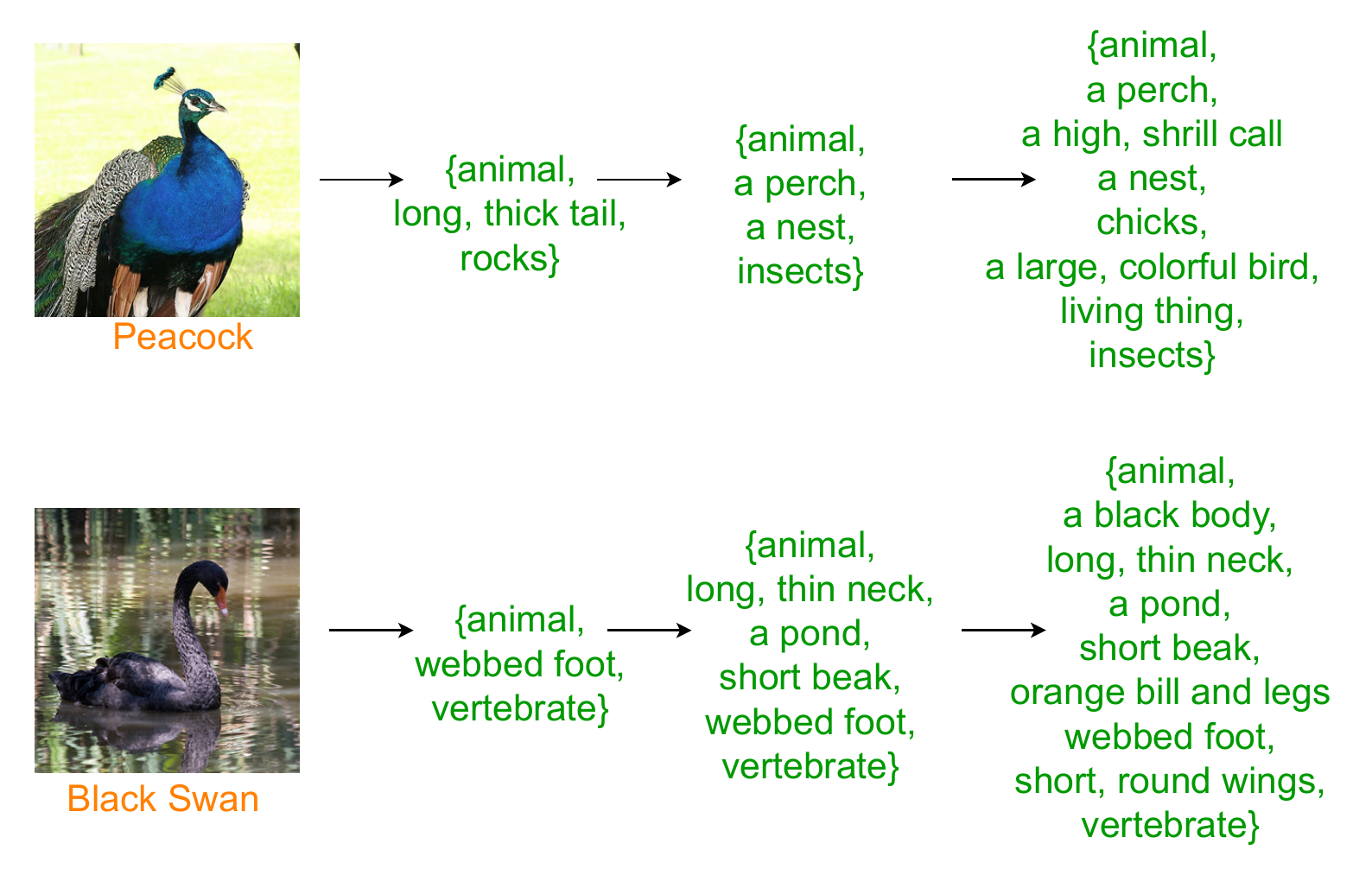}
    \caption{More examples of the attributes learned by a FoCA CBM after blocks 2, 3 and 4 of a ResNet50 for the classes \textit{Peacock} and \textit{Black Swan} in ImageNet100.}
    \label{fig:more_qual}
\end{figure}

\begin{figure}[t]
    \centering
\includegraphics[width=0.35\linewidth]{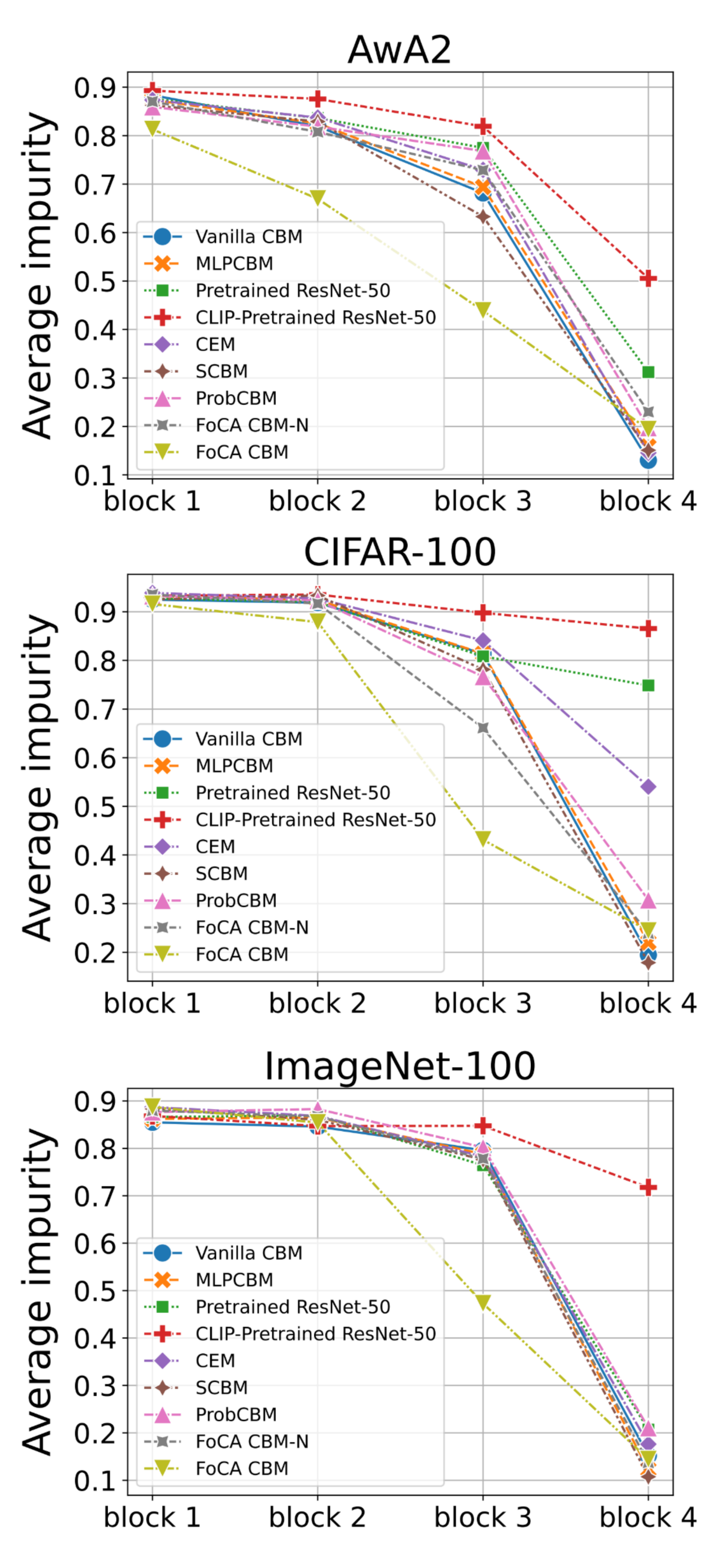}
    \caption{Cluster impurity over all models per block on AwA2, CIFAR100 and ImageNet100 datasets. Our models (inverted light green triangle) display a gradual reduction in impurity. The other models fall sharply at the last block or not at all.}
    \label{fig:ci_vs_dbi_line}
\end{figure}

\begin{table*}[t]
\centering
\caption{Examples of classes and subsets of corresponding attributes from AWA2, CIFAR100 and ImageNet100 datasets.}
\begin{tabular}{|c|c|p{9cm}|}
\hline
\textbf{Dataset} & \textbf{Class} & \textbf{Concepts} \\
\hline
\multirow{3}{*}{\parbox{1.2cm}{\vspace{1.8cm} AwA2}} & \multirow{1}{*}{\parbox{1.25cm}{\vspace{0.5cm}Raccoon}} & 
  black, white, gray, patches, spots, stripes, furry, small, pads, paws, tail, chewteeth, meatteeth, claws, walks, fast, quadrapedal, active, nocturnal, hibernate, agility \vspace{3pt} \\
& \multirow{1}{*}{\parbox{0.75cm}{\vspace{0.5cm}Cow}} & 
  black, white, brown, patches, spots, furry, toughskin, big, bulbous, hooves, tail, chewteeth, horns, smelly, walks, slow, strong, quadrapedal, active, inactive \vspace{3pt} \\
& \multirow{1}{*}{\parbox{1.25cm}{\vspace{0.25cm} Dolphin}} & 
    white, blue, gray, hairless, toughskin, big, lean, flippers, tail, chewteeth, swims, fast, strong, muscle, active, agility, fish, newworld, oldworld, coastal, ocean, water\\
\hline  
\multirow{3}{*}{\parbox{1.5cm}{\vspace{1.15cm} CIFAR100}} 
& \multirow{1}{*}{\parbox{0.75cm}{\vspace{0.25cm} Chair}} & 
  furniture, a person, object, legs to support the seat, an office, a computer, a desk, four legs, a backrest, armrests on either side \vspace{3pt} \\
  & \multirow{1}{*}{\parbox{0.75cm}{\vspace{0.25cm} House}} & 
  windows, building, object, structure, a yard, a chimney, a door, a wall, siding or brick exterior, a garage, roof \vspace{3pt} \\
& \multirow{1}{*}{\parbox{1.25cm}{\vspace{0.25cm} Kangaroo}} & 
  a grassland, short front legs, an animal, a safari, mammal, a long, powerful tail, brown or gray fur, marsupial, long, powerful hind legs, Australia\\
\hline
\multirow{3}{*}{\parbox{1.85cm}{\vspace{1.85cm} ImageNet100}} & \multirow{1}{*}{\parbox{2cm}{\vspace{0.5cm} Electric Ray}} & 
  paddle-like fins, a flat circular shape, fish, a long, thick tail, mammal, animal, water, vertebrate, a large mouth, a large, bulky body \vspace{3pt} \\
& \multirow{1}{*}{\parbox{2cm}{\vspace{0.8cm}  White Stork}} & 
  an animal, a large size, a tree, a field, insects, a sky, a long, curved neck, white feather, a thin neck,  long red legs, long, arms and legs, a medium-sized body, vertebrate, a long orange beak \vspace{3pt} \\
& \multirow{1}{*}{\parbox{2.45cm}{\vspace{0.5cm} Komodo Dragon}} & 
  a large size, a keeper, scales, a tree, a dish, scaly skin, a rock, long, sharp claws, a long, thick tail, a long, forked tongue, an animal, reptile, a fence, vertebrate, a water dish, a zoo, a heat lamp, a large, bulky body, a cage, a lizard \\
\hline  
\end{tabular}
\label{tab:appendix-datasetsexamples}
\end{table*}

\begin{figure*}
    \centering
\includegraphics[width=\linewidth]{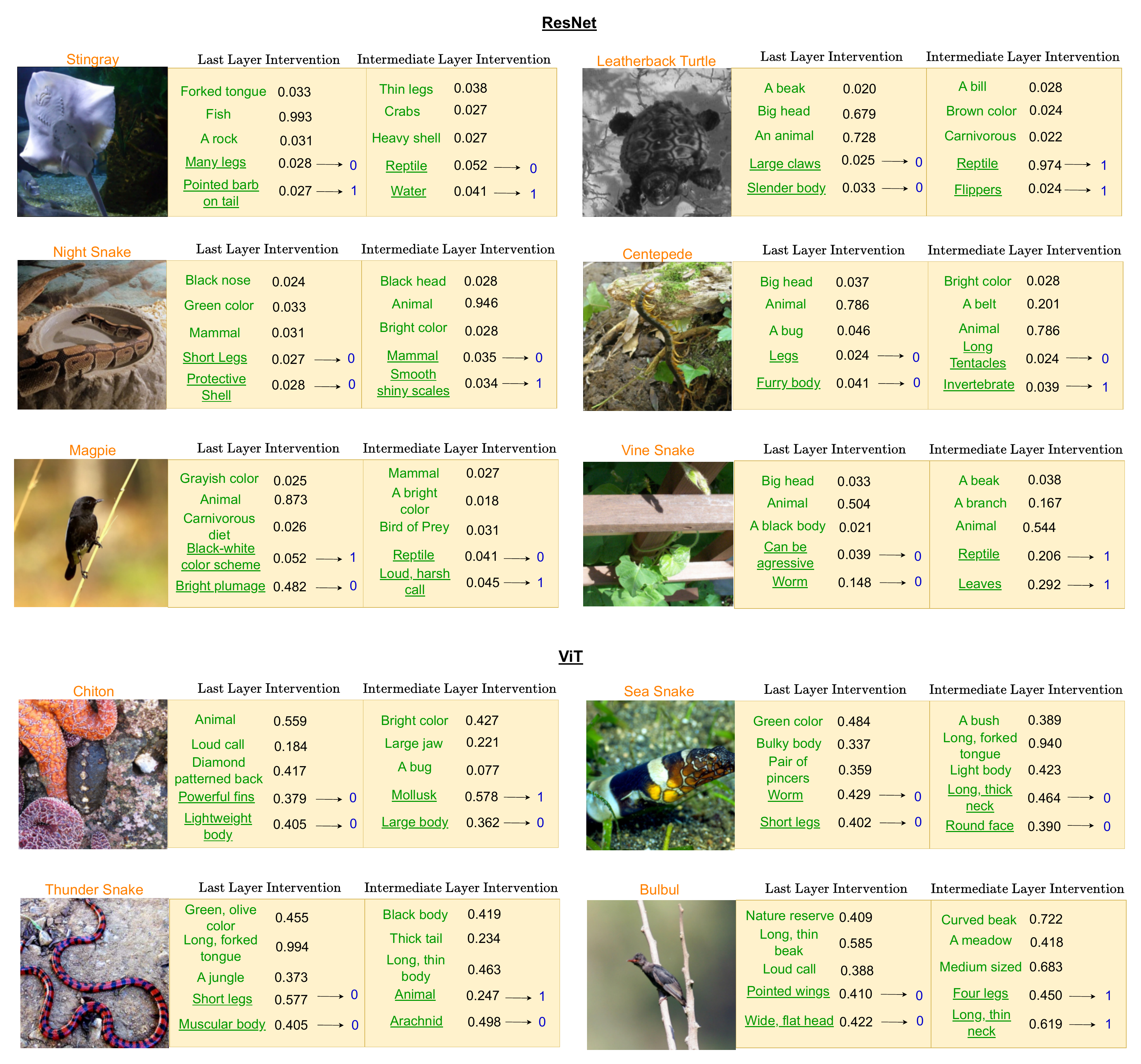}
    \caption{More examples of the kind of attributes intervened on in the last layers versus an intermediate layer (chosen according to the severity of the misclassification). Intermediate layers have more general attributes.}
    \label{fig:more_intervs}
\end{figure*}

\begin{figure*}[h]
    \centering
    \includegraphics[width=0.6\linewidth]{figures/appendix/tsne_inet100.pdf}
    \caption{t-SNE plots of the embeddings obtained from the backbones
of the models mentioned on the left of each plot on ImageNet100. On most models the clusters separate out only at the final block; in FoCA CBM, it happens gradually over blocks.}
    \label{fig:tsne_inet100}
\end{figure*}

\begin{figure*}
    \centering
    \includegraphics[width=0.65\linewidth]{figures/appendix/tsne_awa2.pdf}
    \caption{t-SNE plots of the embeddings obtained from the backbones
of the models mentioned on the left of each plot on AwA2. On most models the clusters separate out only at the final block; in FoCA CBM, it happens gradually over blocks.}
    \label{fig:tsne_awa2}
\end{figure*}

\begin{figure*}
    \centering
    \includegraphics[width=0.6\linewidth]{figures/appendix/tsne_cifar100.pdf}
    \caption{t-SNE plots of the embeddings obtained from the backbones of the models mentioned on the left of each plot on CIFAR100.}
    \label{fig:tsne_cifar100}
\end{figure*}

\begin{figure*}
    \centering
    \includegraphics[width=0.9\linewidth]{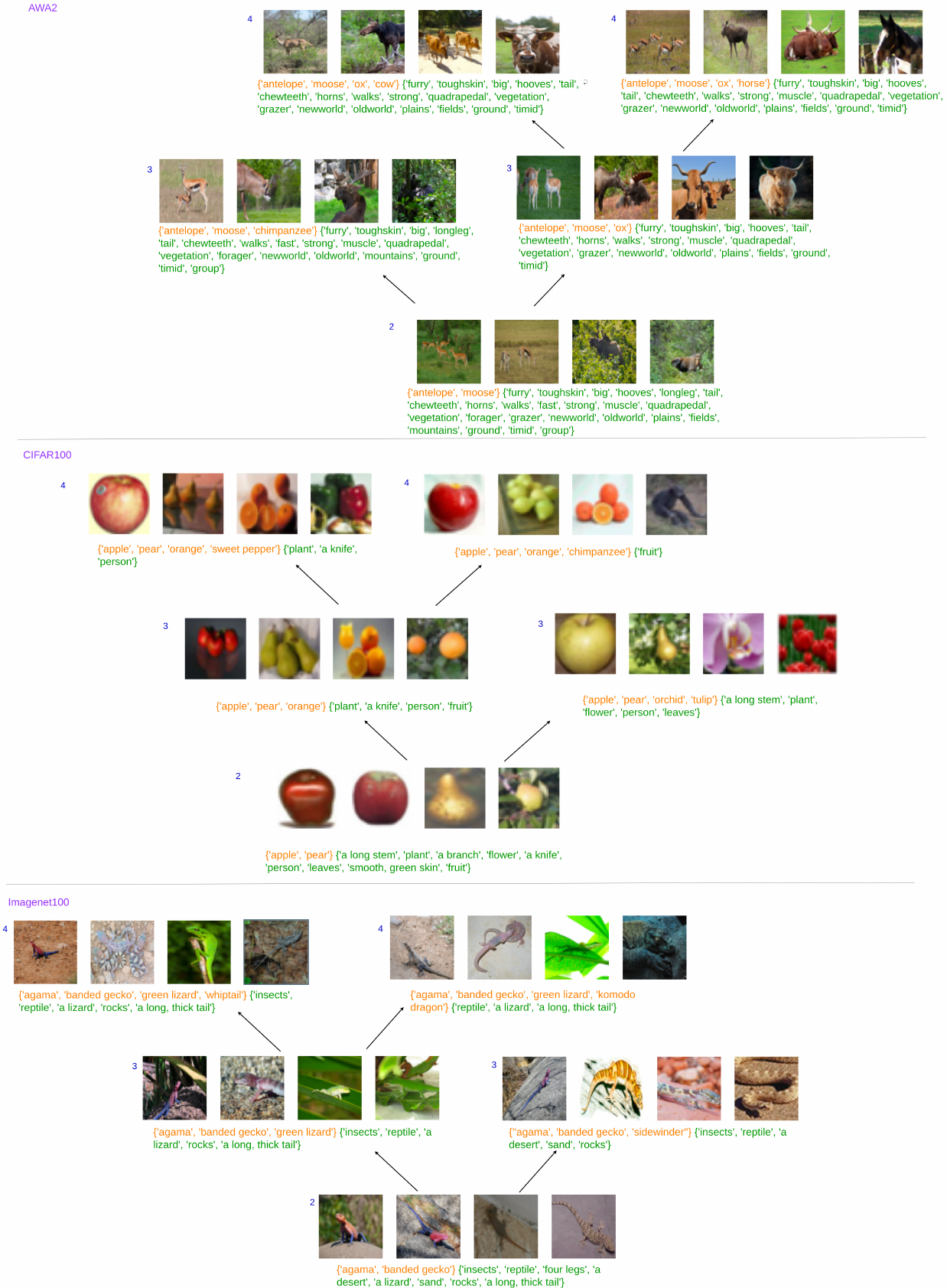}
    \caption{More examples of formal concepts (\textcolor{orange}{extent (classes)} - \textcolor{darkgreen}{intent (attributes)}) from the lattices built for the AwA2, CIFAR100 and ImageNet100 datasets. The shown formal concepts have parent-children relations denoted by the arrows. The level that the formal concept belongs to is provided at the top left of each formal concept.}
    \label{fig:fc_examples_appendix}
\end{figure*}

\end{document}